\DeclareExpandableDocumentCommand{\IfNoValueOrEmptyTF}{mmm}
 {
  \IfNoValueTF{#1}{#2}
   {
    \tl_if_empty:nTF {#1} {#2} {#3}
   }
 }
\newtheorem{theorem}{Theorem}[section]
\newtheorem{proposition}[theorem]{Proposition}
\newtheorem{lemma}[theorem]{Lemma}
\newtheorem{corollary}[theorem]{Corollary}
\newtheorem{fact}[theorem]{Fact}
\newtheorem{definition}[theorem]{Definition}
\newtheorem{remark}{Remark}
\newtheorem{assumption}[theorem]{Assumption}
\newtheorem{problem}[theorem]{Problem}
\newtheorem{claim}[theorem]{Claim}
\newcommand{\cleq}[1][]{\overset{\mathrm{#1}}{\leq}}
\newcommand{\cgeq}[1][]{\overset{\mathrm{#1}}{\geq}}
\newcommand{\ceq}[1][]{\overset{\mathrm{#1}}{=}}
\NewDocumentCommand{\sample}{}{\overset{\text{i.i.d.}}{\sim}}
\NewDocumentCommand{\ScriptedMathSymbol}{md<>d()o}{
    \IfNoValueOrEmptyTF{#2}{
        \IfNoValueOrEmptyTF{#3}{
            \IfNoValueOrEmptyTF{#4}{
                {#1}
            }{
                {#1^{#4}}
            }
        }{
            \IfNoValueOrEmptyTF{#4}{
                {#1^{(#3)}}
            }{
                {#1^{(#3)#4}}
            }
        }
    }{
        \IfNoValueOrEmptyTF{#3}{
            \IfNoValueOrEmptyTF{#4}{
                {#1_{#2}}
            }{
                {#1_{#2}^{#4}}
            }
        }{
            \IfNoValueOrEmptyTF{#4}{
                {#1_{#2}^{(#3)}}
            }{
                {#1_{#2}^{(#3)#4}}
            }
        }
    }
}
\DeclareMathOperator*{\expectation}{\mathbb{E}}
\NewDocumentCommand{\E}{sD<>{}om}{
    \IfBooleanTF{#1}{
        \ScriptedMathSymbol{\expectation}<#2>[#3]\mbr*{#4}
    }{
        \ScriptedMathSymbol{\expectation}<#2>[#3]\mbr{#4}
    }
}
\DeclareMathOperator*{\argmin}{argmin}
\DeclareMathOperator*{\argmax}{argmax}
\NewDocumentCommand{\union}{sD<>{}O{}}{
    \IfBooleanTF{#1}{
        \ScriptedMathSymbol{\cup}<#2>[#3]
    }{
        \ScriptedMathSymbol{\ \cup\ }<#2>[#3]
    }
}
\DeclareMathOperator*{\bunion}{\bigcup}
\NewDocumentCommand{\bigunion}{sD<>{}O{}}{
    \IfBooleanTF{#1}{
        \ScriptedMathSymbol{\bunion}<#2>[#3]
    }{
        \ScriptedMathSymbol{\ \bunion\ }<#2>[#3]
    }
}
\NewDocumentCommand{\isect}{sD<>{}O{}}{
    \IfBooleanTF{#1}{
        \ScriptedMathSymbol{\cap}<#2>[#3]
    }{
        \ScriptedMathSymbol{\ \cap\ }<#2>[#3]
    }
}
\NewDocumentCommand{\bigisect}{sD<>{}O{}}{
    \IfBooleanTF{#1}{
        \ScriptedMathSymbol{\bigcap}<#2>[#3]
    }{
        \ScriptedMathSymbol{\ \bigcap\ }<#2>[#3]
    }
}
\NewDocumentCommand{\por}{sD<>{}O{}}{
    \IfBooleanTF{#1}{
        \ScriptedMathSymbol{\bigvee}<#2>[#3]
    }{
        \ScriptedMathSymbol{\vee}<#2>[#3]
    }
}
\NewDocumentCommand{\pand}{sD<>{}O{}}{
    \IfBooleanTF{#1}{
        \ScriptedMathSymbol{\bigwedge}<#2>[#3]
    }{
        \ScriptedMathSymbol{\wedge}<#2>[#3]
    }
}
\newcommand{\cond}{\ | \ }
\NewDocumentCommand{\ceil}{sm}{
    \IfBooleanTF{#1}{
        \ensuremath \lceil {#2} \rceil
    }{
        \ensuremath \left\lceil {#2} \right\rceil
    }
}
\NewDocumentCommand{\floor}{sm}{
    \IfBooleanTF{#1}{
        \ensuremath \lfloor {#2} \rfloor
    }{
        \ensuremath \left\lfloor {#2} \right\rfloor
    }
}
\NewDocumentCommand{\sbr}{sm}{
    \IfBooleanTF{#1}{
        \ensuremath ( #2 )
    }{
        \ensuremath \left( #2 \right)
    }
}
\NewDocumentCommand{\mbr}{sm}{
    \IfBooleanTF{#1}{
        \ensuremath [ #2 ]
    }{
        \ensuremath \left[ #2 \right]
    }
}
\NewDocumentCommand{\lbr}{sm}{
    \IfBooleanTF{#1}{
        \ensuremath \{ #2 \}
    }{
        \ensuremath \left\{ #2 \right\}
    }
}
\NewDocumentCommand{\prob}{sD<>{}m}{
    \IfBooleanTF{#1}{
        \ScriptedMathSymbol{\Pr}<#2>\lbr*{#3}
    }{
        \ScriptedMathSymbol{\Pr}<#2>\lbr{#3}
    }
}
\NewDocumentCommand{\distr}{sD<>{}D(){}O{}}{
    \IfBooleanTF{#1}{
        \ScriptedMathSymbol{\hat{\mathcal{D}}}<#2>(#3)[#4]
    }{
        \ScriptedMathSymbol{\mathcal{D}}<#2>(#3)[#4]
    }
}
\NewDocumentCommand{\gaussian}{O{}O{0}O{1}}{
    \ScriptedMathSymbol{\mathcal{N}}[#1]\sbr*{#2,#3}
}
\NewDocumentCommand{\conceptclass}{D<>{}D(){}O{}}{
    \ScriptedMathSymbol{\mathcal{C}}<#1>(#2)[#3]
}
\NewDocumentCommand{\concept}{sD<>{}D(){}O{}o}{
    \IfBooleanTF{#1}{
        \IfNoValueOrEmptyTF{#5}{
            \ScriptedMathSymbol{\neg c}<#2>(#3)[#4]
        }{
            \ScriptedMathSymbol{\neg c}<#2>(#3)[#4]\sbr{#5}
        }
    }{
        \IfNoValueOrEmptyTF{#5}{
            \ScriptedMathSymbol{c}<#2>(#3)[#4]
        }{
            \ScriptedMathSymbol{c}<#2>(#3)[#4]\sbr{#5}
        }
    }
}
\NewDocumentCommand{\hypothesisclass}{D<>{}D(){}O{}}{
    \ScriptedMathSymbol{\mathcal{H}}<#1>(#2)[#3]
}
\NewDocumentCommand{\hypothesis}{sD<>{}D(){}o}{
    \IfBooleanTF{#1}{
        \IfNoValueOrEmptyTF{#4}{
            \ScriptedMathSymbol{h}<#2>(#3)[c]
        }{
            \ScriptedMathSymbol{h}<#2>(#3)[c]\sbr*{#4}
        }
    }{
        \IfNoValueOrEmptyTF{#4}{
            \ScriptedMathSymbol{h}<#2>(#3)
        }{
            \ScriptedMathSymbol{h}<#2>(#3)\sbr*{#4}
        }
    }
}
\NewDocumentCommand{\parameterset}{mD<>{}D(){}o}{
    \ScriptedMathSymbol{\mathcal{#1}}<#2>(#3)[#4]
}
\NewDocumentCommand{\literal}{sD<>{}o}{
    \IfBooleanTF{#1}{
        \IfNoValueOrEmptyTF{#3}{
            \ScriptedMathSymbol{\neg l}<#2>
        }{
            \ScriptedMathSymbol{\neg l}<#2>\sbr{#3}
        }
    }{
        \IfNoValueOrEmptyTF{#3}{
            \ScriptedMathSymbol{l}<#2>
        }{
            \ScriptedMathSymbol{l}<#2>\sbr{#3}
        }
    }
}
\NewDocumentCommand{\subgroupclass}{D<>{}D(){}O{}}{
    \ScriptedMathSymbol{\mathcal{G}}<#1>(#2)[#3]
}
\NewDocumentCommand{\subgroup}{sD<>{}D(){}O{}o}{
    \IfBooleanTF{#1}{
        \IfNoValueOrEmptyTF{#5}{
            \ScriptedMathSymbol{\neg g}<#2>(#3)[#4]
        }{
            \ScriptedMathSymbol{\neg g}<#2>(#3)[#4]\sbr{#5}
        }
    }{
        \IfNoValueOrEmptyTF{#5}{
            \ScriptedMathSymbol{g}<#2>(#3)[#4]
        }{
            \ScriptedMathSymbol{g}<#2>(#3)[#4]\sbr{#5}
        }
    }
}
\NewDocumentCommand{\unfairness}{sD<>{}m}{
    \IfBooleanTF{#1}{
        \ScriptedMathSymbol{\bar{u}}<#2>\sbr{#3}
    }{
        \ScriptedMathSymbol{u}<#2>\sbr{#3}
    }
}
\NewDocumentCommand{\weight}{sD<>{}D(){}O{}o}{
    \IfBooleanTF{#1}{
        \IfNoValueOrEmptyTF{#5}{
            \ScriptedMathSymbol{W}<#2>(#3)[#4]
        }{
            \ScriptedMathSymbol{W}<#2>(#3)[#4]\sbr{#5}
        }
    }{
        \IfNoValueOrEmptyTF{#5}{
            \ScriptedMathSymbol{w}<#2>(#3)[#4]
        }{
            \ScriptedMathSymbol{w}<#2>(#3)[#4]\sbr{#5}
        }
    }
}
\NewDocumentCommand{\subsets}{sD<>{}D(){}O{}o}{
    \IfBooleanTF{#1}{
        \IfNoValueOrEmptyTF{#5}{
            \ScriptedMathSymbol{S}<#2>(#3)[{#4}c]
        }{
            \ScriptedMathSymbol{S}<#2>(#3)[{#4}c]\sbr{#5}
        }
    }{
        \IfNoValueOrEmptyTF{#5}{
            \ScriptedMathSymbol{S}<#2>(#3)[#4]
        }{
            \ScriptedMathSymbol{S}<#2>(#3)[#4]\sbr{#5}
        }
    }
}
\NewDocumentCommand{\indexset}{sD<>{}D(){}O{}}{
    \IfBooleanTF{#1}{
        \ScriptedMathSymbol{\mathcal{I}}<#2>(#3)[#4]
    }{
        \ScriptedMathSymbol{I}<#2>(#3)[#4]
    }
}
\NewDocumentCommand{\reals}{D<>{}O{}}{
    \ScriptedMathSymbol{\mathbb{R}}<#1>[#2]
}
\NewDocumentCommand{\naturals}{D<>{}O{}}{
    \ScriptedMathSymbol{\mathbb{N}}<#1>[#2]
}
\NewDocumentCommand{\integer}{D<>{}O{}}{
    \ScriptedMathSymbol{\mathbb{Z}}<#1>[#2]
}
\NewDocumentCommand{\sphere}{D<>{}O{}}{
    \ScriptedMathSymbol{\mathbb{S}}<#1>[#2]
}
\NewDocumentCommand\booldomain{O{}}{
    \ScriptedMathSymbol{\lbr{0 ,1}}[#1]
}
\NewDocumentCommand{\binarydomain}{O{}}{
    \ScriptedMathSymbol{\lbr{-1, +1}}[#1]
}
\NewDocumentCommand{\interval}{O{-1}O{+1}}{
    \ScriptedMathSymbol{\mbr{#1,#2}}
}
\NewDocumentCommand{\algo}{D<>{}D(){}o}{
    \IfNoValueOrEmptyTF{#3}{
        \ScriptedMathSymbol{\mathcal{A}}<#1>(#2)
    }{
        \ScriptedMathSymbol{\mathcal{A}}<#1>(#2)\sbr*{#3}
    }
}
\NewDocumentCommand{\norm}{smD<>{}O{}}{
    \IfBooleanTF{#1}{
        \ScriptedMathSymbol{\left\lVert {#2} \right\rVert}<#3>[#4]
    }{
        \ScriptedMathSymbol{\lVert {#2} \rVert}<#3>[#4]
    }
}
\NewDocumentCommand{\abs}{smO{}}{
    \IfBooleanTF{#1}{
        \ScriptedMathSymbol{| {#2} |}[#3]
    }{
        \ScriptedMathSymbol{\left| {#2} \right|}[#3]
    }
}
\NewDocumentCommand{\innerprod}{smmO{}}{
    \IfBooleanTF{#1}{
        \ScriptedMathSymbol{\langle #2, #3 \rangle}[#4]
    }{
        \ScriptedMathSymbol{\left\langle #2, #3 \right\rangle}[#4]
    }
}
\NewDocumentCommand{\indicator}{D<>{}o}{
    \IfNoValueOrEmptyTF{#2}{
        \ScriptedMathSymbol{\mathds{1}}<#1>
    }{
        \ScriptedMathSymbol{\mathds{1}}<#1>\lbr*{#2}
    }
}
\NewDocumentCommand{\errorregion}{D<>{}D(){}oo}{
    \IfNoValueOrEmptyTF{#4}{
        \ScriptedMathSymbol{\mathcal{E}}<#1>(#2)[#3]
    }{
        \ScriptedMathSymbol{\mathcal{E}}<#1>(#2)[#3]\sbr{#4}
    }
}
\NewDocumentCommand{\identity}{o}{
    \ScriptedMathSymbol{\mathrm{I}}[#1]
}
\newcommand{\opt}{\mathrm{opt}}
\NewDocumentCommand{\loss}{sD<>{}D(){}om}{
    \IfBooleanTF{#1}{
        \ScriptedMathSymbol{\mathcal{L}}<#2>(#3)[#4]\sbr{#5}
    }{
        \ScriptedMathSymbol{\mathcal{L}}<#2>(#3)[#4]\sbr*{#5}
    }
}
\NewDocumentCommand{\err}{sD<>{}D(){}o}{
    \IfBooleanTF{#1}{
        \IfNoValueOrEmptyTF{#4}{
            \ScriptedMathSymbol{\mathrm{err}}<#2>(#3)
        }{
            \ScriptedMathSymbol{\mathrm{err}}<#2>(#3)\sbr{#4}
        }
    }{
        \IfNoValueOrEmptyTF{#4}{
            \ScriptedMathSymbol{\mathrm{err}}<#2>(#3)
        }{
            \ScriptedMathSymbol{\mathrm{err}}<#2>(#3)\sbr*{#4}
        }
    }
}
\NewDocumentCommand{\derivative}{D<>{}o}{
    \ScriptedMathSymbol{\nabla}<#1>[#2]
}
\NewDocumentCommand{\func}{smD<>{}D(){}o}{
    \IfBooleanTF{#1}{
        \IfNoValueOrEmptyTF{#5}{
            \ScriptedMathSymbol{#2}<#3>(#4)
        }{
            \ScriptedMathSymbol{#2}<#3>(#4)\sbr{#5}
        }
    }{
        \IfNoValueOrEmptyTF{#5}{
            \ScriptedMathSymbol{#2}<#3>(#4)
        }{
            \ScriptedMathSymbol{#2}<#3>(#4)\sbr*{#5}
        }
    }
}
\NewDocumentCommand{\bvar}{smD<>{}D(){}o}{
    \IfBooleanTF{#1}{
        \ScriptedMathSymbol{\bar{\mathbf{#2}}}<#3>(#4)[#5]
    }{
        \ScriptedMathSymbol{\mathbf{#2}}<#3>(#4)[#5]
    }
}
\NewDocumentCommand{\bvarseq}{smO{1}D<>{}O{1}D(){}o}{
    \IfBooleanTF{#1}{
        \IfNoValueOrEmptyTF{#4}{
            \ScriptedMathSymbol{\bvar*{#2}}(#5)[#7], \ldots, \ScriptedMathSymbol{\bvar*{#2}}(#6)[#7]
        }{
            \ScriptedMathSymbol{\bvar*{#2}}<#3>(#6)[#7], \ldots, \ScriptedMathSymbol{\bvar*{#2}}<#4>(#6)[#7]
        }  
    }{
        \IfNoValueOrEmptyTF{#4}{
            \ScriptedMathSymbol{\bvar{#2}}(#5)[#7], \ldots, \ScriptedMathSymbol{\bvar{#2}}(#6)[#7]
        }{
            \ScriptedMathSymbol{\bvar{#2}}<#3>(#6)[#7], \ldots, \ScriptedMathSymbol{\bvar{#2}}<#4>(#6)[#7]
        }  
    }   
}
\NewDocumentCommand{\lvar}{smD<>{}D(){}o}{
    \IfBooleanTF{#1}{
        \ScriptedMathSymbol{\bar{#2}}<#3>(#4)[#5]
    }{
        \ScriptedMathSymbol{#2}<#3>(#4)[#5]
    }
}
\NewDocumentCommand{\lvarseq}{smO{1}D<>{}O{1}D(){}o}{
    \IfBooleanTF{#1}{
        \IfNoValueOrEmptyTF{#4}{
            \ScriptedMathSymbol{\lvar*{#2}}(#5)[#7], \ldots, \ScriptedMathSymbol{\lvar*{#2}}(#6)[#7]
        }{
            \ScriptedMathSymbol{\lvar*{#2}}<#3>(#6)[#7], \ldots, \ScriptedMathSymbol{\lvar*{#2}}<#4>(#6)[#7]
        }  
    }{
        \IfNoValueOrEmptyTF{#4}{
            \ScriptedMathSymbol{\lvar{#2}}(#5)[#7], \ldots, \ScriptedMathSymbol{\lvar{#2}}(#6)[#7]
        }{
            \ScriptedMathSymbol{\lvar{#2}}<#3>(#6)[#7], \ldots, \ScriptedMathSymbol{\lvar{#2}}<#4>(#6)[#7]
        }  
    }   
}
\NewDocumentCommand\conjunction{sD<>{}oD(){}}{
    \IfBooleanTF#1
        {\IfNoValueTF{#3}
            {c_{#2}^{*#4}}
            {c_{#2}^{*(#3)#4}}
        }
        {\IfNoValueTF{#3}
            {c_{#2}^{#4}}
            {c_{#2}^{(#3)#4}}
        }
}
\NewDocumentCommand{\bigO}{sm}{
    \IfBooleanTF{#1}{
        \ScriptedMathSymbol{\tilde{O}}\sbr*{#2}
    }{
        \ScriptedMathSymbol{O}\sbr*{#2}
    }
}
\NewDocumentCommand{\poly}{sm}{
    \IfBooleanTF{#1}{
        \mathrm{poly}\sbr*{#2}
    }{
        \mathrm{poly}\sbr{#2}
    }
}
\title{Distribution-Specific Agnostic Conditional Classification With Halfspaces
}
\author{
  Jizhou Huang\\
  Washington Universtiy in St. Louis \\
  St. Louis, MO, USA\\
  \texttt{jizhou.huang@wustl.edu} \\
   \And
  Brendan Juba \\
  Washington Universtiy in St. Louis \\
  St. Louis, MO, USA\\
  \texttt{bjuba@wustl.edu} \\
}
\definecolor{mypink}{HTML}{D691A4}
\definecolor{darkblue}{HTML}{343F65}
\definecolor{myblue}{HTML}{78B9D2}
\definecolor{myorange}{HTML}{FCAF7C}
\NewDocumentCommand{\drawlengthincrease}{O{\bvar{w}(i)}O{\func{g}<\bvar{w}(i)>}O{\bvar{u}(i+1)}O{\bvar{w}(i+1)}D(){1}}{
    \begin{tikzpicture}[scale=#5]

        \coordinate (O) at (0,0);
        \node at (O) [below left] {O};
    
        \draw[->, thick] (O) -- (0,4);
        \node at (0.15,4) [above left] {$#1$};
    
        \draw[->,myorange, thick] (0,4) -- (2.3094,4);
        \node at (1.25,4) [above] {$\color{myorange}\beta\E{-{#2}}$};

        \draw (0,3.7) -- (0.3,3.7) -- (0.3,4);
    
        \draw[->, gray, dashed, thick] (O) -- (2.3094,4);
        \node at (2.3094,4) [above right] {$\color{gray}#3$};
    
        \draw[->, thick] (O) -- (2,3.4641); 
        \node at (2,3.4641) [right] {$#4$};
    
        \draw[dashed] (0,4) arc[start angle=90, end angle=60, radius=4]; 
    
    \end{tikzpicture}
}
\NewDocumentCommand{\drawerrorregion}{sD(){1}}{
    \begin{tikzpicture}[scale=#2]

        \coordinate (O) at (0,0);
        \node at (O) [below right] {O};
    
        \draw[->, thick] (-3,0) -- (3,0);
        \node at (3,0) [right] {$\bvar{e}<1>$};
    
        \draw[dashed, gray, thick, opacity = 0.75] (-0.75,-0.75) -- (2.12,2.12); 
        \node at (2.12,2.12) [above right] {};
    
        \fill[myorange, opacity=0.5] (0,0) -- (3,0) arc[start angle=0, end angle=45, radius=3] -- cycle;
        \fill[myblue, opacity=0.5] (0,0) -- (2.12,2.12) arc[start angle=45, end angle=180, radius=3] -- cycle;
    
        \draw[->, thick] (O) -- (0,3);
        \node at (0.3,3) [above] {$\bvar{e}<2> \sbr*{\bvar{w}}$};
    
        \draw[->, thick] (O) -- (-2.12,2.12); 
        \node at (-2.12,2.12) [above left] {$\bvar{v}$};
    
        \draw (0.3,0) -- (0.3,0.3) -- (0,0.3);
    
        \draw (-0.2121,0.2121) -- (-0.424,0) -- (-0.2121,-0.2121);
    
        \draw[-, thick] (0.6,0) arc (0:45:0.6);
        \node at (0.6,0.35) [right] {$\theta(\bvar{v},\bvar{w})$};

        \IfBooleanTF{#1}{
            \def\const{1.5}
            \draw[myblue, thick, opacity = 0.75] (-{sqrt(8)},1) -- (-{\const*sqrt(8)},\const);
            \node at (-{\const*sqrt(8)},\const) [above left] {$I_2$};
            \draw[myorange, thick, opacity = 0.75] ({sqrt(8)},1) -- ({\const*sqrt(8)},\const);
            \node at ({\const*sqrt(8)},\const) [above right] {$I_1$};
        }{}
    
    \end{tikzpicture}
}
\NewDocumentCommand{\drawintegralspace}{s}{
    \tdplotsetmaincoords{60}{110} 

    \begin{tikzpicture}[tdplot_main_coords]

        \coordinate (O) at (0,0,0);
        \node at (O) [above right] {O};

        \def\tanthirty{0.57735}
        \def\sinthirty{0.5}
        \def\costhirty{0.86603}
        \def\tanfourtyfive{1}
        \def\sinfourtyfive{0.70711}
        \def\cosfourtyfive{0.70711}
        \def\tansixty{0.57735}
        \def\sinsixty{0.86603}
        \def\cossixty{0.5}
        \def\len{1}
        \def\llen{1}

        \draw[tdplot_screen_coords,thin,black!40] (0,0,0) circle (\len);
        \tdplotCsDrawLatCircle[thin,black!40]{\len}{0}
        \tdplotCsDrawLonCircle[thin,black!40]{\len}{90}
        
        \tdplotsetrotatedcoords{-45}{90}{0};
        \fill[tdplot_rotated_coords, myorange, opacity=0.5] (-1,0,0) arc (180:360:1);
        
        \draw[thin, dashed] (0,0,0) -- (-{\len*\sinfourtyfive}, -{\len*\cosfourtyfive}, 0);

        \tdplotsetrotatedcoords{0}{90}{0};
        \fill[tdplot_rotated_coords, myblue, opacity=0.75] (0.995,0,0) arc (0:180:0.995);

        \draw[thin,->] (0,0,0) -- (0,1.25,0) node[anchor=north]{$\bvar{e}<1>$};

        \foreach \a in {1,2,...,44}{
            \tdplotsetrotatedcoords{-{\a+180}}{90}{0};
            \draw[tdplot_rotated_coords, thick, mypink!75,opacity=0.5] (1,0,0) arc (0:180:1);
        }
        \def\num{100}
        \foreach \a in {-\num, ..., \num}{
            \draw[thick, mypink!75,opacity=0.5] (0,-{sqrt(1 -(\a/\num)*(\a/\num))},{\a/\num}) arc (270:225:{sqrt(1 -(\a/\num)*(\a/\num))});
        }

        \IfBooleanTF{#1}{
            \pgfmathsetmacro{\phivec}{30}
            \pgfmathsetmacro{\thetavec}{60}
            
            \tdplotsetcoord{P}{0.6}{\phivec}{\thetavec}
            \draw[->,thick, mypink!150] (O) -- (P) node[above] {$\bvar{x}$};
            \draw[dashed, thick,  mypink!150] (O) -- (Pxy);
            \draw[dashed, thick,  mypink!150] (P) -- (Pxy);
            \tdplotdrawarc{(O)}{0.3}{\thetavec}{90}{anchor=north west}{$\theta$}
            \tdplotsetthetaplanecoords{\thetavec}
            \tdplotdrawarc[tdplot_rotated_coords]{(0,0,0)}{0.2}{0}%
                {\phivec}{anchor=south}{$\ \ \phi$}
        }{}

        \tdplotCsDrawLonCircle[thick,black!60]{\len}{0}
        
        \foreach \a in {1,2,...,44}{
            \tdplotsetrotatedcoords{-\a}{90}{0};
            \draw[tdplot_rotated_coords, thick, mypink,opacity=0.5] (1,0,0) arc (0:180:1);
        }

        \def\num{100}
        \foreach \a in {-\num, ..., \num}{
            \draw[thick, mypink,opacity=0.5] (0,{sqrt(1 -(\a/\num)*(\a/\num))},{\a/\num}) arc (90:45:{sqrt(1 -(\a/\num)*(\a/\num))});
        }
        
        \tdplotsetrotatedcoords{-45}{90}{0};
        \fill[tdplot_rotated_coords, thick, myorange, opacity=0.5] (1,0,0) arc (0:180:1);

        \draw[thin, dashed] (0,0,0) -- ({\len*\sinfourtyfive}, {\len*\cosfourtyfive}, 0);

        \tdplotCsDrawLonCircle[thick,black!60]{\len}{-45}

        \tdplotsetrotatedcoords{0}{90}{0};
        \fill[tdplot_rotated_coords, myblue, opacity=0.75] (0.995,0,0) arc (0:-180:0.995);

        \draw[thin, dashed] (0,0,0) -- (0,-\len,0) node[anchor=north east]{};

        \draw[thick,->] (0,0,0) -- (\len,0,0) node[anchor= north west]{$\bvar{w}$};

        \draw[thick,->] (0,0,0) -- ({\len*\sinfourtyfive},{-\len*\sinfourtyfive},0) node[anchor=north]{$\bvar{w} '$};

        \draw[thin,->] (0,0,0) -- (1.25,0,0) node[anchor=north west]{$\bvar{e}<2>$};

        \draw[thin,->] (0,0,-0) -- (0,0,1.25) node[anchor=west]{$\bvar{e}<3>$};
        \draw[dashed] (0,0,0) -- (0,0,-\len) node[anchor=north east]{};
        
        \tdplotdrawarc[tdplot_main_coords]{(O)}{0.3}{-45}{0}{anchor=north east}{$\Delta\theta$}

        \tdplotsetrotatedcoords{180}{-90}{0};
        \draw[tdplot_rotated_coords, thick, black!60] (1,0,0) arc (0:-112:1);
        \tdplotsetrotatedcoords{-1}{90}{0};
        \draw[tdplot_rotated_coords, thick, mypink,opacity=0.5] (1,0,0) arc (0:180:1);

        \tdplotsetrotatedcoords{-90}{0}{180};
        \draw[tdplot_rotated_coords, thin, black!40] (1,0,0) arc (0:-150:1);
        \tdplotsetrotatedcoords{90}{-90}{0};
        \draw[tdplot_rotated_coords, thin, black!40] (1,0,0) arc (0:-150:1);
    \end{tikzpicture}
}
\NewDocumentCommand{\drawintegralspaceALT}{s}{
    \tdplotsetmaincoords{60}{110} 

    \begin{tikzpicture}[scale=3,tdplot_main_coords]

        \coordinate (O) at (0,0,0);
        \node at (O) [above right] {O};

        \def\tanthirty{0.57735}
        \def\sinthirty{0.5}
        \def\costhirty{0.86603}
        \def\tanfourtyfive{1}
        \def\sinfourtyfive{0.70711}
        \def\cosfourtyfive{0.70711}
        \def\tansixty{0.57735}
        \def\sinsixty{0.86603}
        \def\cossixty{0.5}
        \def\len{1}
        \def\llen{1}

        \tdplotCsDrawLatCircle[thin,black!40]{\len}{0}
        \tdplotCsDrawLonCircle[thin,black!40]{\len}{90}
        
        \tdplotsetrotatedcoords{-45}{90}{0};
        \fill[tdplot_rotated_coords, myorange, opacity=0.5] (-1,0,0) arc (180:360:1);
        
        \draw[thin, dashed] (0,0,0) -- (-{\len*\sinfourtyfive}, -{\len*\cosfourtyfive}, 0);

        \tdplotsetrotatedcoords{0}{90}{0};
        \fill[tdplot_rotated_coords, myblue, opacity=0.75] (0.995,0,0) arc (0:180:0.995);

        \draw[thin,->] (0,0,0) -- (0,1.25,0) node[anchor=north]{$\bvar{e}<1>$};

        \tdplotCsDrawLonCircle[thick,black!60]{\len}{-45}
        \foreach \a in {1,2,...,44}{
            \tdplotsetrotatedcoords{-{\a+180}}{90}{0};
            \draw[tdplot_rotated_coords, thick, mypink!75,opacity=0.5] (1,0,0) arc (0:180:1);
        }
        \def\num{100}
        \foreach \a in {-\num, ..., \num}{
            \draw[thick, mypink!75,opacity=0.5] (0,-{sqrt(1 -(\a/\num)*(\a/\num))},{\a/\num}) arc (270:225:{sqrt(1 -(\a/\num)*(\a/\num))});
        }

        \IfBooleanTF{#1}{
            \pgfmathsetmacro{\phivec}{30}
            \pgfmathsetmacro{\thetavec}{60}
            
            \tdplotsetcoord{P}{0.5}{\phivec}{\thetavec}
            \draw[->,thick, mypink!150] (O) -- (P) node[above] {$\bvar{x}<V>$};
            \draw[dashed, thick,  mypink!150] (O) -- (Pxy);
            \draw[dashed, thick,  mypink!150] (P) -- (Pxy);
            \tdplotdrawarc{(O)}{0.25}{\thetavec}{90}{anchor=north west}{$\theta$}
            \tdplotsetthetaplanecoords{\thetavec}
            \tdplotdrawarc[tdplot_rotated_coords]{(0,0,0)}{0.2}{0}%
                {\phivec}{anchor=south}{$\ \ \phi$}
        }{}

        \tdplotCsDrawLonCircle[thick,black!60]{\len}{0}
        
        \foreach \a in {1,2,...,44}{
            \tdplotsetrotatedcoords{-\a}{90}{0};
            \draw[tdplot_rotated_coords, thick, mypink,opacity=0.5] (1,0,0) arc (0:180:1);
        }

        \def\num{100}
        \foreach \a in {-\num, ..., \num}{
            \draw[thick, mypink,opacity=0.5] (0,{sqrt(1 -(\a/\num)*(\a/\num))},{\a/\num}) arc (90:45:{sqrt(1 -(\a/\num)*(\a/\num))});
        }
        
        \tdplotsetrotatedcoords{-45}{90}{0};
        \fill[tdplot_rotated_coords, thick, myorange, opacity=0.5] (1,0,0) arc (0:180:1);

        \draw[thin, dashed] (0,0,0) -- ({\len*\sinfourtyfive}, {\len*\cosfourtyfive}, 0);

          
        \tdplotsetrotatedcoords{-45}{90}{0};
        \draw[tdplot_rotated_coords, thick, black!60] (1,0,0) arc (0:213:1);
        \tdplotsetrotatedcoords{-224}{90}{0};
        \draw[tdplot_rotated_coords, thick, mypink!75,opacity=0.5] (1,0,0) arc (0:180:1);

        \tdplotsetrotatedcoords{0}{90}{0};
        \fill[tdplot_rotated_coords, myblue, opacity=0.75] (0.995,0,0) arc (0:-180:0.995);
        \draw[tdplot_rotated_coords, thick, black!60] (1,0,0) arc (0:-180:1);

        \draw[thin, dashed] (0,0,0) -- (0,-\len,0) node[anchor=north east]{};

        \draw[thick,->] (0,0,0) -- (\len,0,0) node[anchor= north west]{$\bvar{w}$};

        \draw[thick,->] (0,0,0) -- ({\len*\sinfourtyfive},{-\len*\sinfourtyfive},0) node[anchor=north]{$\bvar{w} '$};

        \draw[thin,->] (0,0,0) -- (1.25,0,0) node[anchor=north west]{$\bvar{e}<2>$};

        \draw[thin,->] (0,0,-0) -- (0,0,1.25) node[anchor=west]{$\bvar{e}<3>$};
        \draw[dashed] (0,0,0) -- (0,0,-\len) node[anchor=north east]{};
        
        \IfBooleanTF{#1}{
            \tdplotdrawarc[tdplot_main_coords]{(O)}{0.3}{-45}{0}{anchor=north east}{$\theta_1$}
        }{
            \tdplotdrawarc[tdplot_main_coords]{(O)}{0.3}{-45}{0}{anchor=north east}{$\Delta\theta$}
        }

        \tdplotsetrotatedcoords{180}{-90}{0};
        \draw[tdplot_rotated_coords, thick, black!60] (1,0,0) arc (0:-112:1);
        \tdplotsetrotatedcoords{-1}{90}{0};
        \draw[tdplot_rotated_coords, thick, mypink,opacity=0.5] (1,0,0) arc (0:180:1);

        \tdplotsetrotatedcoords{-90}{0}{0};
        \draw[tdplot_rotated_coords, thin, black!40] (1,0,0) arc (0:235:1);
        \tdplotsetrotatedcoords{90}{90}{0};
        \draw[tdplot_rotated_coords, thin, black!40] (1,0,0) arc (0:-247:1);
    \end{tikzpicture}
}
\begin{document}
\maketitle
    \begin{abstract}
        We study ``selective'' or ``conditional'' classification problems under an agnostic setting. Classification tasks commonly focus on modeling the relationship between features and categories that captures the vast majority of data. In contrast to common machine learning frameworks, conditional classification intends to model such relationships only on a subset of the data defined by some selection rule. Most work on conditional classification either solves the problem in a realizable setting or does not guarantee the error is bounded compared to an optimal solution. In this work, we consider selective/conditional classification by sparse linear classifiers for subsets defined by halfspaces, and give both positive as well as negative results for Gaussian feature distributions. On the positive side, we present the first PAC-learning algorithm for homogeneous halfspace selectors with error guarantee $\bigO*{\sqrt{\opt}}$, where $\opt$ is the smallest conditional classification error over the given class of classifiers and homogeneous halfspaces. On the negative side, we find that, under cryptographic assumptions, approximating the conditional classification loss within a small additive error is computationally hard even under Gaussian distribution. We prove that approximating conditional classification is at least as hard as approximating agnostic classification in both additive and multiplicative form.
    \end{abstract}
    \section{Introduction}
        Classification is the task of modeling the relationship between some features and membership in some category. Classification tasks are common across various fields, such as spam detection (classifying emails as "spam" or "not spam"), image recognition (identifying objects like "cat" or "dog"), and medical diagnosis (predicting whether a patient has a certain condition or not). Standard classification approaches seek to model the whole data distribution. By contrast, we consider the problems where a \textbf{better classifier} exists on a subset of the data. In particular, we will consider cases in which classifiers are sparse linear functions (or more generally, any small set of functions), and subsets are described by \textbf{selector} functions, given here by homogeneous halfspaces. 
        
        We study the distribution-specific PAC-learnability \citep{kearns1994toward} of the class of classifier-selector pairs in the presence of adversarial label noise. In the literature, this problem is known as ``conditional'' classification, but it is also part of a family of problems that are generally known as ``selective'' classification.
        
        \subsection{Background and Motivation}
            The first ``selective classification'' problem was introduced decades ago \citep{5222035,1054406}. The focus was on finding Bayes classifiers for the case where the data distribution is fully known. The appeal of effective selective classification is clear in applications where partial domain coverage is acceptable, or in scenarios where achieving extremely low risk is essential but unattainable with standard classification methods. Classification tasks in medical diagnosis and bioinformatics are often falling into this category \citep{khan2001classification,hanczar2008classification}.

            \citet{el2010foundations} gave a thorough theoretical analysis for selective classification based on a ``risk-coverage'' model. They proved that, for the optimal classifier and selector, there exists a natural trade-off between the performance of the classifier on the selected subset and the size of the subset.

            Prior work has either considered the ``realizable'' case \citep{el2012active,pmlr-v130-gangrade21a}, where there exists a ``perfect'' classifier-selector pair that does not make any error, or endowed the learner with a rejection mechanism that is based on heuristic rules or confidence scores \citep{geifman2017selective,pmlr-v206-pugnana23a}. For the ``agnostic'' case, where no perfect classifier-selector pair exists, few works had been done on model-based selective learning \citep{wiener2011agnostic,wiener2015agnostic,JMLR:v20:17-147}. More importantly, these works do not simultaneously guarantee computational efficiency together with good performance with respect to the optimal classifier and selector.
            
            We consider a more general formulation of \textbf{agnostic selective classification} under the PAC-learning semantics in Definition \ref{def:agnostic-conditional-classification}. In particular, we do not make any assumptions on the labels while the performance of the learned classifier and selector are guaranteed to be close to the optimal solution. 
            
            \begin{definition}[Agnostic Conditional Classification]\label{def:agnostic-conditional-classification}
                Let $\distr$ be any distribution on $\reals[d]\times\booldomain$, $\conceptclass$ be a finite class of classifiers on $\reals[d]\times\booldomain$, and $\hypothesisclass = \lbr*{S\subseteq\reals[d]\cond \prob<\distr>{S}\in[a,b]}$ for $0\leq a\leq b\leq 1$. Suppose $\min_{S\in\hypothesisclass, c\in\conceptclass}\prob<(\bvar{x}, y)\sim\distr>{y\neq c(\bvar{x})\cond \bvar{x}\in S} = \opt$, for some $C > 1$. A $C$-approximate learning algorithm (or an algorithm with approximation factor $C$), given sample access to $\distr$, outputs an $S'\in\hypothesisclass$ such that $\min_{c\in\conceptclass}\prob<(\bvar{x}, y)\sim\distr>{y\neq c(\bvar{x})\cond \bvar{x}\in S'}\leq C\cdot\opt$ with high probability.
            \end{definition}

            The imposed ``population'' bounds on the subsets $S\in\hypothesisclass$ are critical. On the one hand, the lower bound, $\prob{S}\geq a$ can both prevent trivial optimal solutions such as $S' = \varnothing$ and make the selected subsets statistically meaningful. On the other hand, if the selector chooses a majority of the data, the performance advantage of the optimal solution of selective classification could vanish compared with that of the regular classification model \citep{el2010foundations,pmlr-v89-hainline19a}.

            Consider a halfspace $\hypothesis$, i.e., a subset of $\reals[d]$ such that the membership in $h$ is defined by some linear threshold function. In this work, we wish to solve the problem of agnostic conditional classification with halfspace selectors under standard normal distributions described as follows. 
            \begin{problem}[Distribution-Specific Agnostic Conditional Classification With Halfspaces]\label{prob:distribution-specific-agnostic-conditional-classification-with-halfspaces}
                Let $\distr$ be any distribution on $\reals[d]\times\booldomain$ with standard normal $\bvar{x}$-marginal on $\reals[d]$, $\conceptclass$ be a finite class of classifiers on $\reals[d]\times\booldomain$, and $\hypothesisclass$ be the class of halfspaces on $\reals[d]$ with population size in the range of $[a,b]$ for $0\leq a\leq b\leq 1$. Suppose $\min_{h\in\hypothesisclass, c\in\conceptclass}\prob<(\bvar{x}, y)\sim\distr>{y\neq c(\bvar{x})\cond \bvar{x}\in h} = \opt$, how close to $\opt$ can a polynomial-time learning algorithm achieve on $\hypothesisclass$ with high probability?
            \end{problem}

            An algorithm for Problem~\ref{prob:distribution-specific-agnostic-conditional-classification-with-halfspaces} may be leveraged to perform conditional classification for large or infinite classes $\conceptclass$ by using an algorithm for list learning of classifiers for some richer class \citep{charikar2017learning}, taking $\conceptclass$ in Problem~\ref{prob:distribution-specific-agnostic-conditional-classification-with-halfspaces} to be the list of classifiers produced by the list learning algorithm:

            \begin{definition}[Robust list learning]\label{def:robust-list-learning}
            Let $\distr=\alpha \distr^*+(1-\alpha)\tilde{\distr}$ for an \emph{inlier} distribution $\distr^*$ and \emph{outlier} distribution $\tilde{\distr}$ each supported on $\reals[d]\times\booldomain$, with $\alpha\in (0,1)$. A \emph{robust list learning} algorithm for a class of Boolean classifiers $\conceptclass$, given $\alpha$ and parameters $\epsilon,\delta\in (0,1)$, and sample access to $\distr$ such that for $(\bvar{x},b)$ in the support of $\distr^*$, $b=c^*(\bvar{x})$ for some $c^*\in\conceptclass$, runs in time $\poly{d,\frac{1}{\alpha},\frac{1}{\epsilon},\log\frac{1}{\delta}}$, and with probability $1-\delta$ returns a list of $\ell=\poly{d,\frac{1}{\alpha},\frac{1}{\epsilon},\log\frac{1}{\delta}}$ classifiers $\{h_1,\ldots,h_\ell\}$ such that for some $h_i$ in the list, $\Pr_{\distr^*}[h_i(\bvar{x})=c^*(\bvar{x})]\geq 1-\epsilon$.
            \end{definition}

            As we review (Appendix \ref{sec:robust-learning}), for sparse linear classifiers (with $s=O(1)$ nonzero coefficients), list learning from a sample of size $m=O(\frac{1}{\alpha\epsilon}(s\log d+\log\frac{1}{\delta}))$ is possible in time and list size $O((md)^s)$ \citep{juba2016conditional,mossel-sudan2016}.

        \subsection{Challenges Of Distribution-Specific Conditional Classification}

            Problem \ref{prob:distribution-specific-agnostic-conditional-classification-with-halfspaces} is similar to \textbf{agnostic linear classification}, where we seek to minimize the classification error over the vast majority of data. Agnostic linear classification has been extensively studied over decades, and it is known to be computationally hard in both distribution-free \citep{kearns1994toward} and distribution-specific settings \citep{diakonikolas2023near}. 
            
            Despite the intractability of agnostic learning, numerous distribution-specific approximation schemes have been developed with approximation factor of $\bigO{1/\sqrt{\opt}}$ or even constants \citep{frei2021agnostic,diakonikolas2020non,diakonikolas2022learning,diakonikolas2024efficient,pmlr-v139-shen21a}. Given the similarity between agnostic linear classification and Problem \ref{prob:distribution-specific-agnostic-conditional-classification-with-halfspaces}, it is natural to ask if we can leverage the existing techniques for standard agnostic classification in conditional classification. However, it is not clear how these could lead to a meaningful error guarantee for conditional classification.

            Directly, Definition \ref{def:agnostic-conditional-classification} (correspondingly, Problem \ref{prob:distribution-specific-agnostic-conditional-classification-with-halfspaces}) can be reduced to a ``one-sided'' classification problem, where we seek to minimize the error rate of the classifier on only one class. As the error rate could be extremely unbalanced across the classes, a constant factor approximation scheme for the agnostic linear classification problem may not yield approximation guarantees for the one-sided agnostic classification problem. 

            An analogous difficulty arose in ``fairness auditing'' \citep{kearns2018preventing}. In the problem of fairness auditing, instead of minimizing the classification error, we wish to verify some specific fairness criteria for a subset of the data. \citet{kearns2018preventing} showed that the auditing problem is equivalent to agnostic classification for any simple representation classes (including halfspaces) under distribution-free settings. Despite the similarity between these two problems, as well as the existence of constant factor approximation algorithms for agnostic linear classification under distributional assumptions, recent work by \citet{hsu2024distribution} showed there does not exist any nontrivial multiplicative factor approximation algorithm for auditing halfspace subgroup fairness even under Gaussian distributions. The connection in the distribution-free setting simply does not carry over to Gaussian data.

        \subsection{Our Contribution}
            Let $\opt$ be as defined in Problem \ref{prob:distribution-specific-agnostic-conditional-classification-with-halfspaces} for $\hypothesisclass$ being the class of \textbf{homogeneous} halfspace. Our first contribution is a polynomial-time $\bigO*{1/\sqrt{\opt}}$-approximation algorithm to learn a pair of classifier and selector for Problem \ref{prob:distribution-specific-agnostic-conditional-classification-with-halfspaces} with homogeneous halfspace selectors. This is the first polynomial-time algorithm for agnostic conditional/selective classification with provable approximation guarantee w.r.t.\ the optimal solution.

            \begin{remark}
                Even for homogeneous halfspace selectors, the imbalance of error rates between classes could still exist, as we will show in our hardness result that the difference between the error rates of different classes of the homogeneous halfspace always equals to the amount that the probability of either label deviates from $1/2$; see Lemma \ref{lma:classification-error-decomposition} for details.
            \end{remark}

            Our second contribution is a negative result for Problem \ref{prob:distribution-specific-agnostic-conditional-classification-with-halfspaces}. We show that agnostic conditional classification in Definition \ref{def:agnostic-conditional-classification} is at least as hard as agnostic linear classification under any distribution. With the distribution-specific hardness result of agnostic linear classification \citep{diakonikolas2023near}, we prove that no polynomial-time algorithm can achieve an error guarantee of $\opt + \bigO{1/\log^{1/2 + \alpha} d}$ for any constant $\alpha > 0$ for Problem \ref{prob:distribution-specific-agnostic-conditional-classification-with-halfspaces}. We show more generally that approximating the conditional classification objective is at least as hard as approximating the regular classification objective.

            \textbf{Organization.} In Section \ref{sec:preliminary}, we give some necessary background. We will present our algorithmic results in Section \ref{sec:conditional-classification-with-homogeneous-halfspaces}. The distribution-specific hardness result for conditional classification with general halfspaces is in Section \ref{sec:conditional-classification-with-general-halfspaces-is-hard}. In the last section, we will discuss the limitations of our results and a few possible directions for extensions.

        \subsection{Related Works}
            \textbf{Selective Learning.} Besides the results we have mentioned above, there are many works on selective classification. For the realizable cases, \citet{el2012active} reduced active learning to selective learning, and used this reduction to prove a exponential lower bound on label complexity for learning linear classifiers when using the \emph{CAL algorithm}, which is one of the main strategies for active learning in the realizable setting. \citet{pmlr-v130-gangrade21a} proposed a optimization-based selective learning framework that guarantees to maximize the classifiers' coverage with a specified one-side prediction error rate. They proved that any representation class with finite VC-dimension can be used successfully in their models. For the agnostic cases, \citet{wiener2011agnostic,wiener2015agnostic,JMLR:v20:17-147} presented a selective learning approach to learn a classifier-selector pair that is at least as competitive as the ERM of the non-selective learning task. However, the computation of both the classifier and selector in these methods relies on an agnostic learning oracle, and the selector function is not guaranteed to minimize the conditional classification error down to any approximation factor. \citet{geifman2017selective} proposed a method to design selector functions for any given deep neural network. Their selector is built upon a given heuristic scoring function for data examples and can provably guaranteed to achieve strong performance. Aside from the theoretical results, empirically, \citet{pmlr-v206-pugnana23a} developed an model-agnostic learning algorithm to learn a confidence-based selective classifier that seeks to minimize the AUC-based loss within the selected region and \citet{pmlr-v97-geifman19a} proposed the SelectiveNet architecture that simultaneously learns a pair of classifier and selector in a single neural networks with required coverage.

            \textbf{Conditional Learning.} The problem of conditional learning (including conditional classification) incorporates two sub-problems, obtaining a finite list of classifiers as well as learning a classifier-selector pair out this finite list and some class of selector functions. For the former task, a series of positive results \citep{charikar2017learning,kothari2018robust,calderon2020conditional,bakshi2021list} have been obtained under the ``list-decodable'' setting of Definition~\ref{def:robust-list-learning}. For the latter task, \citet{Juba_2016} introduced the problem of learning abductions, where they propose to learn a subgroup of the data distribution where the entropy of the labels can be minimized. In their work, they showed that subgroups characterized by $k$-DNFs can be efficiently learned in realizable cases without any distributional assumptions. Subsequent improvements were obtained for the agnostic setting \citep{zhang2017improved,10.5555/3504035.3505075}.
            
            \textbf{Learning To Abstain.} \citet{cortes2016learning} considered a different formulation of selective classification. Instead of optimizing the classification error conditioned the selected subgroup, they proposed to minimize the classification error jointly with the selector function while enforcing a cost for ``abstaining''. They designed a few convex surrogate losses to upper bound the joint classification loss in the setting that abstaining has a cost. Later works \citep{pmlr-v237-mao24a,mao2024two,mao2024theoretically} proposed new families of surrogate losses to approximate the classification loss with abstaining and proved various upper bounds classification error of any classifier-selector pair in terms of different surrogate loss measures for two different selective learning strategies.


    \section{Preliminaries}
    \label{sec:preliminary}
        We use lowercase bold font characters to represent real vectors. In addition, subscripts will be used to index the coordinates of each vector $\bvar{x}\in\reals[d]$, e.g., $\bvar{x}<i>$ represents the $i$th coordinate of vector $\bvar{x}$. 
        
        For $\bvar{x}\in\reals[d]$, let $\norm{\bvar{x}}<p> = \sbr*{\sum_{i=1}^d\bvar{x}<i>[p]}^{1/p}$ denote the $l_p$-norm of $\bvar{x}$, and $\bvar*{x} = \bvar{x}/\norm{\bvar{x}}<2>$ denote the normalized vector of $\bvar{x}$. For any matrix $A\in\reals[m\times n]$, let $\norm{A}<\mathrm{op}> = \max_{\norm{\bvar{u}}_2 = 1}\norm{A\bvar{u}}<2>$ denote the operator norm of a matrix. We will use $\innerprod{\bvar{x}}{\bvar{y}}$ to represent the inner product of $\bvar{x}, \bvar{y}\in\reals[d]$ and $\bvar{x}[\otimes k]$ to represent the outer product of $\bvar{x}\in\reals[d]$ to the $k$th degree. Further, we will write $\bvar{w}[\bot] = \lbr*{\bvar{u}\in\reals[d]\cond \innerprod{\bvar{u}}{\bvar{w}} = 0}$ as the orthogonal subspace of $\bvar{w}\in\reals[d]$, and $\bvar{x}<\bvar{w}[\bot]> = (\identity - \bvar*{w}[\otimes 2])\bvar{x}$ as the projection of $\bvar{x}\in\reals[d]$ onto $\bvar{w}[\bot]$. Additionally, we will use $\theta(\bvar{u},\bvar{w})$ to denote the angle between two vectors $\bvar{u},\bvar{w}\in\reals[d]$.

        For probabilistic notations, we use $\distr<\bvar{x}>$ to denote the marginal distribution of $\distr$ on $\bvar{x}\in\reals[d]$, $\prob<\distr>{E}$ to denote the probability of an event $E$, and $\E<\distr>{\lvar{X}}$ to denote the expectation of some statistic $\lvar{X}$ under distribution $\distr$. In particular, for an empirical sample $\distr*\sample\distr$, we use $\E<\distr*>{\lvar{X}}$ to denote the empirical average of $\lvar{X}$, i.e., $\E<\lvar{X}\sim\distr*>{\lvar{X}} = 1/\abs*{\distr*}\sum_{\lvar{X}\in\distr*}\lvar{X}$. In addition, let $\gaussian[d]$ denote the $d$-dimensional standard normal distribution. For simplicity, we may drop $\distr$ from the subscript when context is clear, i.e., we may simply write $\prob{E}, \E{f}$ for $\prob<\distr>{E}, \E<\distr>{f}$. 

        \textbf{In this paper, we denote halfspaces as a subset of $\reals[d]$ in the following way.} For any $\subsets<1>,\subsets<2>\subseteq\reals[d]$, we denote $\subsets<1>\backslash\subsets<2> = \lbr*{\bvar{x}\in\reals[d]\cond \bvar{x}\in\subsets<1>, \bvar{x}\notin\subsets<2>}$ and $\subsets* = \lbr*{\bvar{x}\in\reals[d]\cond \bvar{x}\notin\subsets}$. For any $t\in\reals, \bvar{w}\in\reals[d]$, let $\func{l}<t>:\reals[d]\rightarrow\reals$ be an affine function such that $\func{l}<t>[\bvar{x},\bvar{w}] = \innerprod{\bvar{x}}{\bvar{w}} - t$. Then, a halfspace in $\reals[d]$ with threshold $t\in\reals$ and normal vector $\bvar{w}$ is defined as $\hypothesis<t>[\bvar{w}] = \lbr*{\bvar{x}\in\reals[d]\cond \func{l}<t>[\bvar{x},\bvar{w}]\geq 0}$ (resp. $\hypothesis*<t>[\bvar{w}] = \lbr*{\bvar{x}\in\reals[d]\cond \func{l}<t>[\bvar{x},\bvar{w}]\leq 0}$). When a halfspace is homogeneous, we will drop the threshold from the subscript, i.e., when $t = 0$, we will write $\hypothesis[\bvar{w}]$ instead of $\hypothesis<0>[\bvar{w}]$.

        We will make use of an algorithm for robust list learning of sparse linear classifiers. \citet{mossel-sudan2016} observed that the approach to robust regression for the sup norm used by \citet{juba2016conditional} gives such an algorithm:

        \begin{theorem}\label{thm:robust-learn-alg}
        There is an algorithm for robust list-learning of linear classifiers with $s=O(1)$ nonzero coefficients from $m=O(\frac{1}{\alpha\epsilon}(s\log d+\log\frac{1}{\delta}))$ examples in polynomial time with list size $O((md)^s)$.
        \end{theorem}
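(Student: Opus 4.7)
The plan is to enumerate a polynomial-sized list of candidate $s$-sparse halfspaces that covers every distinct sign pattern the class can induce on a drawn sample, and then argue via uniform convergence that one of these candidates must agree with the target $c^*$ on most of the inlier distribution $\distr^*$.

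First, I will draw a sample $S$ of size $m=O(\frac{1}{\alpha\epsilon}(s\log d+\log\frac{1}{\delta}))$ from $\distr$. A Chernoff bound gives that with probability at least $1-\delta/3$, at least $\alpha m/2$ of the sampled examples come from the inlier component; call this inlier subsample $S^*$, with $|S^*|=\Omega(\frac{1}{\epsilon}(s\log d+\log\frac{1}{\delta}))$. Since the class of $s$-sparse halfspaces has VC dimension $O(s\log d)$, standard PAC uniform convergence then guarantees, with probability at least $1-\delta/3$ over the draw of $S^*$, that every $s$-sparse halfspace $h$ with $h(\bvar{x})=c^*(\bvar{x})$ for all $(\bvar{x},y)\in S^*$ also satisfies $\Pr_{\distr^*}[h(\bvar{x})\neq c^*(\bvar{x})]\leq\epsilon$.

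Second, I will enumerate every labeling of $S$ realizable by some $s$-sparse halfspace. For each of the $\binom{d}{s}=O(d^s)$ support sets of size $s$, restricting $S$ to the corresponding $s$-dimensional coordinate subspace reduces the task to enumerating distinct halfspace labelings of $m$ points in $\reals[s]$; by Sauer-Shelah there are $O(m^s)$ such labelings, and for fixed $s$ they can be enumerated in $\mathrm{poly}(m,d)$ time by iterating over the $s$-tuples of sample points that determine the cells of the induced hyperplane arrangement. For each realizable labeling, I solve a linear program to recover an $s$-sparse halfspace consistent with it and append it to the output list $L$. The resulting list has size $O(d^s\cdot m^s)=O((md)^s)$, produced in polynomial time for $s=O(1)$. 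Finally, some member of $L$ is good: the target $c^*$ is itself an $s$-sparse halfspace, so the labeling it induces on $S$ is one of those enumerated, and the corresponding $h\in L$ agrees with $c^*$ on every point of $S^*$; by the uniform convergence step above, $h$ differs from $c^*$ on at most an $\epsilon$-fraction of $\distr^*$. A union bound over the Chernoff and uniform-convergence failure events keeps the total failure probability below $\delta$.

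The main technical obstacle is realizing the enumeration efficiently, producing the list in time polynomial in $m$ and $d$ rather than merely bounding its size; this reduces to a standard hyperplane-arrangement enumeration in $\reals[s]$, routine for fixed $s$, though one must verify that an LP of polynomial size recovers an $s$-sparse halfspace consistent with each realized labeling. A secondary subtlety is that outlier points may disagree with every $s$-sparse halfspace, so the enumeration must range over labelings realizable on the unlabeled sample rather than over halfspaces consistent with the given labels; this is precisely why the labeling that $c^*$ induces is guaranteed to appear in the enumeration, even though $c^*$ itself need not be consistent with the labels on the outlier points.
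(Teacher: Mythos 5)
Your proposal is correct and takes essentially the same approach as the paper: a Chernoff bound to guarantee $\Omega(\frac{1}{\epsilon}(s\log d+\log\frac{1}{\delta}))$ inliers, the $O(s\log d)$ VC bound for $s$-sparse halfspaces to get realizable-case generalization on the inlier subsample, and an $O((md)^s)$ enumeration driven by $s$-tuples of coordinates and sample points. The one cosmetic difference is that the paper's Algorithm~\ref{alg:robust-list-learn} produces each candidate directly by inverting the $s\times s$ system of $s$ tight margin constraints (using the observed labels, which are correct when the $s$-tuple consists of inliers), whereas you first enumerate realizable sign patterns on the sample and then solve an LP per pattern; both are instances of the same vertex/cell-of-arrangement count, so they yield the same list size and polynomial time for $s=O(1)$.
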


        For completeness, we review this algorithm in Appendix \ref{sec:robust-learning}.

    \section{Conditional Classification With Homogeneous Halfspaces}
    \label{sec:conditional-classification-with-homogeneous-halfspaces}
        In this section, we present our algorithmic results for conditional classification with \textbf{homogeneous halfspaces (selectors)} on $\reals[d]$ for sparse linear classifiers or, more generally \textbf{any small set of binary classifiers} $\conceptclass$ under any distribution $\distr$ with \textbf{standard normal} $\bvar{x}$-marginals.
        
        In the case that $\conceptclass$ is finite, we find a homogeneous halfspace as the selector that minimizes its conditional classification loss, $\prob{c(\bvar{x})\neq y\cond \bvar{x}\in\hypothesis[\bvar{w}]}$, for each classifier $c\in\conceptclass$. Eventually, we choose the best classifier-selector pair as the output. 

        To extend to the case of any sparse linear classes, our strategy is to use a robust list-learning algorithm to generate a finite list $\conceptclass$, then run our conditional learning algorithm for finite classes on the obtained $\conceptclass$ to find a classifier-selector pair.

        Notice that, for homogeneous halfspaces under standard normal distributions, minimizing $\prob{c(\bvar{x})\neq y\cond \bvar{x}\in\hypothesis[\bvar{w}]}$ is equivalent to minimizing $\prob{c(\bvar{x})\neq y\isect \bvar{x}\in\hypothesis[\bvar{w}]}$ since every homogeneous halfspace $\hypothesis[\bvar{w}]$ satisfies $\prob<\bvar{x}\sim\gaussian[d]>{\bvar{x}\in \hypothesis[\bvar{w}]} = 1/2$. Hence, we will only consider minimizing $\prob{c(\bvar{x})\neq y\isect \bvar{x}\in\hypothesis[\bvar{w}]}$ in this section.
        The core challenge for our strategy is finding such a halfspace for each $c\in\conceptclass$. We give the details in the following sections.

        \subsection{Algorithm Overview}
        
        \begin{algorithm}[ht]
            \caption{Conditional Classification For Finite $\conceptclass$}\label{algo:conditional-classification-with-homogeneous-halfspaces-for-finite-classes}
            \DontPrintSemicolon
            \SetKwProg{myproc}{procedure}{}{}
            \myproc{\scshape{Ccfc}$(\distr,\conceptclass, \epsilon, \delta)$}{
                $T\leftarrow \sbr*{4d+\ln(8\abs{\conceptclass}/\delta)}/\epsilon^4$\;
                $N\leftarrow 1600\ln^2\sbr*{16T\abs{\conceptclass}/\delta}/\epsilon^2$\;
                $\distr*\leftarrow$ $\ln(4\abs{\conceptclass}T/\delta)/2\epsilon$ i.i.d. examples from $\distr$\;
                $\bvar{w}(0)\leftarrow$ any basis\;
                \For{$c\in \conceptclass$}{
                    $\distr(c)\leftarrow \distr<\bvar{x}>\times \indicator[c(\bvar{x})\neq y]$\;\label{line:data-mapping}
                    $\parameterset{W}(c)\leftarrow\text{\scshape{Psgd}}\sbr{\distr(c),T,N,\bvar{w}(0)}\union \text{\scshape{Psgd}}\sbr{\distr(c),T,N,-\bvar{w}(0)}$\;\label{line:clc-call-psgd}
                    $\bvar{w}(c)\leftarrow\argmin_{\bvar{w}\in\parameterset{W}(c)}\prob<\distr*>{\bvar{x}\in\hypothesis[\bvar{w}]\isect c(\bvar{x})\neq y}$\;\label{line:find-the-loss-minimizer}
                }
                \Return $\argmin_{\bvar{w}(c)}\prob*<\distr*>{\bvar{x}\in\hypothesis[\bvar{w}(c)]\isect c(\bvar{x})\neq y}$
            }
        \end{algorithm}
        
        In Algorithm \ref{algo:conditional-classification-with-homogeneous-halfspaces-for-finite-classes}, for each binary classifier $c\in\conceptclass$, we map the label $\lvar{y}$ from $\distr$ to $\indicator[c(\bvar{x})\neq y]$ to form a new distribution $\distr(c)$, then pass $\distr(c)$ to Algorithm \ref{algo:projected-stochastic-gradient-descent-for-minimizing-convex-surrogate-loss} to obtain a sequence of halfspaces, and only keep the halfspace $\hypothesis[\bvar{w}(c)]$ with the smallest empirical conditional classification error for this classifier $c$. The last step picks out the classifier-selector pair that performs the best among all $c\in\conceptclass$ in terms of conditional classification error estimated on an large enough empirical distribution $\distr*$. 
        
        Notably, the mapping step (line \ref{line:data-mapping}) for each $c\in\conceptclass$ essentially just creates another adversarial distribution $\distr(c)$, which is a key step to reduce the conditional classification problem to a \textbf{``one-sided'' agnostic linear classification} problem. While directly optimizing over the conditional classification loss $\prob{\bvar{x}\in\hypothesis[\bvar{w}]\isect c(\bvar{x})\neq y}$ is intractable in general, it turns out that a simple convex surrogate approximation to the classification loss  captures the ``one-sided'' nature for a standard normal distribution.
        
        \begin{algorithm}[ht]
            \caption{Projected SGD for $\loss<\distr>{\bvar{w}}$}\label{algo:projected-stochastic-gradient-descent-for-minimizing-convex-surrogate-loss}
            \DontPrintSemicolon
            \SetKwProg{myproc}{procedure}{}{}
            \myproc{\scshape{Psgd}$(\distr, T, N, \bvar{w}(0))$}{
                $\beta\leftarrow \sqrt{1/Td}$\;
                \For{$i = 1, \ldots, T$}{
                    $\distr*(i)\leftarrow$ $N$ i.i.d. samples from $\distr$\;\label{line:rejecting-sampling}
                    $\bvar{u}(i)\leftarrow \bvar{w}(i - 1) - \beta \E<(\bvar{x},\lvar{y})\sim\distr*(i)>{\func{g}<\bvar{w}(i-1)>[\bvar{x},\lvar{y}]}$\;\label{line:psgd-gradient-update}
                    $\bvar{w}(i)\leftarrow \bvar{u}(i)/\norm{\bvar{u}(i)}_2$\;\label{line:psgd-projection-step}
                }
                \Return $\sbr*{\bvarseq{w}(T)}$
            }
            
        \end{algorithm}

        Algorithm \ref{algo:projected-stochastic-gradient-descent-for-minimizing-convex-surrogate-loss} is  a variant of Stochastic Gradient Descent, and the loss function $\loss<\distr>{\bvar{w}}$ we are minimizing is a \textbf{convex surrogate approximation} of the conditional classification error, known as ReLU. We formally define our loss function with respect to the distribution $\distr$ to be $\loss<\distr>{\bvar{w}} = \E<\sbr{\bvar{x}, y}\sim\distr>{y\cdot\max\sbr*{0, \innerprod{\bvar{x}}{\bvar{w}}}}$.
        
        Inspired by \citet{diakonikolas2020learning}, the updating policy in Algorithm \ref{algo:projected-stochastic-gradient-descent-for-minimizing-convex-surrogate-loss} uses the projected gradient $\func{g}<\bvar{w}>[\bvar{x},\lvar{y}]$, defined as $\func{g}<\bvar{w}>[\bvar{x},\lvar{y}] = y\cdot\bvar{x}<\bvar{w}[\bot]>\cdot\indicator[\bvar{x}\in\hypothesis[\bvar{w}]]$. We will show in the next section that the goal of Algorithm \ref{algo:projected-stochastic-gradient-descent-for-minimizing-convex-surrogate-loss} is not minimizing $\loss<\distr>{\bvar{w}}$, but the norm of the projected gradient $\norm{\E{\func{g}<\bvar{w}>}}<2>$.
        
        Note that the objective function considered in \citet{diakonikolas2020learning} is completely different from ours so that their convergence analysis does not obviously hold for our surrogate loss $\loss<\distr>{\bvar{w}}$. Also,  our choice of $\func{g}<\bvar{w}>[\bvar{x},\lvar{y}]$ is similar to that of \citet{pmlr-v139-shen21a}. Nonetheless, the problem they were solving is agnostic linear classification and they used a quite different gradient descent policy.

        \begin{algorithm}[ht]
            \caption{Conditional Classification For Sparse Linear $\conceptclass$}\label{algo:conditional-classification-for-sparse-linear-classes}
            \DontPrintSemicolon
            \SetKwProg{myproc}{procedure}{}{}
            \myproc{\scshape{Ccslc}$(\distr, \epsilon, \delta, m)$}{
                $\conceptclass\leftarrow$\scshape{SparseList}$(\distr, m)$\; 
                \Return \scshape{Ccfc}$(\distr, \conceptclass, \epsilon,\delta)$
            }
        \end{algorithm}
        Algorithm \ref{algo:conditional-classification-for-sparse-linear-classes} solves conditional learning of sparse linear classifiers. Specifically, {\scshape{SparseList}} (cf. Algorithm \ref{alg:robust-list-learn}) generates a list of sparse linear classifiers that contains a sparse linear classifier approximating the minimizer of the conditional classification error for sparse linear classifiers with homogeneous halfspace selectors (cf.~Theorem \ref{thm:robust-list-learn}). Then, we run Algorithm \ref{algo:conditional-classification-with-homogeneous-halfspaces-for-finite-classes} on the above $\conceptclass$ to obtain the optimal classifier-selector pair.

        \subsection{Conditional Classification For Finite Classes}
        We introduce our main guarantee at first, but postpone the proof to Appendix \ref{sec:optimality-analysis-of-approximate-stationary-point} due to the page limit. As a sketch of the proof, we will see Proposition \ref{prop:gradient-projection-lower-bound} and Proposition \ref{prop:upper-bound-on-the-norm-of-statistic-relu-gradient} together indicate the optimality of Projected SGD, as captured by Lemma \ref{lma:psgd-returns-at-least-one-optimal-solution}. Combined with a standard concentration analysis, this implies our main theorem.
        \begin{theorem}[Main Theorem]\label{thm:main-theorem}
            Let $\distr$ be a distribution on $\reals[d]\times\booldomain$ with standard normal $\bvar{x}$-marginal, and $\conceptclass$ be a class of binary classifiers on $\reals[d]\times\booldomain$. If there exists a unit vector $\bvar{v}\in\reals[d]$ such that, for some sufficiently small $\epsilon\in[0, 1/e]$, $\min_{c\in\conceptclass}\prob<\sbr{\bvar{x}, y}\sim\distr>{\bvar{x}\in\hypothesis[\bvar{v}]\isect c(\bvar{x})\neq y}\leq\epsilon$, then, with at most $\bigO*{d/\epsilon^6}$ examples, Algorithm \ref{algo:conditional-classification-with-homogeneous-halfspaces-for-finite-classes} will return a $\bvar{w}(c)$, with probability at least $1 - \delta$, such that $\prob<\sbr{\bvar{x}, y}\sim\distr>{\bvar{x}\in\hypothesis[\bvar{w}(c)]\isect c(\bvar{x})\neq y} = \bigO*{\sqrt{\epsilon}}$ and run in time $\bigO{d\abs{\conceptclass}/\epsilon^6}$. 
        \end{theorem}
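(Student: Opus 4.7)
The plan is to fix a classifier $c\in\conceptclass$, analyze the call to Projected SGD on the relabeled distribution $\distr(c)$ (in which $\lvar{z}=\indicator[c(\bvar{x})\neq y]\in\{0,1\}$), and show it produces at least one iterate $\bvar{w}(i)$ whose homogeneous halfspace achieves conditional classification error $\bigO*{\sqrt{\epsilon}}$. The outer empirical minimizations over $\parameterset{W}(c)$ and then over $c\in\conceptclass$ will then pick such a pair with high probability by Hoeffding plus a union bound over the $\bigO*{T\abs{\conceptclass}}$ candidate halfspaces on the fresh sample $\distr*$. The two main ingredients are a gradient-norm lower bound tying $\norm{\E{\func{g}<\bvar{w}>[\bvar{x},\lvar{z}]}}<2>$ to the conditional error, and a Projected SGD convergence guarantee that drives the population gradient norm small.

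For the lower bound I would exploit Gaussian rotational symmetry to reduce the calculation to the $2$-plane spanned by $\bvar{w}$ and the assumed near-optimal unit vector $\bvar{v}$. Decomposing $\lvar{z}=\lvar{z}<\mathrm{c}>+\lvar{z}<\mathrm{n}>$, where $\lvar{z}<\mathrm{c}>$ is supported on the disagreement wedge $\hypothesis[\bvar{w}]\triangle\hypothesis[\bvar{v}]$ and $\lvar{z}<\mathrm{n}>$ is the adversarial remainder of total mass $\leq\epsilon$, the noise part contributes at most $\bigO*{\sqrt{\epsilon}}$ to $\norm{\E{\func{g}<\bvar{w}>}}<2>$ by Cauchy--Schwarz (since $\E{\lvar{z}<\mathrm{n}>^2}\leq\epsilon$ and the directional second moments of $\bvar{x}<\bvar{w}[\bot]>$ are $O(1)$). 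The clean part yields a $\Theta(\sin\theta(\bvar{w},\bvar{v}))$-length component along $\bvar{v}$ by a direct $2$D Gaussian integral over the wedge. Hence if $\theta(\bvar{w},\bvar{v})\gg\sqrt{\epsilon}$, then $\norm{\E{\func{g}<\bvar{w}>}}<2>=\Omega(\theta(\bvar{w},\bvar{v}))$; conversely, any $\bvar{w}$ with $\norm{\E{\func{g}<\bvar{w}>}}<2>=\bigO*{\sqrt{\epsilon}}$ must satisfy $\theta(\bvar{w},\bvar{v})=\bigO*{\sqrt{\epsilon}}$, and therefore conditional error $\leq\epsilon+\Theta(\theta(\bvar{w},\bvar{v}))=\bigO*{\sqrt{\epsilon}}$.

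For the upper bound, $\loss<\distr(c)>{\bvar{w}}$ is convex and $O(1)$-Lipschitz on the unit sphere, and $\E{\func{g}<\bvar{w}>}$ is its Riemannian (projected) gradient. Standard projected-SGD regret on the sphere with step size $\beta=\sqrt{1/Td}$ and $T=\bigO*{d/\epsilon^4}$ iterations produces some iterate $\bvar{w}(i)$ with population projected gradient norm $\bigO*{\epsilon}$. The inner mini-batches of size $N=\Omega(1/\epsilon^2)$ (up to the polylog factors needed for a union bound over $T$ iterations and the $\abs{\conceptclass}$ classifiers) ensure that the stochastic gradient is $\bigO*{\epsilon}$-close to the population gradient with high probability uniformly over the $T$ iterations; invoking Projected SGD from both $\bvar{w}(0)$ and $-\bvar{w}(0)$ covers both orientations of the target halfspace. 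Plugging this $\bigO*{\epsilon}$ gradient-norm bound into the lower bound yields some $\bvar{w}\in\parameterset{W}(c)$ with conditional error $\bigO*{\sqrt{\epsilon}}$, which is the Projected SGD optimality lemma; a final Hoeffding-plus-union-bound argument on $\distr*$ over all $\bigO*{T\abs{\conceptclass}}$ candidate halfspaces selects such a pair with probability $1-\delta$. Sample counts sum to $\bigO*{d/\epsilon^6}$ and the runtime to $\bigO*{d\abs{\conceptclass}/\epsilon^6}$.

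The main obstacle will be the gradient projection lower bound, and in particular the precise Gaussian integrals in the $2$-plane showing that the wedge-supported signal grows linearly in $\sin\theta$ all the way down to angles of order $\sqrt{\epsilon}$ while the adversarial noise cancels at most $\bigO*{\sqrt{\epsilon}}$ of it. The matching scales $\Theta(\theta)$ for signal and $\bigO*{\sqrt{\epsilon}}$ for noise are exactly what force the $\bigO*{\sqrt{\opt}}$ approximation factor in the main theorem, and explain why the surrogate approach does not yield an $\bigO*{\opt}$ guarantee without further structural assumptions on the label noise.
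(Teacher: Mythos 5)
Your high-level architecture (relabel $\distr\to\distr(c)$, show small population projected gradient implies small conditional error, drive the population gradient small via Projected SGD with inner concentration, then union bound over iterates and classifiers) matches the paper's. The problem is the central technical claim in your gradient lower bound, which is also the only step with real content.

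You argue that any $\bvar{w}$ with $\norm{\E{\func{g}<\bvar{w}>}}<2> = \bigO*{\sqrt{\epsilon}}$ satisfies $\theta(\bvar{w},\bvar{v}) = \bigO*{\sqrt{\epsilon}}$. This is false. Take $\theta(\bvar{w},\bvar{v})$ close to $\pi/2$ and an adversary that places no $y=1$ mass inside the wedge $\hypothesis[\bvar{w}]\setminus\hypothesis[\bvar{v}]$: then the ``clean'' contribution vanishes identically, the ``noise'' contribution is $\bigO*{\sqrt{\epsilon}}$ as you say, so the gradient is small, yet $\theta$ is large. Indeed such a $\bvar{w}$ already has small conditional error, and the correct contrapositive is what the paper proves (Proposition~\ref{prop:gradient-projection-lower-bound}): large \emph{conditional error} implies non-negligible gradient projection on $\bvar*{v}<\bvar{w}[\bot]>$, for $\theta\in[0,\pi/2)$. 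The intermediate quantity must be the error mass in the wedge, not the angle. Relatedly, your ``$\Theta(\sin\theta)$ signal'' comes from integrating over the \emph{entire} wedge, which is only the case when the wedge is fully labelled $y=1$; against a worst-case adversary that concentrates wedge errors where $|\bvar{x}<1>|$ is small, the minimal signal for wedge error mass $p$ scales like $\Theta(p^2)$ (this is exactly what the rearrangement Lemma~\ref{lma:subset-expection-bounds} formalizes), not $\Theta(\sin\theta)$. Matching $p^2$ against the noise term is what forces the $\bigO*{\sqrt{\opt}}$ rate.

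Two secondary points. First, your Cauchy--Schwarz bound $\bigO*{\sqrt{\epsilon}}$ on the noise term is much weaker than the paper's $\bigO*{\epsilon\sqrt{\log(1/\epsilon)}}$ obtained from the Gaussian-tail rearrangement argument; combined with the correct $p^2$ signal scaling it would only yield an $\bigO*{\epsilon^{1/4}}$ guarantee, not $\bigO*{\sqrt{\epsilon}}$. Second, you invoke Proposition~\ref{prop:gradient-projection-lower-bound}-style reasoning at a single iterate $\bvar{w}(i)$, but that proposition requires $\theta(\bvar{v},\bvar{w}(i))\in[0,\pi/2)$; starting from both $\pm\bvar{w}(0)$ handles the initialization, but you still need the inductive argument in Lemma~\ref{lma:psgd-returns-at-least-one-optimal-solution} showing the updates keep $\theta$ from crossing $\pi/2$, via the monotonicity of $\innerprod{\bvar{w}(i)}{\bvar{v}}$ guaranteed by the correlation-improvement lemma (Lemma~\ref{lma:diakonikolas-correlation-improvement}) plus the per-iteration empirical-gradient concentration. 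Without this, there is no single iterate for which the assumption of the lower-bound proposition is verified.
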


        The most important component that enables our approach is the following proposition, which states that, for any sub-optimal halfspace $\hypothesis[\bvar{w}]$, the projected negative gradient $\E{-\func{g}<\bvar{w}>}$ of the surrogate loss $\loss<\distr>{\bvar{w}}$ must have non-negligible projection on the normal vector of the optimal halfspace $\hypothesis[\bvar{v}]$. 
        \begin{proposition}\label{prop:gradient-projection-lower-bound}
            Let $\distr$ be a distribution on $\reals[d]\times\booldomain$ with standard normal $\bvar{x}$-marginal, and $\func{g}<\bvar{w}>[\bvar{x}, y] = y\cdot\bvar{x}<\bvar{w}[\bot]>\indicator[\bvar{x}\in\hypothesis[\bvar{w}]]$. Suppose $\bvar{v},\bvar{w}\in\reals[d]$ are unit vectors such that $\theta(\bvar{v}, \bvar{w})\in[0,\pi/2)$ and $\prob<\sbr{\bvar{x}, y}\sim\distr>{\bvar{x}\in\hypothesis[\bvar{v}]\isect y = 1}\leq\epsilon$, then, if $\prob<\sbr{\bvar{x}, y}\sim\distr>{\bvar{x}\in\hypothesis[\bvar{w}]\isect y = 1}\geq \frac{5}{2}\sbr*{\epsilon\sqrt{\ln\epsilon^{-1}}}^{1/2}$, we have $\innerprod*{\E<(\bvar{x}, y)\sim\distr>{-\func{g}<\bvar{w}>[\bvar{x},\lvar{y}]}}{\bvar*{v}<\bvar{w}[\bot]>} \geq \frac{2}{5}\epsilon\sqrt{\ln\epsilon^{-1}}$ for sufficiently small $\epsilon$.
        \end{proposition}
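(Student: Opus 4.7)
The plan is to reduce to a one-dimensional integral inequality via a change of variables. Set $\alpha = \innerprod{\bvar{v}}{\bvar{w}} = \cos\theta$ and introduce the two scalar projections $X_w := \innerprod{\bvar{x}}{\bvar{w}}$ and $Z := \innerprod{\bvar{x}}{\bvar*{v}<\bvar{w}[\bot]>}$; under the standard Gaussian $\bvar{x}$-marginal these are independent standard normals with $\innerprod{\bvar{x}}{\bvar{v}} = \alpha X_w + \sin\theta\cdot Z$. Because $\bvar*{v}<\bvar{w}[\bot]>$ already lies in $\bvar{w}[\bot]$, the quantity to bound collapses to $-\E{Z\indicator[y=1, X_w\geq 0]}$, and membership in $\hypothesis[\bvar{v}]$ becomes $Z\geq -\cot\theta\cdot X_w$.

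I would then partition the event $\{y=1, X_w\geq 0\}$ into $A^+ := \{y=1, X_w\geq 0, \bvar{x}\in\hypothesis[\bvar{v}]\}$ and $A^- := \{y=1, X_w\geq 0, \bvar{x}\notin\hypothesis[\bvar{v}]\}$; writing $p_1 := \prob{y=1, X_w\geq 0}$, the two hypotheses yield $\prob{A^+}\leq\epsilon$ and $\prob{A^-}\geq p_1-\epsilon$. The $A^+$ contribution is the easy part: the adversary may push this $\epsilon$-mass out to large $\abs{Z}$, so the best bound comes from Gaussian tail truncation at level $R = \sqrt{2\ln\epsilon^{-1}}$, giving
\begin{equation*}
\abs{\E{Z\indicator[A^+]}}\leq \E{\abs{Z}\indicator[A^+]}\leq R\epsilon + \E{\abs{Z}\indicator[\abs{Z}>R]} = O\sbr{\epsilon\sqrt{\ln\epsilon^{-1}}}.
\end{equation*}

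The crux is a lower bound on $-\E{Z\indicator[A^-]}$ that dominates this loss by an order of magnitude. On $A^-$ the condition $Z < -\cot\theta\cdot X_w \leq 0$ gives $-Z\geq 0$, and the wedge geometry forces the marginal density of $A^-$ at level $z<0$ to be at most $\rho(z):=(\Phi(-\tan\theta z)-1/2)\phi(z)$. The distribution that maximizes $\E{Z\indicator[A^-]}$ subject to $\prob{A^-}\geq p_1-\epsilon$ saturates this cap on some interval $[-z_\star,0]$ with $\int_{-z_\star}^0\rho(z)\,dz = p_1-\epsilon$, yielding $-\E{Z\indicator[A^-]}\geq \int_0^{z_\star} u(\Phi(\tan\theta u)-1/2)\phi(u)\,du$ after the substitution $u=-z$. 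Because both $u$ and $\Phi(\tan\theta u)-1/2$ are increasing in $u\geq 0$, Chebyshev's sum inequality against the measure $\phi(u)\,du$ lower-bounds this integral by a constant multiple of $(p_1-\epsilon)^2$, using also the universal bound $z_\star\geq 2\sqrt{2\pi}(p_1-\epsilon)$ that follows from $\rho\leq 1/(2\sqrt{2\pi})$.

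Combining yields $-\E{Z\indicator[y=1, X_w\geq 0]}\geq c(p_1-\epsilon)^2 - O(\epsilon\sqrt{\ln\epsilon^{-1}})$ for an absolute constant $c>0$. Plugging in $p_1\geq\tfrac{5}{2}(\epsilon\sqrt{\ln\epsilon^{-1}})^{1/2}$ makes the leading term $\Omega(\epsilon\sqrt{\ln\epsilon^{-1}})$, which dominates the tail loss and comfortably exceeds the target $\tfrac{2}{5}\epsilon\sqrt{\ln\epsilon^{-1}}$ once $\epsilon$ is sufficiently small. The step I expect to be most delicate is securing a large enough absolute constant $c$ and threading it past the explicit $\tfrac{2}{5}$: this requires verifying that the feasibility constraint $\theta\geq 2\pi(p_1-\epsilon)$ keeps $z_\star$ in the regime where the Chebyshev estimate delivers the clean $(p_1-\epsilon)^2$ scaling, and that the small-$\epsilon$ assumption $\epsilon\leq 1/e$ is strong enough to absorb the lower-order terms from the Gaussian tail estimate.
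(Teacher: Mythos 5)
Your proposal is correct and shares the paper's high-level architecture: both proofs reduce to the two-dimensional span of $\bvar{v},\bvar{w}$, write the target inner product as $-\E{Z\indicator[y=1,X_w\geq 0]}$, and decompose the integration region into the piece inside $\hypothesis[\bvar{v}]$ (your $A^+$) and the wedge $\hypothesis[\bvar{w}]\backslash\hypothesis[\bvar{v}]$ (your $A^-$), bounding the first as a loss and the second as a gain. Where you diverge from the paper is in the mechanics of the two bounds. The paper deploys a single "bathtub rearrangement" lemma (Lemma~\ref{lma:subset-expection-bounds}) for both sides: the loss is upper-bounded by the Gaussian-tail set of matching mass, and the gain is lower-bounded by the near-axis slab, always relative to the crude superset $\{X_w\geq 0\}$ with an explicit threshold $\beta=2\sqrt{2e\pi}\bigl(\epsilon\sqrt{\ln\epsilon^{-1}}\bigr)^{1/2}$ baked in so the final constant $e\sqrt{\pi/2}-3>0.4$ pops out. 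You instead bound the loss via a direct truncation at $R=\sqrt{2\ln\epsilon^{-1}}$ (numerically essentially the same, $O(\epsilon\sqrt{\ln\epsilon^{-1}})$), and you tighten the gain by replacing the half-space superset with the \emph{exact} wedge density cap $\rho(u)=(\Phi(\tan\theta\,u)-\tfrac12)\phi(u)$, then invoking Chebyshev's sum inequality on the comonotone pair $u$ and $\Phi(\tan\theta\,u)-\tfrac12$. This is genuinely more refined and yields a larger constant in front of $\epsilon\sqrt{\ln\epsilon^{-1}}$, but at a cost: you must control the regime of $z_\star$, since the Chebyshev ratio $\int u\phi/\int\phi$ degenerates linearly as $z_\star\to 0$ and plateaus as $z_\star\to\infty$, so the clean $(p_1-\epsilon)^2$ scaling holds uniformly only after a short case analysis (small vs.\ large $z_\star$) that you flag but do not carry out. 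The paper sidesteps this bookkeeping entirely by committing to the coarser superset, making the constant chase shorter; your route buys a sharper bound from the same structural observation at the expense of an extra quantitative verification step.
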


        \begin{figure}[ht]
            \begin{center}
                \drawerrorregion(0.6)
            \end{center}
            \caption{\textcolor{myblue}{Blue} area represents $\hypothesis[\bvar{v}]\isect\hypothesis[\bvar{w}]$, \textcolor{myorange}{orange} area represents $\hypothesis[\bvar{w}]\backslash\hypothesis[\bvar{v}]$.}\label{fig:error-region}
        \end{figure}

        We leave the formal proof to Appendix \ref{sec:optimality-analysis-of-approximate-stationary-point} due to the page limit. The proof is based on the following observation (also see Figure \ref{fig:error-region}): When a homogeneous halfspace $\hypothesis[\bvar{w}]$ is substantially sub-optimal, the probability of labels being true within the domain that the optimal halfspace $\hypothesis[\bvar{v}]$ disagrees with it, i.e. $\hypothesis[\bvar{w}]\backslash\hypothesis[\bvar{v}]$, must be large. However, the same probability cannot be too large in the optimal halfspace $\hypothesis[\bvar{v}]$ and, hence, $\hypothesis[\bvar{v}]\isect\hypothesis[\bvar{w}]$. Then, if the underlying distribution has a well-behaved $\bvar{x}$-marginal, the $l_2$ norm of the expectation of $\bvar{x}$ within that domain should also be large. 

        In fact, the observation also gives an insight into why we choose ReLU as the surrogate loss. As we are concerned about the one-sided loss, $\prob{\bvar{x}\in\hypothesis[\bvar{w}]\isect y = 1}$, we cannot make any assumption on the domain of $\hypothesis*[\bvar{w}]$, which is also the key difference between the analysis of agnostic classification and that of conditional classification. Notice that $\loss<\distr>{\bvar{w}}$ completely ``blocks'' the information from $\hypothesis*[\bvar{w}]$ so that we only need to argue about $\E{\func{g}<\bvar{w}>[\bvar{x},\lvar{y}]}$ on the domain where we have control.

        Besides, an important implication of Proposition \ref{prop:gradient-projection-lower-bound} is that, once $\theta(\bvar{v}, \bvar{w})\in[0,\pi/2)$ and $\hypothesis[\bvar{w}]$ is sub-optimal, $\E{-\func{g}<\bvar{w}>[\bvar{x},\lvar{y}]}$ always ``points'' to $\bvar{v}$. Then, the update step (line \ref{line:psgd-gradient-update}) in Algorithm \ref{algo:projected-stochastic-gradient-descent-for-minimizing-convex-surrogate-loss} will make $\theta(\bvar{v},\bvar{w})$ contractive, which will, in turn, guarantee that the assumption $\theta(\bvar{v}, \bvar{w})\in[0,\pi/2)$ is satisfied in the next iteration. This property plays a key role in proving Lemma \ref{lma:psgd-returns-at-least-one-optimal-solution}.
        
        

        To effectively utilize Proposition \ref{prop:gradient-projection-lower-bound}, we also have to show that its assumption is satisfied. That is, at least one of the weight vectors, $\bvarseq{w}(T)$, produced by Algorithm \ref{algo:projected-stochastic-gradient-descent-for-minimizing-convex-surrogate-loss} has small $\norm{\E{\func{g}<\bvar{w}>[\bvar{x},\lvar{y}]}}<2>$. We show this can be achieved within a bounded number of iterations as the proposition below.
        \begin{proposition}\label{prop:upper-bound-on-the-norm-of-statistic-relu-gradient}
            Let $\distr$ be a distribution on $\reals[d]\times\booldomain$ with standard normal $\bvar{x}$-marginal, $\func{g}<\bvar{w}>[\bvar{x}, y] = y\cdot\bvar{x}<\bvar{w}[\bot]>\cdot\indicator[\bvar{x}\in\hypothesis[\bvar{w}]]$, and $\loss<\distr>{\bvar{w}} = \E<(\bvar{x},\lvar{y})\sim\distr>{y\cdot\max(0, \innerprod{\bvar{x}}{\bvar{w}})}$. With $\beta = \sqrt{1/Td}$, after $T$ iterations, the output $\sbr*{\bvarseq{w}(T)}$ in Algorithm \ref{algo:projected-stochastic-gradient-descent-for-minimizing-convex-surrogate-loss} will satisfy $\E*<\distr*(1),\ldots, \distr*(T)\sim\distr>{1/T\sum_{i=1}^T\norm{\E<(\bvar{x},\lvar{y})\sim\distr>{\func{g}<\bvar{w}(i)>[\bvar{x},\lvar{y}]}}<2>[2]}\leq\sqrt{d/T}$. In addition, if $T \geq \sbr*{4d + \ln\sbr*{1/\delta}}/\epsilon^4$, we have $\min_{i = 1,\ldots,T}\norm{\E<(\bvar{x},\lvar{y})\sim\distr>{\func{g}<\bvar{w}(i)>[\bvar{x},\lvar{y}]}}<2>\leq \epsilon$ with probability at least $1 - \delta$.
        \end{proposition}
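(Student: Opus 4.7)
The plan is to treat Algorithm~\ref{algo:projected-stochastic-gradient-descent-for-minimizing-convex-surrogate-loss} as projected Riemannian SGD on the unit sphere for $\mathcal{L}$, and to establish a per-step expected descent that telescopes to the claimed bound. First I would record three structural properties of $\mathcal{L}$. Because $\mathcal{L}(\bvar{w})$ is positively $1$-homogeneous in $\bvar{w}$, Euler's identity gives $\nabla\mathcal{L}(\bvar{w}) = \mathbb{E}[\func{g}<\bvar{w}>] + \mathcal{L}(\bvar{w})\bvar{w}$ for unit $\bvar{w}$, so $\mathbb{E}[\func{g}<\bvar{w}>]$ is precisely the component of $\nabla\mathcal{L}$ tangent to the sphere. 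Since $y$ is Boolean, $\mathcal{L}\geq 0$, and the Gaussian marginal gives $\mathcal{L}(\bvar{w})\leq \|\bvar{w}\|_2/\sqrt{2\pi}$. A distributional-derivative computation yields $\nabla^2\mathcal{L}(\bvar{w}) = \frac{1}{\|\bvar{w}\|_2\sqrt{2\pi}}\mathbb{E}[y\,\bvar{x}\bvar{x}^\top\mid\bvar{x}^\top\bvar{w}=0]$, whose operator norm is bounded by $L:=1/\sqrt{2\pi}$ whenever $\|\bvar{w}\|_2\geq 1$. Because the stochastic gradient $\hat{g}:=\mathbb{E}_{\distr*(i+1)}[\func{g}<\bvar{w}(i)>]$ is orthogonal to $\bvar{w}(i)$, the iterate $\bvar{u}(i+1) = \bvar{w}(i)-\beta\hat{g}$ satisfies $\|\bvar{u}(i+1)\|_2=\sqrt{1+\beta^2\|\hat{g}\|_2^2}\geq 1$; together with $1$-homogeneity and $\mathcal{L}\geq 0$ this gives $\mathcal{L}(\bvar{w}(i+1)) = \mathcal{L}(\bvar{u}(i+1))/\|\bvar{u}(i+1)\|_2\leq\mathcal{L}(\bvar{u}(i+1))$, so it suffices to control $\mathcal{L}(\bvar{u}(i+1))$.

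Next I would apply the Euclidean descent lemma along the segment $[\bvar{w}(i),\bvar{u}(i+1)]$ (on which $\|\bvar{w}\|_2\geq 1$, so the $L$-smoothness above holds), and use $\hat{g}\perp\bvar{w}(i)$ to annihilate the radial part of $\nabla\mathcal{L}(\bvar{w}(i))$, obtaining $\mathcal{L}(\bvar{u}(i+1))\leq \mathcal{L}(\bvar{w}(i)) - \beta\innerprod{\mathbb{E}[\func{g}<\bvar{w}(i)>]}{\hat{g}} + \tfrac{L\beta^2}{2}\|\hat{g}\|_2^2$. Taking conditional expectation over $\distr*(i+1)$ turns the inner product into $\|\mathbb{E}[\func{g}<\bvar{w}(i)>]\|_2^2$, and $\mathbb{E}[\|\hat{g}\|_2^2\mid\mathcal{F}_i]\leq\|\mathbb{E}[\func{g}<\bvar{w}(i)>]\|_2^2 + d/N\leq 2d$, using $\mathbb{E}\|\func{g}<\bvar{w}>\|_2^2\leq\mathbb{E}\|\bvar{x}\|_2^2=d$. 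Telescoping across the $T$ iterations, absorbing the $L\beta^2\|\mathbb{E}[\func{g}<\bvar{w}(i)>]\|_2^2$ term for $\beta$ sufficiently small, and plugging in $\mathcal{L}(\bvar{w}(0))\leq 1/\sqrt{2\pi}$ together with $\beta=1/\sqrt{Td}$ yields $\mathbb{E}[\tfrac{1}{T}\sum_{i=1}^T\|\mathbb{E}[\func{g}<\bvar{w}(i)>]\|_2^2]\leq\sqrt{d/T}$ up to absolute constants, which is the first claim.

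For the high-probability statement I would upgrade the above to a supermartingale argument. Setting $M_i:=\mathcal{L}(\bvar{w}(i)) + \beta\sum_{j<i}\|\mathbb{E}[\func{g}<\bvar{w}(j)>]\|_2^2 - L\beta^2 d\cdot i$, the conditional descent inequality says $\mathbb{E}[M_{i+1}\mid\mathcal{F}_i]\leq M_i$. Since $\hat{g}$ is not a.s.\ bounded under the Gaussian marginal, before applying Azuma-Hoeffding I would condition on the high-probability event (of probability $\geq 1-\delta/2$, obtained from sub-Gaussian tails of $\|\bvar{x}\|_2$ and a union bound across the $NT$ samples drawn by the algorithm) on which $\|\bvar{x}\|_2=O(\sqrt{d}+\sqrt{\log(NT/\delta)})$ for every sample; on this event the per-step increments $|M_{i+1}-M_i|$ are deterministically bounded. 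Azuma's inequality then converts the expectation bound into $\tfrac{1}{T}\sum_i\|\mathbb{E}[\func{g}<\bvar{w}(i)>]\|_2^2 = O(\sqrt{d/T}) + O(\sqrt{\log(1/\delta)/T})$ with probability $\geq 1-\delta$, and since $\min\leq\text{avg}$, the choice $T\geq (4d+\ln(1/\delta))/\epsilon^4$ forces $\min_{i\in[T]}\|\mathbb{E}[\func{g}<\bvar{w}(i)>]\|_2\leq\epsilon$.

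The main technical obstacle is precisely this last step: because the martingale increments are not almost-surely bounded under the Gaussian $\bvar{x}$-marginal, achieving the $\log(1/\delta)$ (rather than $1/\delta$) dependence in the iteration count requires careful truncation combined with a union bound across iterations and the $NT$ samples. The large batch size $N$ prescribed in Algorithm~\ref{algo:conditional-classification-with-homogeneous-halfspaces-for-finite-classes} is what makes the contribution from the truncated tails negligible compared to the signal $\|\mathbb{E}[\func{g}<\bvar{w}(i)>]\|_2^2$.
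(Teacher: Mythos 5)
Your first-moment analysis mirrors the paper's: you establish the one-step descent of $\mathcal{L}_{\mathcal{D}}$ via smoothness of its gradient and the observation that normalizing $\mathbf{u}(i)$ back to the sphere can only decrease $\mathcal{L}_{\mathcal{D}}$ (by positive $1$-homogeneity and $\|\mathbf{u}(i)\|_2\geq 1$), then telescope. Your route to the smoothness bound, a distributional Hessian computation giving $\|\nabla^2\mathcal{L}_{\mathcal{D}}(\mathbf{w})\|_{\mathrm{op}}\leq 1/(\|\mathbf{w}\|_2\sqrt{2\pi})$, is a tidier derivation than the paper's Lemma B.1, which bounds the gradient difference directly by integrating over a $3$-dimensional spherical sector; both deliver the same qualitative Lipschitz control and hence the same in-expectation rate $\sqrt{d/T}$.

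The high-probability step is where the two arguments genuinely differ. The paper applies a bounded-differences (McDiarmid-type) inequality to the map $(\mathbf{w}(1),\ldots,\mathbf{w}(T))\mapsto G_T := T^{-1}\sum_{i=1}^T\|\mathbb{E}_{\mathcal{D}}[g_{\mathbf{w}(i)}]\|_2^2$, using the fact that each term is uniformly bounded by $1/(2\pi)$ over all unit $\mathbf{w}$ (the paper's Lemma B.5, property 2), so that changing a single coordinate moves $G_T$ by at most $1/(\sqrt{2\pi}T)$. Because the concentrated quantity depends on the data only through the unit iterates, no truncation of any sample or of $\hat g$ is required in that view. Your alternative is to track a supermartingale in $\mathcal{L}_{\mathcal{D}}(\mathbf{w}(i))$ and invoke Azuma-Hoeffding; this is a valid substitute, and arguably more careful than the paper's direct invocation of McDiarmid, since it explicitly respects the sequential dependence of $\mathbf{w}(i)$ on the past minibatches. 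The truncation you flag as the main obstacle is indeed a real feature of your route: the per-step change of $\mathcal{L}_{\mathcal{D}}(\mathbf{w}(i))$ scales with $\beta\|\hat g_i\|_2$, and the crude deterministic bound $|\mathcal{L}_{\mathcal{D}}(\mathbf{w}(i+1))-\mathcal{L}_{\mathcal{D}}(\mathbf{w}(i))|\leq 1/\sqrt{2\pi}$ is too loose, after dividing by $\beta T$, to recover the claimed $T$; one must use the $\beta\|\hat g_i\|_2$-scaled increment bound, which forces the sub-Gaussian truncation. So your approach is correct in architecture, but it trades the paper's clean boundedness argument for extra bookkeeping, and verifying that the truncation overhead (and the appeal to $N$, which does not appear in the paper's proof of this proposition) still produces the exact iteration count $T\geq(4d+\ln(1/\delta))/\epsilon^4$ without a polylog penalty is the part that would need the most care.
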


        We defer the formal proof to Appendix \ref{sec:convergence-analysis-of-projected-sgd}. Our technique resembles the work of \citet{diakonikolas2020learning}, which showed that, if the objective function is \textbf{bounded} and has \textbf{Lipschitz continuous} gradient, then the norm of its gradient converges in boundedly many iterations of (Projected) SGD. 

        \begin{figure}[ht]
            \begin{center}
                \begin{subfigure}[t]{0.32\textwidth}
                    \centering
                    \adjustbox{max width=0.87\textwidth}{
                        \centering
                        \drawlengthincrease
                    }
                    \caption{Weight update step (line \ref{line:psgd-gradient-update}) and projection step (line \ref{line:psgd-projection-step}) in algorithm \ref{algo:projected-stochastic-gradient-descent-for-minimizing-convex-surrogate-loss}.}\label{fig:length-increase}
                \end{subfigure}          
                \hfill
                \begin{subfigure}[t]{0.6\textwidth}
                    \centering
                    \adjustbox{max width=0.6\textwidth}{
                        \centering
                        \drawintegralspaceALT
                    }
                    \caption{\textcolor{myorange}{Orange} plane is the decision boundary of $\hypothesis[\bvar{w} ']$, while \textcolor{myblue}{blue} plane is that of $\hypothesis[\bvar{w}]$. $\derivative<\bvar{w}>\loss<\distr>{\bvar{w}}$ and $\derivative<\bvar{w}>\loss<\distr>{\bvar{w} '}$ only differs in the two \textcolor{mypink}{pink} spherical sectors, which is dominated by $\Delta\theta$.}\label{fig:relative-lipschitz-continuity}
                \end{subfigure}    
            \end{center}
            \caption{Boundedness of $\loss<\distr>{\bvar{w}(i)}$ and almost Lipschitz continuity of $\derivative<\bvar{w}>\loss<\distr>{\bvar{w}}.$}
        \end{figure}
        
        However, the magnitude of $\loss<\distr>{\bvar{w}}$ is dominated by $\norm{\bvar{w}}<2>$, which could grow unbounded after many iterations, and its gradient $\derivative<\bvar{w}>\loss<\distr>{\bvar{w}}$ has a ``jumping'' point at zero, which is not Lipschitz continuous in general. So, the key to proving Proposition \ref{prop:upper-bound-on-the-norm-of-statistic-relu-gradient} is to overcome these issues.
        
        Observe that the gradient update (line \ref{line:psgd-gradient-update}) of Algorithm \ref{algo:projected-stochastic-gradient-descent-for-minimizing-convex-surrogate-loss} will always produce $\norm{\bvar{w}(i)}<2> \geq \norm{\bvar{w}(i - 1)}<2>$, while the projection step (line \ref{line:psgd-projection-step}) of Algorithm \ref{algo:projected-stochastic-gradient-descent-for-minimizing-convex-surrogate-loss} will always make $\loss<\distr>{\bvar{w}}$ bounded, cf.\ Figure \ref{fig:length-increase}.

        On the other hand, it turns out that $\derivative<\bvar{w}>\loss<\distr>{\bvar{w}}$ is almost Lipschitz continuous under nice distributions such as a standard normal. Intuitively, if we perturb $\bvar{w}$ a little bit to change it to $\bvar{w} '$, it will only rotate the halfspace $\hypothesis[\bvar{w}]$ by a very small angle, i.e.\ $\Delta\theta = \theta(\bvar{w},\bvar{w} ')$ is small. And, it suffices to consider the difference between $\derivative<\bvar{w}>\loss<\distr>{\bvar{w}}$ and $\derivative<\bvar{w}>\loss<\distr>{\bvar{w} '}$ on a $3$-dimensional subspace as shown in figure \ref{fig:relative-lipschitz-continuity}. Now, if the density of distribution $\distr$ is not concentrated too much in any small spherical sectors in the subspace, it implies that the change of $\derivative<\bvar{w}>\loss<\distr>{\bvar{w}}$ is dominated by $\Delta\theta$ (see Figure \ref{fig:relative-lipschitz-continuity}), which is insignificant. This observation indicates that $\derivative<\bvar{w}>\loss<\distr>{\bvar{w}}$ is Lipschitz continuous under anti-concentrated distributions unless $\norm{\bvar{w}}<2>$ is extremely small.

        Given Proposition \ref{prop:gradient-projection-lower-bound} and Proposition \ref{prop:upper-bound-on-the-norm-of-statistic-relu-gradient}, we show that in the list of parameters returned by Algorithm \ref{algo:projected-stochastic-gradient-descent-for-minimizing-convex-surrogate-loss}, at least one of them is approximately optimal:
        \begin{lemma}\label{lma:psgd-returns-at-least-one-optimal-solution}
            Let $\distr$ be a distribution on $\reals[d]\times\booldomain$ with standard normal $\bvar{x}$-marginal, and $\func{g}<\bvar{w}>[\bvar{x}, y] = y\cdot\bvar{x}<\bvar{w}[\bot]>\cdot\indicator[\bvar{x}\in\hypothesis[\bvar{w}]]$. Suppose $\bvar{v}\in\reals[d]$ is a unit vectors such that $\prob<\sbr{\bvar{x}, y}\sim\distr>{\bvar{x}\in\hypothesis[\bvar{v}]\isect y = 1}\leq\epsilon$, if $T \geq \sbr*{4d + \ln\sbr*{2/\delta}}/\epsilon^4$, $N \geq 1600\ln^2\sbr*{4T/\delta}/\epsilon^2$, and $\theta(\bvar{v},\bvar{w}(0))\in[0, \pi/2)$, it holds that at least one of $\bvar{w}\in\parameterset{W} = \lbr*{\bvarseq{w}(T)}$ returned by Algorithm \ref{algo:projected-stochastic-gradient-descent-for-minimizing-convex-surrogate-loss} satisfies $\prob<\sbr{\bvar{x}, y}\sim\distr>{\bvar{x}\in\hypothesis[\bvar{w}]\isect y = 1}\leq \frac{5}{2}\sbr*{\epsilon\sqrt{\ln\epsilon^{-1}}}^{1/2}$ with probability at least $1 - \delta$ for some sufficiently small $\epsilon\in[0,1/e]$.
        \end{lemma}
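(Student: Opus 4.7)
The plan is a pigeonhole/contrapositive between Propositions \ref{prop:upper-bound-on-the-norm-of-statistic-relu-gradient} and \ref{prop:gradient-projection-lower-bound}. Proposition \ref{prop:upper-bound-on-the-norm-of-statistic-relu-gradient} (with $T\geq(4d+\ln(2/\delta))/\epsilon^4$) yields, with probability at least $1-\delta/2$, some index $i^\ast\in[T]$ at which $\norm{\E<\distr>{\func{g}<\bvar{w}(i^\ast)>}}<2>\leq\epsilon$. Proposition \ref{prop:gradient-projection-lower-bound} says, contrapositively, that whenever $\theta(\bvar{v},\bvar{w})\in[0,\pi/2)$ holds and $\bvar{w}$ is \emph{bad} (i.e., $\prob<\distr>{\bvar{x}\in\hypothesis[\bvar{w}]\isect y=1}\geq\tfrac{5}{2}(\epsilon\sqrt{\ln\epsilon^{-1}})^{1/2}$), one has $\norm{\E<\distr>{\func{g}<\bvar{w}>}}<2>\geq\innerprod{\E<\distr>{-\func{g}<\bvar{w}>}}{\bvar*{v}<\bvar{w}[\bot]>}\geq\tfrac{2}{5}\epsilon\sqrt{\ln\epsilon^{-1}}$. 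For $\epsilon$ small enough that $\tfrac{2}{5}\sqrt{\ln\epsilon^{-1}}>1$, these two conclusions are incompatible at $\bvar{w}(i^\ast)$, so $\bvar{w}(i^\ast)$ cannot be bad --- which is exactly what the lemma asserts.

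The nontrivial content is therefore to verify inductively that the angle invariant $\theta(\bvar{v},\bvar{w}(i))\in[0,\pi/2)$ is maintained for every $i\leq T$, so that Proposition \ref{prop:gradient-projection-lower-bound} is even applicable at every iterate. The base case is in the hypothesis on $\bvar{w}(0)$. For the inductive step, if any earlier iterate is already good the lemma's conclusion holds and we are done; otherwise Proposition \ref{prop:gradient-projection-lower-bound} applies at $\bvar{w}(i)$, so the \emph{population} projected negative gradient has a strictly positive inner product with $\bvar*{v}<\bvar{w}(i)[\bot]>$ of size at least $\tfrac{2}{5}\epsilon\sqrt{\ln\epsilon^{-1}}$. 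I then need the same sign to survive for the empirical update $\hat{g}=\E<\distr*(i+1)>{\func{g}<\bvar{w}(i)>}$ actually used in line \ref{line:psgd-gradient-update}, since then $\innerprod{\bvar{v}}{\bvar{u}(i+1)}=\innerprod{\bvar{v}}{\bvar{w}(i)}-\beta\innerprod{\bvar{v}}{\hat{g}}$ is a sum of two positive terms (the first by induction, the second because $\innerprod{\bvar{v}}{-\hat{g}}>0$), and the renormalization in line \ref{line:psgd-projection-step} preserves signs.

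For the concentration step I would observe that the per-sample scalar $\innerprod{\func{g}<\bvar{w}(i)>[\bvar{x},y]}{\bvar*{v}<\bvar{w}(i)[\bot]>}=y\cdot\innerprod{\bvar{x}<\bvar{w}(i)[\bot]>}{\bvar*{v}<\bvar{w}(i)[\bot]>}\indicator[\bvar{x}\in\hypothesis[\bvar{w}(i)]]$ is sub-Gaussian with an $O(1)$ variance proxy, since $\bvar{x}\sim\gaussian[d]$ projected onto a unit direction is $N(0,1)$ and the remaining factors are bounded by $1$ in absolute value. A sub-Gaussian tail bound with $N=1600\ln^2(4T/\delta)/\epsilon^2$ samples, together with a union bound across the $T$ iterations, gives an approximation error of $O(\epsilon/\sqrt{\ln(T/\delta)})=o(\epsilon\sqrt{\ln\epsilon^{-1}})$ along the $\bvar*{v}<\bvar{w}(i)[\bot]>$ direction, simultaneously for all $i$, with probability at least $1-\delta/2$. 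On this event $\innerprod{-\hat{g}}{\bvar*{v}<\bvar{w}(i)[\bot]>}$ is still $\Omega(\epsilon\sqrt{\ln\epsilon^{-1}})>0$, which closes the inductive step. Intersecting with the good event from Proposition \ref{prop:upper-bound-on-the-norm-of-statistic-relu-gradient} costs at most $\delta$ in total failure probability.

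The main obstacle I expect is precisely this coupling: the final contradiction lives at the population level (both propositions speak about $\E<\distr>{\func{g}<\bvar{w}(i)>}$), but the inductive step forces me to transfer the directional lower bound of Proposition \ref{prop:gradient-projection-lower-bound} onto the empirical update. The saving grace --- and the reason $N$ need scale only polylogarithmically in $T/\delta$ and inverse-polynomially in $\epsilon$, with no dependence on $d$ --- is that concentration is required only along a single fixed direction $\bvar*{v}<\bvar{w}(i)[\bot]>$ per iteration, so a per-round scalar union bound suffices and no covering of the sphere is needed. Lining up the $\tfrac{5}{2}$ and $\tfrac{2}{5}$ factors in Proposition \ref{prop:gradient-projection-lower-bound}, the $\epsilon$ target in Proposition \ref{prop:upper-bound-on-the-norm-of-statistic-relu-gradient}, and the concentration slack so that they telescope into a clean contradiction for all sufficiently small $\epsilon\in[0,1/e]$ is the bookkeeping I would do most carefully.
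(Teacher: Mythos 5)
Your proposal is correct and follows the same high-level strategy as the paper's proof: use Proposition \ref{prop:upper-bound-on-the-norm-of-statistic-relu-gradient} to guarantee some iterate $\bvar{w}(i^\ast)$ has small population gradient norm, prove by induction that the angle invariant $\theta(\bvar{v},\bvar{w}(i))\in[0,\pi/2)$ is maintained as long as all earlier iterates are ``bad,'' and then use the contrapositive of Proposition \ref{prop:gradient-projection-lower-bound} to force a contradiction at $i^\ast$ unless $\bvar{w}(i^\ast)$ is good. You correctly identify that the nontrivial content is the inductive maintenance of the angle invariant, and that the coupling between the population statement of Proposition \ref{prop:gradient-projection-lower-bound} and the empirical update of line \ref{line:psgd-gradient-update} is where concentration is needed, along a single direction per round.

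Two places where you depart from the paper's implementation, both harmless: (i) for angle preservation, the paper invokes a quantitative correlation-improvement lemma (Lemma \ref{lma:diakonikolas-correlation-improvement}) showing $\innerprod{\bvar{w}(i+1)}{\bvar{v}}\geq\innerprod{\bvar{w}(i)}{\bvar{v}}$, whereas you use the simpler sign-preservation argument that since $-\hat g\perp\bvar{w}(i)$ and $\innerprod{\bvar{v}}{-\hat g}\geq 0$, the inner product $\innerprod{\bvar{v}}{\bvar{u}(i+1)}$ stays strictly positive and renormalization preserves the sign; your version suffices for this lemma and is a mild simplification. (ii) For the concentration of the per-round directional statistic, the paper's Lemma \ref{lma:concentration-bound-on-the-projected-gradient} is proved by truncation plus Hoeffding and yields a sub-exponential tail $2e^{-t\sqrt{m}/16}$, whereas you observe the scalar is dominated in absolute value by a standard Gaussian and hence is sub-Gaussian with $O(1)$ variance proxy, giving a slightly tighter tail; both bounds land the empirical projection within the $\frac{2}{5}\epsilon\sqrt{\ln\epsilon^{-1}}$ budget under the stated $N$, so the final conclusion and sample-size requirements match the paper's.
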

        We defer the formal proof to Appendix \ref{sec:optimality-analysis-of-approximate-stationary-point}, but sketch the idea here. Observe that combining the negation of Proposition \ref{prop:gradient-projection-lower-bound} and Proposition \ref{prop:upper-bound-on-the-norm-of-statistic-relu-gradient} already yields Lemma \ref{lma:psgd-returns-at-least-one-optimal-solution}. So, all we need to do is make sure that the assumption $\theta(\bvar{v},\bvar{w})\in[0, \pi/2)$ in Proposition \ref{prop:gradient-projection-lower-bound} is satisfied. 
        
        Notice that, in the sequence of parameters $\bvarseq{w}(T)$ returned by Algorithm \ref{algo:projected-stochastic-gradient-descent-for-minimizing-convex-surrogate-loss}, every $\bvar{w}(i)$ must be significantly sub-optimal until we see a $\bvar{w}$ such that $\prob{\bvar{x}\in\hypothesis[\bvar{w}]\isect y = 1}\leq \frac{5}{2}\sbr*{\epsilon\sqrt{\ln\epsilon^{-1}}}^{1/2}$. If such a sub-optimal halfspace $\hypothesis[\bvar{w}(i)]$ also satisfies $\theta(\bvar{v}, \bvar{w}(i))\in[0,\pi/2)$, its projected negative gradient $\E{-\func{g}<\bvar{w}(i)>}$ must has positive projection on $\bvar*{v}<\bvar{w}[\bot]>$ by Proposition \ref{prop:gradient-projection-lower-bound}. Using such a $\E{-\func{g}<\bvar{w}(i)>}$ to update $\bvar{w}(i)$ in Algorithm \ref{algo:projected-stochastic-gradient-descent-for-minimizing-convex-surrogate-loss} will always produce $\theta(\bvar{v}, \bvar{w}(i + 1))\leq \theta(\bvar{v},\bvar{w}(i))$. Thus, by an inductive argument, we can show that the first $\bvar{w}(t)$ such that $\norm{\E{\func{g}<\bvar{w}(t)>}}<2>< \frac{2}{5}\epsilon\sqrt{\ln\epsilon^{-1}}$ must satisfy $\theta(\bvar{v},\bvar{w}(t))\in[0, \pi/2)$, which enables the application of Proposition \ref{prop:gradient-projection-lower-bound}.

        \subsection{Generalization To Sparse Linear Classeifiers}
        Although Algorithm \ref{algo:conditional-classification-with-homogeneous-halfspaces-for-finite-classes} only applies to finite $\conceptclass$, we can generalize our approach to work with infinite classes of classifiers whenever they are list-learnable (Definition \ref{def:robust-list-learning}); for example, sparse linear classifiers are list-learnable in polynomial time. We present the corresponding performance guarantee for Algorithm \ref{algo:conditional-classification-for-sparse-linear-classes} as follows while deferring the formal proof to Appendix \ref{sec:analysis-of-algorithm-3} due to the page limit.
        \begin{theorem}\label{thm:generalization-to-sparse-linear-classes}
            Let $\distr$ be a distribution on $\reals[d]\times\booldomain$ with standard normal $\bvar{x}$-marginal, and $\conceptclass$ be a class of sparse linear classifiers on $\reals[d]\times\booldomain$ with sparsity $s = \bigO{1}$. If there exists a unit vector $\bvar{v}\in\reals[d]$ such that, for some sufficiently small $\epsilon\in[0, 1/e]$, $\min_{c\in\conceptclass}\prob<\sbr{\bvar{x}, y}\sim\distr>{\bvar{x}\in\hypothesis[\bvar{v}]\isect c(\bvar{x})\neq y}\leq\epsilon$, then, with at most $\poly{d, 1/\epsilon, 1/\delta}$ examples, Algorithm \ref{algo:conditional-classification-for-sparse-linear-classes} will return a $\bvar{w}(c)$, with probability at least $1 - \delta$, such that $\prob<\sbr{\bvar{x}, y}\sim\distr>{\bvar{x}\in\hypothesis[\bvar{w}(c)]\isect c(\bvar{x})\neq y} = \bigO*{\sqrt{\epsilon}}$ and run in time $\poly{d, 1/\epsilon, 1/\delta}$. 
        \end{theorem}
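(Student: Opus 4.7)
The plan is to decompose Algorithm \ref{algo:conditional-classification-for-sparse-linear-classes} into its two stages and analyze each separately: the call to \textsc{SparseList} that yields a polynomial-size list $\conceptclass$ of candidate sparse linear classifiers, followed by the call to Algorithm \ref{algo:conditional-classification-with-homogeneous-halfspaces-for-finite-classes} (CCFC) on that list. The key intermediate claim to establish is that, with high probability, $\conceptclass$ contains some $h_i$ that is almost as good as the unknown optimal $c^{*}\in\conceptclass$ \emph{within the halfspace} $\hypothesis[\bvar{v}]$; once this is in hand, the finite-class guarantee of Theorem \ref{thm:main-theorem} applied to $\conceptclass$ finishes the argument, since the outer minimization in CCFC automatically inherits any bound achieved for $h_i$.

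To invoke Theorem \ref{thm:robust-learn-alg} I will recast conditional classification as a robust list-learning instance. Let $c^{*}\in\conceptclass$ attain the hypothesized optimum $\prob<\distr>{\bvar{x}\in\hypothesis[\bvar{v}]\isect c^{*}(\bvar{x})\neq y}\leq\epsilon$. Define the inlier distribution $\distr^{*}$ as $\distr$ conditioned on the event $\bvar{x}\in\hypothesis[\bvar{v}]\isect y = c^{*}(\bvar{x})$; by symmetry of the standard normal, $\prob<\distr>{\bvar{x}\in\hypothesis[\bvar{v}]}=1/2$, so this event has mass $\alpha\geq 1/2-\epsilon\geq 1/4$ in $\distr$. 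Let $\tilde{\distr}$ collect the remaining outlier mass so that $\distr=\alpha\distr^{*}+(1-\alpha)\tilde{\distr}$, and observe that labels in $\distr^{*}$ agree with $c^{*}\in\conceptclass$ by construction. Feeding \textsc{SparseList} the valid lower bound $\alpha=1/4$ together with list-learning tolerance $\epsilon$ and failure parameter $\delta/2$, Theorem \ref{thm:robust-learn-alg} returns in time $\poly{d,1/\epsilon,\log(1/\delta)}$ a list $\conceptclass$ of the same polynomial size containing some $h_i$ with $\prob<\distr^{*}>{h_i(\bvar{x})\neq c^{*}(\bvar{x})}\leq\epsilon$, except with probability $\delta/2$.

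Translating this back to $\distr$ gives $\prob<\distr>{h_i(\bvar{x})\neq c^{*}(\bvar{x}),\,\bvar{x}\in\hypothesis[\bvar{v}],\,y=c^{*}(\bvar{x})}\leq\alpha\epsilon\leq\epsilon/2$, and combining this with the optimality of $c^{*}$ through the decomposition $\{h_i\neq y,\bvar{x}\in\hypothesis[\bvar{v}]\}\subseteq\{h_i\neq c^{*},\,y=c^{*},\,\bvar{x}\in\hypothesis[\bvar{v}]\}\cup\{y\neq c^{*},\,\bvar{x}\in\hypothesis[\bvar{v}]\}$ yields $\prob<\distr>{\bvar{x}\in\hypothesis[\bvar{v}]\isect h_i(\bvar{x})\neq y}\leq 3\epsilon/2$. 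Hence the hypothesis of Theorem \ref{thm:main-theorem} holds for the finite class $\conceptclass$ with error parameter $3\epsilon/2$ and witness $\bvar{v}$, so applying that theorem with failure probability $\delta/2$ immediately yields that CCFC returns some $\bvar{w}(c)$ with $\prob<\distr>{\bvar{x}\in\hypothesis[\bvar{w}(c)]\isect c(\bvar{x})\neq y}=\bigO*{\sqrt{\epsilon}}$, using $\bigO*{d/\epsilon^{6}}$ samples and time $\bigO*{d\abs{\conceptclass}/\epsilon^{6}}=\poly{d,1/\epsilon,1/\delta}$.

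A final union bound over the two failure events controls the total probability by $\delta$, and both phases have sample and time complexity $\poly{d,1/\epsilon,1/\delta}$, giving the claimed result. The main technical subtlety lies in the first step: the inlier distribution $\distr^{*}$ is defined via the \emph{unknown} optimal pair $(c^{*},\bvar{v})$, so it cannot be constructed explicitly. The point is that the robust list-learner of Theorem \ref{thm:robust-learn-alg} is oblivious to this construction and only requires a valid lower bound on the mixture weight $\alpha$; the constant $\alpha=1/4$ works for all sufficiently small $\epsilon$ and keeps $1/\alpha$ absorbed into a constant, so no knowledge of $c^{*}$ or $\bvar{v}$ is needed to run the algorithm.
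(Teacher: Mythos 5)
Your proof is correct and follows the same high-level route as the paper's proof: decompose $\distr$ into an inlier/outlier mixture so that Theorem~\ref{thm:robust-learn-alg} produces a list containing a classifier $c'$ with small conditional error $\prob{\bvar{x}\in\hypothesis[\bvar{v}]\isect c'(\bvar{x})\neq y}$, then invoke Theorem~\ref{thm:main-theorem} on that finite list and union-bound over the two stages.

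The one place where you diverge from the paper is in how the inlier distribution $\distr^{*}$ is chosen, and your choice is actually cleaner. The paper declares $\distr^{*}$ to have full standard-normal $\bvar{x}$-marginal with labels generated by $c$, and asserts $\alpha\geq 1-\epsilon$; but the hypothesis only controls disagreement \emph{inside} $\hypothesis[\bvar{v}]$, so the asserted mixture weight does not follow from that argument, and a decomposition $\distr=\alpha\distr^{*}+(1-\alpha)\tilde{\distr}$ with both marginals forced to be standard normal requires a pointwise lower bound on $\prob{y=c(\bvar{x})\cond\bvar{x}}$ that the premises do not supply. Your alternative --- conditioning $\distr$ on the event $\{\bvar{x}\in\hypothesis[\bvar{v}]\isect y=c^{*}(\bvar{x})\}$ so that $\distr^{*}$ is a genuine component of $\distr$ with mass $\alpha\geq 1/2-\epsilon$, together with the observation that the list learner only needs a \emph{lower bound} on $\alpha$ (so $1/4$ can be hard-coded) --- makes the reduction to Definition~\ref{def:robust-list-learning} airtight. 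The subsequent event-decomposition $\{h_i\neq y\}\isect\hypothesis[\bvar{v}]\subseteq(\{h_i\neq c^{*},\,y=c^{*}\}\isect\hypothesis[\bvar{v}])\union(\{y\neq c^{*}\}\isect\hypothesis[\bvar{v}])$ and the resulting $3\epsilon/2$ bound (versus the paper's $2\epsilon$) both feed into Theorem~\ref{thm:main-theorem} to yield $\bigO*{\sqrt{\epsilon}}$, so the constants are immaterial. In short: same strategy, but your handling of the inlier distribution repairs a soft spot in the paper's own write-up.
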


    \section{Conditional Classification With General Halfspaces Is Hard}
    \label{sec:conditional-classification-with-general-halfspaces-is-hard}
        In this section, we show that it is computationally hard to obtain a small additive error for conditional classification with general halfspaces for any finite class of classifiers $\conceptclass$, even under distributions with standard normal $\bvar{x}$-marginals. Specifically, we show that, for each classifier $c\in\conceptclass$, approximating the optimal conditional classification loss over the class of general halfspaces on $\reals[d]$ with an \textbf{additive error} is at least as hard as achieving the same additive error for agnostic linear classification, which is known to be computationally hard \citep{diakonikolas2023near}. Further, we show that any $(1+\alpha)$-approximation algorithm for conditional classification implies an $(1 + \alpha)$-approximation algorithm for standard classification, down to polynomially small losses. (The converse is not known to hold.)
        
        The hardness of distribution-specific conditional classification is based on the sub-exponential hardness of ``continuous Learning With Errors'' (cLWE), which is a variant of the ``Learning With Errors'' (LWE) assumption. Informally speaking, in the problem of LWE, we are given labelled examples from two hypothesis cases. In one case, the labels are biased by some secret vector, while, in another case, the labels are generated uniformly at random. We wish to distinguish between these cases. We formally define the problem of LWE \citep{regev2009lattices}, following \citet{diakonikolas2023near}:
        \begin{definition}[Learning With Errors]
            For $m,d\in\naturals$, $q\in\reals<+>$, let $\distr<sample>,\distr<secret>,\distr<noise>$ be distributions on $\reals[d], \reals[d], \reals$ respectively. In the LWE$(m, \distr<sample>,\distr<secret>,\distr<noise>, \mathrm{mod}_q)$ problem, with $m$ independent samples $\lbr*{(\bvar{x}(1),\lvar{y}(1)), \ldots, (\bvar{x}(m), \lvar{y}(m))}$, we want to distinguish between the following two cases:
            \begin{compactitem}
                \item \textbf{Alternative hypothesis}: each $(\bvar{x}(i), \lvar{y}(i))$ is generated as $\lvar{y}(i) = \mathrm{mod}_q(\innerprod*{\bvar{x}(i)}{\bvar{s}} + z)$, where $\bvar{x}(i)\sim\distr<sample>, \bvar{s}\sim\distr<secret>, z\sim\distr<noise>$.
                \item \textbf{Null hypothesis}: each $\lvar{y}(i)$ is sampled uniformly at random on the support of its marginal distribution in the alternative hypothesis, independent of $\bvar{x}(i)\sim\distr<sample>$.
            \end{compactitem}
        \end{definition}
        
        An algorithm is said to be able to \emph{solve the LWE problem with $\Delta$ advantage} if the probability that the algorithm outputs ``alternative hypothesis'' is $\Delta$ larger than the probability that it outputs ``null hypothesis'' when the given data is sampled from the alternative hypothesis distribution.
        
        Let $\sphere[d -1]:=\lbr{\bvar{x}\in \reals[d] \cond \norm{\bvar{x}}_2= 1}$, $\reals<q>:=[0, q)$, and $\mathrm{mod}_q:\reals[d]\rightarrow \reals<q>[d]$ to be the function that applies $\mathrm{mod}_q$ operation on each coordinate of $\bvar{x}$. Essentially, the hardness of cLWE is based on the sub-exponential hardness of LWE (see Appendix \ref{sec:analysis-of-hardness-results}). We formally state the assumption of sub-exponential hardness of cLWE as follows. 
        
        \begin{assumption}[\citep{gupte2022continuous,diakonikolas2023near} Sub-exponential cLWE Assumption]\label{asp:sub-exponential-assumption-of-clwe}
            For any $d\in\naturals$, any constants $\kappa\in\naturals, \alpha\in(0, 1), \beta\in\reals<+>$ and any $\log^\beta d\leq k\leq Cd$ where $C>0$ is a sufficiently small universal constant, the problem LWE$(d^{\bigO{k^\alpha}}, \gaussian[d], \sphere[d-1], \gaussian[][0][\sigma^2], \mathrm{mod}_T)$ over $\reals[d]$ with $\sigma\geq k^{-\kappa}$ and $T = 1/C'\sqrt{k\log d}$, where $C'>0$ is a sufficiently large universal constant, cannot be solved in time $d^{\bigO{k^\alpha}}$ with $d^{-\bigO{k^\alpha}}$ advantage.
        \end{assumption} 
    
        For simplicity, we define $y \equiv \indicator[c(\bvar{x}) \neq \lvar{y} ']$ for $(\bvar{x},y')\sim\distr '$ and construct the distribution $(\bvar{x}, y)\sim\distr$. Notice that, in agnostic settings, since $\distr'$ is worst case, $\distr$ is also worst case. Therefore, this replacement does not affect the difficulty of the problems we consider.

        Normally, for the problem of agnostic classification, one would consider its loss function to be the expected disagreement between the classifier and the labelling. However, it is more convenient for us to consider a labelling $y = 1$ as an ``occurrence of an error'' and, hence, define the loss function in terms of agreement to compare with the conditional classification loss.  Specifically, for any binary classifier as a subset $S\subseteq\reals[d]$ and any distribution $\distr$ on $\reals[d]\times\booldomain$, we define the classification loss:
        \begin{equation}
            \err<\distr>[S] = \prob<(\bvar{x}, \lvar{y})\sim\distr>{y = \indicator[\bvar{x}\in S]}.\label{eq:definition-of-classification-loss}
        \end{equation}
        Note that this definition of classification loss is essentially the same as the traditional one defined in terms of disagreement since we can convert from one to another by simply negating the labelling.

        Analogously, for any binary classifiers as subsets $S, T\subseteq\reals[d]$ and any distribution $\distr$ on $\reals[d]\times\booldomain$, we denote the conditional classification loss by 
        \begin{equation}
            \err<\distr| T>[S] = \prob<(\bvar{x}, \lvar{y})\sim\distr>{y = \indicator[\bvar{x}\in S]\cond \bvar{x}\in T}.\label{eq:notation-of-conditional-classification-loss}
        \end{equation}
        For simplicity, we write $\err<\distr| T>$ instead of $\err<\distr| T>[S]$ when $S\equiv T$.

        We state our distribution-specific hardness result for conditional classification as Theorem \ref{thm:hardness-of-conditional-classification}.
        \begin{theorem}[Hardness Of Conditional Classification]\label{thm:hardness-of-conditional-classification}
            Let $\distr$ be any distribution on $\reals[d]\times\booldomain$ with standard normal $\bvar{x}$-marginals, $\hypothesisclass$ be the class of halfspaces on $\reals[d]$, and define $\hypothesisclass<\distr>[a,b] = \lbr{\hypothesis<t>[\bvar{w}]\in\hypothesisclass\cond \prob<\bvar{x}\sim\distr<\bvar{x}>>{\bvar{x}\in\hypothesis<t>[\bvar{w}]}\in[\lvar{a},\lvar{b}]}$ for any $0\leq a\leq b\leq 1$. Under Assumption \ref{asp:sub-exponential-assumption-of-lwe}, for any constant $\alpha\in(0,2)$, $\gamma > 1/2$ and any $c/\sqrt{d\log d}\geq \epsilon\leq1/\log^\gamma d$ where $c$ is a sufficiently large constant, there is no algorithm that can find a halfspace $\hypothesis<t'>[\bvar{w}]\in\hypothesisclass<\distr>[a,b]$ such that $\err<\distr|\hypothesis<t'>[\bvar{w}]> \leq \min_{\hypothesis<t>[\bvar{u}]\in\hypothesisclass<\distr>[a,b]}\err<\distr|\hypothesis<t>[\bvar{u}]> + \epsilon$ and runs in time $d^{\bigO{1/\sbr*{\epsilon\sqrt{\log d}}^\alpha}}$.
        \end{theorem}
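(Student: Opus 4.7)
The plan is to reduce agnostic linear classification under standard normal marginals — which is known to be sub-exponentially hard under the cLWE assumption by \citet{diakonikolas2023near} — to conditional classification on the same distribution. The workhorse is a simple algebraic identity I would state and prove as Lemma \ref{lma:classification-error-decomposition}: for any distribution $\distr$ on $\reals[d]\times\booldomain$ with standard normal $\bvar{x}$-marginal and any halfspace $h$ with population $\alpha = \prob<\bvar{x}\sim\gaussian[d]>{\bvar{x}\in h}$,
$$\err<\distr>[h] = 2\alpha\cdot\err<\distr|h> + \sbr*{1 - \alpha - \prob{y = 1}}.$$
The proof is direct: expand the agreement probability $\prob{y = \indicator[\bvar{x}\in h]}$ over the partition $\lbr{\bvar{x}\in h}\cup\lbr{\bvar{x}\notin h}$ and use total probability, $\prob{y=1} = \alpha\prob{y=1\cond \bvar{x}\in h} + (1-\alpha)\prob{y=1\cond \bvar{x}\notin h}$, to eliminate the conditional term on $\bvar{x}\notin h$. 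The identity reveals that, once the halfspace population is fixed at $\alpha$, the agnostic loss is an affine function of the conditional loss with positive slope $2\alpha$ and an additive constant independent of $h$.

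With this decomposition in hand, the reduction is clean when $[a,b]$ pins down a single population. Taking $[a, b] = \lbr{1/2}$, the class $\hypothesisclass<\distr>[a, b]$ is precisely the set of homogeneous halfspaces; the identity collapses to $\err<\distr>[h] = \err<\distr|h> + \sbr*{1/2 - \prob{y=1}}$, and any $\epsilon$-additive approximation to the conditional loss over this class translates directly to an $\epsilon$-additive approximation to the agnostic loss over the same class (the constant offset, which depends only on $\prob{y=1}$, cancels between $\opt$ and any candidate). Running a putative conditional-classification algorithm on the hard agnostic instance from \citet{diakonikolas2023near} therefore refutes Assumption \ref{asp:sub-exponential-assumption-of-clwe} in the stated time and advantage regime, giving the theorem. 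For intervals $[a, b]$ with $a < b$, the slope $2\alpha$ and offset vary across candidate halfspaces, introducing an $O(b - a)$ additive slack in the reduction; I would absorb this either by taking $b - a = O(\epsilon)$ or by passing to a sub-class of halfspaces of a single representative population $\alpha^*\in[a, b]$, which suffices because the learner's output is constrained to the same class as the optimum.

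The main obstacle is reconciling the population constraint $\hypothesisclass<\distr>[a, b]$ with the specifics of the cLWE-based hard instance, whose planted halfspace has threshold and normal determined by the cLWE secret. For $[a, b]$ containing $1/2$, this is resolved by arranging the construction so that the planted halfspace is homogeneous; for other $[a, b]$, the construction's threshold and scaling must be tuned to land the planted population inside $[a, b]$ while preserving the structure of the cLWE-to-agnostic reduction, a verification I expect to be the most technical part of the argument. A secondary matter, needed for the paper's broader claim that conditional approximation is as hard as agnostic approximation in both additive and multiplicative form, is converting the additive lower bound to a multiplicative one: this follows by the standard observation that in the regime $\opt = 1/\poly{d}$, any $(1+\tau)$-multiplicative approximation implies an additive $\tau\cdot\opt$ approximation, which feeds back into the additive hardness proved above.
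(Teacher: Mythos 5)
Your decomposition lemma matches the paper's Lemma~\ref{lma:classification-error-decomposition} exactly, and the high-level plan (reduce agnostic linear classification to conditional classification, then invoke the cLWE-based hardness of the former) is the paper's plan. But the reduction itself diverges, and the divergence is where the gap sits.

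The paper's Proposition~\ref{prop:reducing-linear-classification-to-conditional-classification} is a \emph{black-box} reduction: assume a conditional-classification subroutine $\algo<1>[\epsilon,\delta,a,b]$ that works for \emph{whatever} $[a,b]$ it is handed, and build an agnostic classifier $\algo<2>$ by calling $\algo<1>$ on \emph{every} width-$\epsilon$ interval $[(k-1)\epsilon,k\epsilon]$, $k=1,\ldots,\ceil{1/\epsilon}$, then selecting the returned halfspace with the smallest empirical classification error. One of these intervals contains $p^*=\prob{\bvar{x}\in\subsets[*]}$ for the agnostic optimum $\subsets[*]$, and on that call $\subsets[*]$ is a feasible competitor in $\hypothesisclass<\distr>[(k-1)\epsilon,k\epsilon]$; the decomposition lemma then transfers the conditional guarantee to an agnostic guarantee with only $O(\epsilon)$ slack. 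Crucially, this never needs to know $p^*$, never touches the internals of the \citet{diakonikolas2023near} hard instance, and works for any class closed under complement.

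Your version drops the sweep. Your ``clean case'' fixes $[a,b]=\{1/2\}$ and concludes you get additive hardness of agnostic classification \emph{over homogeneous halfspaces}; but Lemma~\ref{lma:distribution-specific-hardness-for-agnostic-learning} gives hardness of competing with the best halfspace over the \emph{full} class $\hypothesisclass$, not the best homogeneous one, so no contradiction is produced. (Indeed, the paper's own Section~\ref{sec:conditional-classification-with-homogeneous-halfspaces} gives a nontrivial positive result precisely because the selector is homogeneous; the optimum in the cLWE construction is not.) Your fix for general $[a,b]$ — retune the cLWE instance's threshold and scaling so the planted halfspace lands in $[a,b]$ — is a white-box modification of the \citet{diakonikolas2023near} construction that you explicitly flag as unverified, and it is unnecessary: the sweep avoids it entirely. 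Even if the retuning were carried out, you would still need to argue that the retuned instance remains hard in the required time/advantage regime, which is nontrivial and not sketched. So the missing idea is the enumeration over population intervals; without it the reduction either requires knowledge of $p^*$ or an unproved modification of the hardness instance. Your final remark on deriving multiplicative hardness from additive hardness is also weaker than the paper's Claim~\ref{clm:reduction-in-multiplicative-form}, which proves a direct multiplicative-to-multiplicative reduction using the two complementary forms of Lemma~\ref{lma:classification-error-decomposition} (the complement $\subsets*$ is what makes the case analysis go through), rather than passing through the additive statement.
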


        Theorem \ref{thm:hardness-of-conditional-classification} is actually a simple consequence of Proposition \ref{prop:reducing-linear-classification-to-conditional-classification} and Lemma \ref{lma:distribution-specific-hardness-for-agnostic-learning}, where the former one shows that conditional classification is at least as hard as agnostic classification and the latter one states the hardness of agnostically learning halfspaces. 
        
        Our main contribution is Proposition \ref{prop:reducing-linear-classification-to-conditional-classification}, but before getting into it, we first show a simple but critical observation that reveals the relationship between $\err<\distr>[S]$ and $\err<\distr| S>$. That is, the loss of agnostic classification can be explictly expressed by the loss of conditional classification.
        \begin{lemma}[Classification Error Decomposition]\label{lma:classification-error-decomposition}
            Let $\distr$ be any distribution on $\reals[d]\times\booldomain$ and $\subsets$ be any subset of $\reals[d]$, there are $\err<\distr>[\subsets] =2\err<\distr|\subsets>\prob<\distr>{\bvar{x}\in\subsets} + \prob<\distr>{y = 0} - \prob<\distr>{\bvar{x}\in\subsets}$ as well as $\err<\distr>[\subsets] =2\err<\distr|\subsets*>[\subsets]\prob<\distr>{\bvar{x}\in\subsets*} + \prob<\distr>{y = 1} - \prob<\distr>{\bvar{x}\in\subsets*}$.
        \end{lemma}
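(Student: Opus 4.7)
The plan is to unwind the definitions directly. The key observation is that the event $\{y = \indicator[\bvar{x}\in S]\}$ partitions along the event $\{\bvar{x}\in S\}$ versus its complement $\{\bvar{x}\in S^{c}\}$: on $S$ the agreement event becomes $\{y=1\}$, while on $S^{c}$ it becomes $\{y=0\}$. So I would begin by writing
\[
\err<\distr>[S] \;=\; \prob<\distr>{y=1,\ \bvar{x}\in S} \;+\; \prob<\distr>{y=0,\ \bvar{x}\in S*}.
\]
Under the convention that $\err<\distr|S>$ means $\err<\distr|S>[S]$, conditioning immediately yields $\prob<\distr>{y=1,\bvar{x}\in S}=\err<\distr|S>\cdot\prob<\distr>{\bvar{x}\in S}$, and similarly $\prob<\distr>{y=0,\bvar{x}\in S*}=\err<\distr|S*>[S]\cdot\prob<\distr>{\bvar{x}\in S*}$.

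For the first identity, I would eliminate the second joint probability in favor of $\prob<\distr>{y=0}$ and $\prob<\distr>{\bvar{x}\in S}$. Using the marginal decomposition $\prob<\distr>{y=0}=\prob<\distr>{y=0,\bvar{x}\in S}+\prob<\distr>{y=0,\bvar{x}\in S*}$, and the complementary relation $\prob<\distr>{y=0,\bvar{x}\in S}=\prob<\distr>{\bvar{x}\in S}-\prob<\distr>{y=1,\bvar{x}\in S}=\prob<\distr>{\bvar{x}\in S}-\err<\distr|S>\prob<\distr>{\bvar{x}\in S}$, I obtain
\[
\prob<\distr>{y=0,\bvar{x}\in S*} \;=\; \prob<\distr>{y=0} - \prob<\distr>{\bvar{x}\in S} + \err<\distr|S>\prob<\distr>{\bvar{x}\in S}.
\]
Substituting back into the initial decomposition of $\err<\distr>[S]$ and collecting the two copies of $\err<\distr|S>\prob<\distr>{\bvar{x}\in S}$ gives the first formula.

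The second identity follows by the symmetric maneuver: eliminate $\prob<\distr>{y=1,\bvar{x}\in S}$ instead, using $\prob<\distr>{y=1}=\prob<\distr>{y=1,\bvar{x}\in S}+\prob<\distr>{y=1,\bvar{x}\in S*}$ together with $\prob<\distr>{y=1,\bvar{x}\in S*}=\prob<\distr>{\bvar{x}\in S*}-\err<\distr|S*>[S]\prob<\distr>{\bvar{x}\in S*}$. I do not anticipate a genuine obstacle here; the whole argument is a book-keeping exercise once the definitions $\err<\distr|S>=\prob<\distr>{y=1\mid\bvar{x}\in S}$ and $\err<\distr|S*>[S]=\prob<\distr>{y=0\mid\bvar{x}\in S*}$ are made explicit. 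The only thing to be careful about is the notational asymmetry between $\err<\distr|S>$ (classifier and conditioning set coincide) and $\err<\distr|S*>[S]$ (they are complements), which is exactly what produces the $\indicator[\bvar{x}\in S]=0$ evaluation on $S*$ and hence the $y=0$ event that appears in the second summand.
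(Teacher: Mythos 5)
Your proof is correct and takes essentially the same route as the paper's: both begin with the partition $\err<\distr>[\subsets]=\prob{y=1,\bvar{x}\in\subsets}+\prob{y=0,\bvar{x}\in\subsets*}$, then eliminate one joint probability via the law of total probability and collect the resulting duplicate term. The only cosmetic difference is that you pass to the conditional form $\err<\distr|\subsets>\prob{\bvar{x}\in\subsets}$ immediately, while the paper keeps everything in joint-probability form until the final line; the book-keeping is otherwise identical.
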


        Due to page limits, we defer its proof to Appendix \ref{sec:analysis-of-hardness-results}. Lemma \ref{lma:classification-error-decomposition} is a powerful result since it allows us to establish a reduction from classification to conditional classification. 
        
        Briefly speaking, if we know $\prob{\bvar{x}\in\subsets[*]}$ for some optimal solution $\subsets[*]$ to the agnostic classification problem, we can approximate $\err<\distr>[\subsets[*]]$ by approximating its conditional classification loss, i.e.\ $\err<\distr|\subsets[*]>$. Even though we do not know $\prob{\bvar{x}\in\subsets[*]}$, we can guess a small range containing $\prob{\bvar{x}\in\subsets[*]}$, and enforce such a constraint just as in Definition \ref{def:agnostic-conditional-classification}. Then, we sweep over all such small intervals and one of the instances being solved must include $\prob{\bvar{x}\in\subsets}$. Once we take these intervals small enough, it won't incur a significant error. We use this strategy to prove Proposition \ref{prop:reducing-linear-classification-to-conditional-classification}, but the formal proof is deferred to Appendix \ref{sec:analysis-of-hardness-results} due to the page limit.

        \begin{proposition}[Reducing Classification To Conditional Classification]\label{prop:reducing-linear-classification-to-conditional-classification}
            Let $\distr$ be any distribution on $\reals[d]\times\booldomain$, $\hypothesisclass$ be any subset of the power set of $\reals[d]$ closed under complement, and define $\hypothesisclass<\distr>[a,b] = \lbr{\subsets\in\hypothesisclass\cond \prob<\distr>{\bvar{x}\in\subsets}\in[\lvar{a},\lvar{b}]}$ for any $0\leq a\leq b\leq 1$. For any $0\leq\lvar{a}\leq\lvar{b}\leq1$ and $\epsilon,\delta > 0$, given sample access to $\distr$, if there exists an algorithm $\algo<1>[\epsilon, \delta, a,b]$ runs in time $\poly{d, 1/\epsilon,1/\delta}$, and outputs a subset $\subsets<1>\in\hypothesisclass<\distr>[a,b]$ such that $\err<\distr|\subsets<1>> \leq \min_{\subsets\in\hypothesisclass<\distr>[a,b]}\err<\distr|\subsets> + \epsilon$ with probability as least $1 - \delta$, there exists another algorithm $\algo<2>[\epsilon, \delta]$, runs in time $\poly{d, 1/\epsilon,1/\delta}$, and outputs a subset $\subsets<2>\in\hypothesisclass$ such that $\err<\distr>[\subsets<2>] \leq \min_{\subsets\in\hypothesisclass}\err<\distr>[\subsets] + 6\epsilon$ with probability at least $1 - \delta$.
        \end{proposition}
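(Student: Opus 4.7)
The plan is to reduce standard classification to a short sequence of conditional-classification queries, one for each plausible value of the population mass $p^* := \prob<\distr>{\bvar{x}\in\subsets[*]}$ of an optimal classifier $\subsets[*]\in\argmin_{\subsets\in\hypothesisclass}\err<\distr>[\subsets]$. Because $p^*$ is not known a priori, the algorithm $\algo<2>$ partitions $[0,1]$ into $K=\lceil 1/\epsilon\rceil$ consecutive intervals $[a_i,b_i]$ of width $\epsilon$ and, on each interval, invokes $\algo<1>[\epsilon,\delta/(2K),a_i,b_i]$ to obtain a candidate $\subsets[(i)]\in\hypothesisclass<\distr>[a_i,b_i]$. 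Then, on a fresh sample of size $O(\log(K/\delta)/\epsilon^2)$, Hoeffding's inequality gives an estimate of every $\err<\distr>[\subsets[(i)]]$ to within $\epsilon/2$ simultaneously with probability at least $1-\delta/2$; output $\subsets<2>$ as the candidate minimizing the empirical loss.

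For correctness, let $i^*$ index an interval containing $p^*$. Then $\subsets[*]\in\hypothesisclass<\distr>[a_{i^*},b_{i^*}]$, so the guarantee of $\algo<1>$ on this interval yields $q_1 := \err<\distr|\subsets[(i^*)]> \leq \err<\distr|\subsets[*]>+\epsilon$, and $p_1 := \prob<\distr>{\bvar{x}\in\subsets[(i^*)]}$ satisfies $|p_1-p^*|\leq\epsilon$ by the interval's width. Substituting both $\subsets[(i^*)]$ and $\subsets[*]$ into the first identity of Lemma \ref{lma:classification-error-decomposition} and subtracting rearranges the difference into $2p_1(q_1-q^*) + (2q^*-1)(p_1-p^*)$, where $q^* := \err<\distr|\subsets[*]>$. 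Using the trivial ranges $p_1\in[0,1]$, $q_1-q^*\leq\epsilon$, $|2q^*-1|\leq 1$, and $|p_1-p^*|\leq\epsilon$, this is at most $3\epsilon$. The empirical-selection step then adds at most one further $\epsilon$ via the standard two-sided Hoeffding comparison between $\subsets<2>$ and $\subsets[(i^*)]$, so $\err<\distr>[\subsets<2>]\leq\err<\distr>[\subsets[*]]+4\epsilon$, comfortably within the advertised $6\epsilon$ slack.

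The core obstacle is precisely the unknown population mass $p^*$: a single call to $\algo<1>$ with a mis-specified $[a,b]$ could exclude $\subsets[*]$ from the feasible set entirely. The $\epsilon$-width sweep costs a factor of $K=O(1/\epsilon)$ in oracle calls but keeps both sources of slack in Lemma \ref{lma:classification-error-decomposition}---the conditional-loss error from $\algo<1>$ and the population discrepancy $|p_1-p^*|$---controlled at order $\epsilon$. A union bound over the $K$ invocations of $\algo<1>$ (each failing with probability $\delta/(2K)$) and the Hoeffding event (failing with probability $\delta/2$) caps the total failure probability at $\delta$, while the sample and time budget remains $\poly{d,1/\epsilon,1/\delta}$. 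The closure of $\hypothesisclass$ under complement stated in the hypothesis is not directly needed by the reduction itself; it is instead a natural property of the target halfspace class that will surface in the subsequent hardness corollary.
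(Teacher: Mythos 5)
Your proof is correct and runs the same algorithm as the paper's: partition $[0,1]$ into $K=\lceil 1/\epsilon\rceil$ width-$\epsilon$ intervals, invoke $\algo<1>$ on each with rescaled failure probability, and output the empirical minimizer over a fresh sample via a union-bounded tail bound. Where you diverge is the per-interval error accounting. The paper upper-bounds $\err<\distr>[\subsets<1>]$ through a one-sided chain of relaxations of Lemma~\ref{lma:classification-error-decomposition}, paying $2\epsilon$ for the $\algo<1>$ guarantee, another $2\epsilon$ from bounding $\err<\distr|\subsets[*]>\leq 1$, and $\epsilon$ from the population discrepancy, reaching $5\epsilon$ before the empirical-selection step and hence $6\epsilon$ overall. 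You instead subtract the two instances of the decomposition and factor the difference as $2p_1(q_1-q^*)+(2q^*-1)(p_1-p^*)$; since $|2q^*-1|\leq 1$ replaces the paper's separate bounds of $2$ and $1$ on the two contributions, you save $2\epsilon$, landing at $3\epsilon$ per interval and $4\epsilon$ overall --- a slightly tighter constant than the proposition states. Your observation that closure under complement is unused here (it is only needed for the multiplicative reduction in Claim~\ref{clm:reduction-in-multiplicative-form}) also matches the paper's own proof.
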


        Furthermore, the following distribution-specific hardness result states that agnostically learning halfspaces up to small additive error is computationally hard.
        \begin{lemma}[Corollary 3.2 of \citet{diakonikolas2023near}]\label{lma:distribution-specific-hardness-for-agnostic-learning}
            Let $\distr$ be any distribution on $\reals[d]\times\booldomain$ with standard normal $\bvar{x}$-marginals, and $\hypothesisclass$ be the class of halfspaces on $\reals[d]$. Under Assumption \ref{asp:sub-exponential-assumption-of-lwe}, for any constant $\alpha\in(0,2)$, $\gamma > 1/2$ and any $c/\sqrt{d\log d}\geq \epsilon\leq1/\log^\gamma d$ where $c$ is a sufficiently large constant, there is no algorithm that can find a halfspace $\hypothesis<t'>[\bvar{v}]\in\hypothesisclass$ such that $\err<\distr>[\hypothesis<t'>[\bvar{v}]] \leq \min_{\hypothesis<t>[\bvar{u}]\in\hypothesisclass}\err<\distr>[\hypothesis<t>[\bvar{u}]] + \epsilon$ and runs in time $d^{\bigO{1/\sbr*{\epsilon\sqrt{\log d}}^\alpha}}$.
        \end{lemma}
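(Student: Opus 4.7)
The statement is cited as Corollary 3.2 of \citet{diakonikolas2023near}, so strictly speaking one may invoke it as a black box. Nonetheless, I would recover it by the standard cLWE-to-agnostic-halfspace reduction. The plan is to reduce from the cLWE problem in Assumption \ref{asp:sub-exponential-assumption-of-clwe}. Given samples $(\bvar{x}(i), \lvar{y}(i))$ with $\bvar{x}(i)\sim\gaussian[d]$ and, under the alternative hypothesis, $\lvar{y}(i) = \mathrm{mod}_T(\innerprod{\bvar{x}(i)}{\bvar{s}} + z(i))$ for $\bvar{s}\in\sphere[d-1]$ and $z(i)\sim\gaussian[][0][\sigma^2]$, convert each sample into a labeled Gaussian example $(\bvar{x}(i), b(i))$ via a fixed thresholding map $\phi\colon \reals<T>\to\booldomain$ that assigns $+1$ on thin intervals centered at multiples of $T$ and $-1$ elsewhere.

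The key steps I would carry out in order are: first, set the cLWE parameter $k$ so that $k^{\alpha/2}\asymp 1/(\epsilon\sqrt{\log d})$, which aligns the cLWE running-time lower bound $d^{\bigO{k^\alpha}}$ with the target $d^{\bigO{1/\sbr{\epsilon\sqrt{\log d}}^\alpha}}$. Second, verify that the parameter window $\log^\beta d \leq k \leq C d$ in the assumption translates to precisely the $\epsilon$-range $c/\sqrt{d\log d}\leq\epsilon\leq 1/\log^\gamma d$ claimed in the lemma. Third, show that under the alternative hypothesis, a halfspace whose normal is a suitable rescaling of $\bvar{s}$, combined with a well-chosen bias, achieves classification error at most $1/2 - c'\epsilon$ on the constructed distribution; this uses concentration of $\mathrm{mod}_T(\innerprod{\bvar{x}}{\bvar{s}}+z)$ near $0$ together with $T = \Theta(1/\sqrt{k\log d})$. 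Fourth, show that under the null hypothesis the labels are independent of $\bvar{x}$, so every halfspace has error $1/2 \pm o(\epsilon)$ on a sufficiently large sample.

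Once these are in place, the contradiction is immediate: an algorithm that finds $\hypothesis<t'>[\bvar{v}]$ with $\err<\distr>[\hypothesis<t'>[\bvar{v}]]\leq\min_{\hypothesis<t>[\bvar{u}]\in\hypothesisclass}\err<\distr>[\hypothesis<t>[\bvar{u}]] + \epsilon$ in time $d^{\bigO{1/\sbr{\epsilon\sqrt{\log d}}^\alpha}}$ would output a halfspace of error below $1/2 - c'\epsilon/2$ in the alternative case and above $1/2 - c'\epsilon/2$ in the null case, yielding $\Omega(1)$ advantage for cLWE and violating Assumption \ref{asp:sub-exponential-assumption-of-clwe}.

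The main obstacle I expect is Step 3: exhibiting a halfspace with an $\Omega(\epsilon)$ gap over the trivial error of $1/2$ in the alternative hypothesis. This requires a careful Hermite-analytic calculation relating the Gaussian moments of $\phi\circ \mathrm{mod}_T$ to the modulus $T$ and noise $\sigma$, and showing that the resulting low-degree Fourier coefficient in direction $\bvar{s}$ is of the right magnitude. All other steps (parameter balancing, the trivial $1/2$ bound under the null hypothesis, and standard sample-complexity/Chernoff considerations when replacing expectations by empirical quantities) are essentially bookkeeping. For a complete derivation one should consult \citet{diakonikolas2023near}, whose analysis supplies precisely these Hermite estimates.
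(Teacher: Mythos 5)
The paper never proves this lemma; it is cited verbatim as Corollary~3.2 of \citet{diakonikolas2023near} and used as a black box, so there is no ``paper's own proof'' to compare against. Your sketch is a faithful high-level reconstruction of the argument in that reference: the periodic Boolean thresholding of $\mathrm{mod}_T(\innerprod{\bvar{x}}{\bvar{s}}+z)$, the $\Omega(\epsilon)$ gap below $1/2$ achieved by a halfspace normal to $\bvar{s}$ in the alternative case, the trivial $1/2$ behavior under the null, and the parameter translation $k^{\alpha/2}\asymp 1/(\epsilon\sqrt{\log d})$ all match the structure of the cited proof. You are also right to single out Step~3 (the Hermite-analytic correlation bound) as the genuinely technical piece; the rest is bookkeeping. (You also correctly read $c/\sqrt{d\log d}\leq\epsilon\leq 1/\log^\gamma d$ as the intended $\epsilon$-range; the inequality direction in the paper's statement is a typo.)

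One small discrepancy worth flagging: the lemma as stated invokes Assumption~\ref{asp:sub-exponential-assumption-of-lwe} (discrete LWE over $\mathbb{Z}_q$), whereas your reduction begins from the cLWE problem of Assumption~\ref{asp:sub-exponential-assumption-of-clwe}. The cited result actually routes LWE~$\to$~cLWE~$\to$~agnostic-halfspace hardness, with the first arrow coming from the Gupte--Vafa--Vaikuntanathan reduction; the paper alludes to this (``the hardness of cLWE is based on the sub-exponential hardness of LWE'') but your sketch only covers the second arrow. This does not affect correctness---you explicitly defer to \citet{diakonikolas2023near} for the full derivation---but to recover the lemma exactly as stated you would also need to import the LWE~$\to$~cLWE step.
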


        Since Proposition \ref{prop:reducing-linear-classification-to-conditional-classification}  holds for halfspaces on $\reals[d]$,  conditional learning has at least the same hardness by combining Proposition \ref{prop:reducing-linear-classification-to-conditional-classification} and Lemma \ref{lma:distribution-specific-hardness-for-agnostic-learning}.

        Analogously, a reduction in multiplicative form can also be obtained using a similar analysis to that in the proof of Proposition \ref{prop:reducing-linear-classification-to-conditional-classification}. In particular, we show that if there exists a multiplicative approximation algorithm for conditional classification with factor $1+\alpha$, there must exist another multiplicative approximation algorithm for classification in agnostic setting with the same factor $1 + \alpha$.

        \begin{claim}[Reduction In Multiplicative Form]\label{clm:reduction-in-multiplicative-form}
            Let $\distr$ be any distribution on $\reals[d]\times\booldomain$, $\hypothesisclass$ be any subset of the power set of $\reals[d]$ closed under complement, and define $\hypothesisclass<\distr>[a,b] = \lbr{\subsets\in\hypothesisclass\cond \prob<\distr>{\bvar{x}\in\subsets}\in[\lvar{a},\lvar{b}]}$ for any $0\leq a\leq b\leq 1$. For any $0\leq\lvar{a}\leq\lvar{b}\leq1$, $\alpha,\epsilon,\delta > 0$, given sample access to $\distr$, if there exists an algorithm $\algo<1>[\alpha, \delta, a,b]$, runs in time $\poly{d, 1/\alpha,1/\delta}$, and outputs a subset $\subsets<1>\in\hypothesisclass<\distr>[a,b]$ such that $\err<\distr|\subsets<1>> \leq (1+\alpha)\min_{\subsets\in\hypothesisclass<\distr>[a,b]}\err<\distr|\subsets>$ with probability as least $1 - \delta$, there exists another algorithm $\algo<2>[\alpha,\epsilon, \delta]$, runs in time $\poly{d, 1/\alpha,1/\epsilon,1/\delta}$, and outputs a subset $\subsets<2>\in\hypothesisclass$ such that $\err<\distr>[\subsets<2>] \leq (1 + \alpha) \sbr*{\min_{\subsets\in\hypothesisclass}\err<\distr>[\subsets] + 4\epsilon}$ with probability at least $1 - \delta$.
        \end{claim}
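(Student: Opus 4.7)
The plan is to reduce classification to conditional classification by sweeping over candidate intervals for the measure $p^* = \prob<\distr>{\bvar{x}\in\subsets[*]}$ of an optimal classifier $\subsets[*] \in \argmin_{\subsets\in\hypothesisclass}\err<\distr>[\subsets]$, and invoking $\algo<1>$ twice per interval -- once on $\distr$ and once on the label-flipped distribution $\tilde\distr$ with $\tilde y = 1 - y$. Let $\opt_c$ denote the minimum classification loss, $e^* = \err<\distr|\subsets[*]>$, $e_c^* = \err<\distr|\subsets*[*]>[{\subsets[*]}]$, and $q = \prob<\distr>{y=0}$. Lemma~\ref{lma:classification-error-decomposition} provides two symmetric identities
\[
\opt_c = 2 p^* e^* + q - p^* = 2(1 - p^*) e_c^* + p^* - q,
\]
so either $2 p^* e^* \leq \opt_c$ (when $p^* \leq q$) or $2(1 - p^*) e_c^* \leq \opt_c$ (when $p^* \geq q$); this dichotomy is precisely what tames the $\alpha(p^* - q)$ residual that arises in any multiplicative comparison.

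Concretely, for each of $O(1/\epsilon)$ intervals $[a_i, b_i]$ of width $\epsilon$ tiling $[0,1]$, $\algo<2>$ would produce two candidates: (i) $\subsets<1>(i)$, obtained by calling $\algo<1>$ on $\distr$ with interval $[a_i, b_i]$; and (ii) $\subsets<2>(i)$, obtained by calling $\algo<1>$ on $\tilde\distr$ with interval $[1 - b_i, 1 - a_i]$ and complementing the output (which remains in $\hypothesisclass$ by closure under complement, and whose measure lies in $[a_i, b_i]$). The identity $\prob<\tilde\distr>{y = 1 \cond \bvar{x}\in T} = \prob<\distr>{y = 0 \cond \bvar{x}\in T}$ combined with the fact that $\tilde\distr$ shares the $\bvar{x}$-marginal of $\distr$ (so $\hypothesisclass<\tilde\distr>[a,b] = \hypothesisclass<\distr>[a,b]$) then implies: whenever $p^* \in [a_i, b_i]$, the two candidates satisfy $\err<\distr|\subsets<1>(i)> \leq (1 + \alpha) e^*$ and $\err<\distr|\subsets*<2>(i)>[{\subsets<2>(i)}] \leq (1 + \alpha) e_c^*$. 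Finally, $\algo<2>$ estimates $\err<\distr>[\cdot]$ for each candidate up to additive accuracy $\epsilon/4$ from $\poly(1/\epsilon, \log(1/\delta))$ fresh samples and returns the candidate with smallest empirical error, union bounding over all $\algo<1>$ invocations with total failure probability $\delta$.

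To verify the bound, fix the index $i^*$ with $p^* \in [a_{i^*}, b_{i^*}]$ and abbreviate $p_1 = \prob<\distr>{\bvar{x}\in\subsets<1>(i^*)}$. In the case $p^* \leq q$, substituting $\err<\distr|\subsets<1>(i^*)> \leq (1+\alpha) e^*$ into $\err<\distr>[\subsets<1>(i^*)] = 2 p_1 \err<\distr|\subsets<1>(i^*)> + q - p_1$ and comparing with $(1+\alpha)\opt_c$ using $|p_1 - p^*| \leq \epsilon$ and $e^* \leq 1$ yields
\[
\err<\distr>[\subsets<1>(i^*)] - (1+\alpha)\opt_c \leq (3 + 2\alpha)\epsilon + \alpha(p^* - q) \leq (3 + 2\alpha)\epsilon.
\]
The case $p^* \geq q$ is symmetric: the second form of Lemma~\ref{lma:classification-error-decomposition} applied to $\subsets<2>(i^*)$, together with the bound on $\err<\distr|\subsets*<2>(i^*)>[{\subsets<2>(i^*)}]$, yields the same $(3 + 2\alpha)\epsilon$ slack via $\alpha(q - p^*) \leq 0$. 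Accounting for $\epsilon/2$ total slack from the empirical selection step and noting $4 + 2\alpha \leq 4(1 + \alpha)$ for $\alpha \geq 0$, the returned subset satisfies $\err<\distr>[\subsets<2>] \leq (1 + \alpha)(\opt_c + 4\epsilon)$ with probability at least $1 - \delta$.

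The main obstacle is the residual $\alpha(p^* - q)$, which can be as large as $\alpha$ and is not dominated by $\alpha \opt_c$ in general. A single invocation of $\algo<1>$ on $\distr$ cannot absorb it; it is the closure of $\hypothesisclass$ under complement, combined with the label flip, that lets us invoke $\algo<1>$ on $\tilde\distr$ to obtain a second candidate whose analysis via the opposite form of Lemma~\ref{lma:classification-error-decomposition} cancels the residual in the complementary regime. The remaining pieces -- interval sweep, empirical selection, and union bound -- are routine.
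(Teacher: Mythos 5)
Your proposal is correct and follows essentially the same route as the paper's proof: case analysis on the sign of $\prob<\distr>{y=0}-\prob<\distr>{\bvar{x}\in\subsets[*]}$, the two forms of Lemma~\ref{lma:classification-error-decomposition}, the label-flip trick paired with closure under complement to handle the $p^*\geq q$ case, and the interval sweep with empirical selection borrowed from Proposition~\ref{prop:reducing-linear-classification-to-conditional-classification}. The only cosmetic difference is that you make explicit that the algorithm must emit both candidates (from $\distr$ and from the flipped distribution) for every interval and select empirically at the end, whereas the paper argues by cases and leaves that merging step implicit; your constants $(3+2\alpha)\epsilon$ are also marginally tighter than the paper's $3(1+\alpha)\epsilon$ but both yield the stated $4\epsilon$ bound.
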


        Again, we defer the proof to Appendix \ref{sec:analysis-of-hardness-results} because of page limits. Although there is an extra $4\epsilon$ additive error in the final guarantee of Claim \ref{clm:reduction-in-multiplicative-form}, we can afford to take $\epsilon$ polynomially small w.r.t.\ $d,\alpha,\delta$, thus obtaining the multiplicative error guarantee down to polynomially small error. Informally we observe that Proposition \ref{prop:reducing-linear-classification-to-conditional-classification} and Claim \ref{clm:reduction-in-multiplicative-form} indicate that any form of approximation algorithm for conditional classification yields an approximation algorithm of the same factor for agnostic classification. In the case of multiplicative approximation in particular, the reverse is not known and we observe that it might be strictly harder to approximate the conditional classification objective.

    \section{Limitations And Future Work}
        Our algorithmic result is limited in three aspects. First and foremost, the restriction of selectors to homogeneous halfspaces is a major drawback especially for the task of conditional classification. Indeed, the advantage of conditional classification with halfspaces compared with regular linear classification really shines when we have the ability to select a minority of the data distribution. Therefore, even with guarantees worse than $\bigO*{\sqrt{\epsilon}}$, moving from homogeneous halfspaces to general halfspaces would constitute a significant advace. Another limitation of our result is the strong assumption on the marginal distribution. Real-world data almost never has standard normal marginals, and testing for a standard normal distribution is costly. Hence, it's worth trying to extend our result to more general classes of distributions, such as log-concave distributions. Last but not the least, one can also try to improve our error guarantee under the current setting as the error guarantee $\bigO{\sqrt{\epsilon}}$ appears sub-optimal.
    
    \bibliographystyle{abbrvnat} 
    \bibliography{refs} 

    \newpage
    \appendix
    \section{Review of Robust List Learning of Sparse Linear Classifiers}\label{sec:robust-learning}

     \begin{algorithm}[ht]
            \caption{Robust list learning of sparse linear classifiers}\label{alg:robust-list-learn}
            \DontPrintSemicolon
            \SetKwProg{myproc}{procedure}{}{}
            \myproc{\scshape{SparseList}$(\distr, m)$}{
                Initialize $L=\varnothing$\;
                Sample $(\bvar{x}^{(1)},y^{(1)}),\ldots,(\bvar{x}^{(m)},y^{(m)})\sim \distr$\;
                \ForEach{$(i_1,\ldots,i_s)\in [d]^s$ and $(j_1,\ldots,j_{s})\in [m]^{s}$}{
                    Put $$\bvar{w}=\left[\begin{array}{ccc}y^{(j_1)}x_{i_1}^{(j_1)}& \cdots &y^{(j_1)}x_{i_s}^{(j_1)}\\&\vdots&\\y^{(j_s)}x_{i_1}^{(j_{s})}&\cdots&y^{(j_s)}x_{i_s}^{(j_{s})}\end{array}\right]^{-1}\left[\begin{array}{c}y^{(j_1)}-\nu\\\vdots\\y^{(j_s)}-\nu\end{array}\right]$$\;
                     Concatenate $\bvar{w}$ to $L$.\;
                }
                \Return $L$
            }           
        \end{algorithm}

        For completeness, we now describe an algorithm to solve the robust list learning problem for sparse linear classifiers. It is based on the approach used in the algorithm for conditional sparse linear regression \citep{juba2016conditional}, using an observation by \citet{mossel-sudan2016}. We will prove the following:

        \begin{theorem}\label{thm:robust-list-learn}
        Algorithm \ref{alg:robust-list-learn} solves robust list-learning of linear classifiers with $s=O(1)$ nonzero coefficients from $m=O(\frac{1}{\alpha\epsilon}(s\log d+\log\frac{1}{\delta}))$ examples in polynomial time with list size $O((md)^s)$.
        \end{theorem}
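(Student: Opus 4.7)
The plan is to combine three standard ingredients: a Chernoff bound to guarantee enough inliers from $\distr^*$ in the sample, a Sauer--Shelah style counting argument that the $(md)^s$ candidates enumerated by Algorithm~\ref{alg:robust-list-learn} cover every distinct labeling of the sample realizable by an $s$-sparse halfspace, and a VC-based uniform convergence bound that converts the inlier empirical risk minimizer within the list into a classifier with small $\distr^*$-error.

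First, with $m = O(\frac{1}{\alpha\epsilon}(s\log d + \log\frac{1}{\delta}))$ i.i.d.\ samples from $\distr=\alpha\distr^*+(1-\alpha)\tilde{\distr}$, a Chernoff bound gives that the number of inlier samples is at least $m^* = \alpha m/2 = \Omega(\frac{1}{\epsilon}(s\log d + \log\frac{1}{\delta}))$ with probability $1-\delta/3$. Second, the VC dimension of $s$-sparse halfspaces on $\reals[d]$ is $O(s\log d)$ (a choice of $s$ coordinates out of $d$, then a halfspace in $\reals[s]$), so Sauer--Shelah bounds the number of distinct labelings of the sample realized by $s$-sparse halfspaces by $O((md)^s)$. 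Up to a generic perturbation (which is w.l.o.g.\ under a continuous marginal), each such labeling is realized by some halfspace whose boundary, restricted to its support $I\in[d]^s$, passes through exactly $s$ of the sample's feature vectors; this is precisely the halfspace that the algorithm outputs for the corresponding $(I,J)$ pair by inverting the $s\times s$ matrix, with the margin constant $\nu$ fixing the normalization. In particular, the inlier ERM over the $s$-sparse halfspace class is among the candidates in $L$, and since $c^*\in\conceptclass$ itself is a zero-error $s$-sparse witness on the inliers, this ERM candidate achieves zero inlier empirical error.

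Third, standard VC-based uniform convergence applied with the $\Omega(\frac{1}{\epsilon}(s\log d + \log\frac{1}{\delta}))$ inlier sample size gives that every $s$-sparse halfspace's inlier empirical error differs from its true $\distr^*$-error by at most $\epsilon$ with probability $1-\delta/3$. Applied to the zero-inlier-error ERM candidate in $L$, this yields an $h_i\in L$ with $\Pr_{\distr^*}[h_i(\bvar{x})=c^*(\bvar{x})]\geq 1-\epsilon$, which is the required list-learning guarantee. A union bound over the two high-probability events gives overall success probability $1-\delta$. The running time is dominated by the nested loops: $(md)^s$ iterations, each solving an $s\times s$ linear system in $O(s^3)$ time, yielding $\poly{d,1/\alpha,1/\epsilon,\log(1/\delta)}$ total runtime and list size $O((md)^s)$ for $s=O(1)$.

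The main obstacle is rigorously establishing the canonical-representative claim in the second step: that the algorithm's parameterization by $(I,J)$ together with the margin constant $\nu$ produces, for every distinct sample-labeling attainable by an $s$-sparse halfspace, some candidate in $L$ realizing that same labeling. This splits into (i) showing that the $s\times s$ matrix constructed in the algorithm is non-singular for the ``right'' $(I,J)$ choice, handled by a standard genericity/perturbation argument, and (ii) checking that the inhomogeneous right-hand side $y^{(j_k)}-\nu$ selects a non-degenerate, correctly oriented normal vector rather than a spurious trivial solution. The remaining Chernoff and VC pieces are routine PAC machinery.
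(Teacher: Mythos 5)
Your high-level skeleton matches the paper's (Chernoff for the inlier count, VC uniform convergence for generalization, plus an argument that $L$ contains a good candidate), but the mechanism you use for the last step is different and is where your gap lies. You try to show that the $(md)^s$ candidates \emph{cover every distinct labeling of the sample realizable by an $s$-sparse halfspace}, and then pick out the one matching $c^*$ on the inliers; you correctly identify the needed ``canonical representative'' lemma as the unresolved obstacle. The fix you suggest, a genericity/perturbation argument, is not available here: the outlier distribution $\tilde{\distr}$ is adversarial and the marginal of $\distr^*$ is arbitrary, so you cannot assume the sample is in general position or that the relevant $s\times s$ minors are nonsingular for free. Your appeal to Sauer--Shelah is also off: VC dimension $O(s\log d)$ gives $O(m^{s\log d})$ labelings, not $O((md)^s)$; getting $(md)^s$ requires summing over $\binom{d}{s}$ coordinate subsets and applying Sauer--Shelah with VC dimension $O(s)$ within each.

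The paper avoids the covering claim entirely. After fixing the threshold to $1$, it restricts to the $s$ nonzero coordinates and observes that ``consistent with the inliers, with margin $\nu$'' is a nonempty linear program in $\reals[s]$ (nonempty because $c^*$ is a feasible point). A basic feasible solution of that LP is a vertex determined by $s$ tight constraints, i.e.\ by a choice of $s$ coordinate indices and $s$ inlier examples, which is exactly the parameterization Algorithm~\ref{alg:robust-list-learn} enumerates. So it suffices to exhibit one particular vertex, rather than to cover all labelings. The paper also handles the singularity/degeneracy concern quantitatively, via Cramer's rule and the bit complexity of the data, which bounds the achievable margin $\nu^*$ away from zero and justifies solving the LP with that fixed $\nu$; this is the worst-case replacement for your perturbation hand-wave. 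Your Chernoff and uniform-convergence steps are the same as the paper's.
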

        \begin{proof}
            We observe that the running time and list size of Algorithm \ref{alg:robust-list-learn} is clearly as promised. To see that it solves the problem, we first recall that results by \citet{blumer1989learnability} and \citet{hanneke2016optimal} showed that given $O(\frac{1}{\epsilon}(D+\log\frac{1}{\delta}))$ examples labeled by a class of VC-dimension $D$, any consistent hypotheses achieves error $\epsilon$ with probability $1-\delta$. In particular, halfspaces in $\mathbb{R}^d$ have VC-dimension $d$; \citet{haussler1988quantifying} observed that $s$-sparse linear classifiers in $\mathbb{R}^d$ have VC-dimension $s\log d$. Hence, if the data includes a set $S$ of at least $\Omega(\frac{1}{\epsilon}(s\log d+\log\frac{1}{\delta}))$ inliers and we find a $s$-sparse classifier that agrees with the labels on $S$, it achieves error $1-\epsilon$ on $S$ with probability $1-\delta/2$. Observe that in a sample of size $O(\frac{1}{\alpha\epsilon}(s\log d+\log\frac{1}{\delta}))$, with an $\alpha$ fraction of inliers, we indeed obtain $\Omega(\frac{1}{\epsilon}(s\log d+\log\frac{1}{\delta}))$ inliers with probability $1-\delta/2$.

            Now, suppose we write our linear threshold function with a standard threshold of $1$, and suppose are examples are drawn from $\mathbb{R}^d\times \{-1,1\}$. Then a classifier with weight vector $\bvar{w}$ labels $\bvar{x}$ with $1$ if $\langle\bvar{w},\bvar{x}\rangle\geq 1$, and labels $\bvar{x}$ with $-1$ if $\langle\bvar{w},\bvar{x}\rangle < 1$. We observe that by Cramer's rule, we can find a value $\nu^*>0$ (of size at least $2^{-(bs+s\log s)}$ if the numbers are $b$-bit rational values) such that if $\langle\bvar{w},\bvar{x}\rangle < 1$, $\langle\bvar{w},\bvar{x}\rangle \leq 1-\nu^*$. So, it is sufficient for $\langle\bvar{w},y\bvar{x}\rangle \geq y-\nu$  for a given $(\bvar{x},y)$, for some margin $\nu\geq 2^{-(bs+s\log s)}$. Thus, to find a consistent $\bvar{w}$, it suffices to solve the linear program $\langle\bvar{w},\lvar{y}(j)\bvar{x}(j)\rangle \geq \lvar{y}(j)-\nu$ for each $j$th example in $S$. Observe that if we parameterize $\bvar{w}$ by only the nonzero coefficients, we obtain a linear program in $s$ dimensions, for which we can obtain a feasible solution at a vertex, given by $s$ tight constraints. Now, Algorithm~\ref{alg:robust-list-learn} enumerates \emph{all} $s$-tuples of indices and examples, which in particular therefore must include any $s$-tuple of examples in the inlier set $S$ and the $s$ nonzero coordinates of $\bvar{w}$. Hence, with probability at least $1-\delta$, $L$ indeed contains some $\bvar{w}$ that attains error $\epsilon$ on $S$, as needed.
        \end{proof}

\section{Convergence Analysis of Projected SGD}
\label{sec:convergence-analysis-of-projected-sgd}
        We show our formal analysis of the Projected SGD (Algorithm \ref{algo:projected-stochastic-gradient-descent-for-minimizing-convex-surrogate-loss}) in this section.
        
        We first show that the gradient of statistic ReLU, $\derivative<\bvar{w}>\loss<\distr>{\bvar{w}}$, is almost Lipschitz continuous, which will be a critical piece in our convergence analysis of Projected SGD.
                
        \begin{lemma}[Relative Smoothness Of Statistic ReLU]\label{lma:relative-smoothness-of-statistic-relu}
            Let $\distr$ be a distribution on $\reals[d]\times\interval$ with standard normal $\bvar{x}$-marginal, $\loss<\distr>{\bvar{w}} = \E<(\bvar{x},\lvar{y})\sim\distr>{y\cdot\max(0, \innerprod{\bvar{x}}{\bvar{w}})}$. Then, for any $\bvar{v}, \bvar{w}\in\reals[d]$ such that at least one of $\norm{\bvar{v}}_2, \norm{\bvar{w}}_2$ is non-zero, we have
            \begin{equation}
                \norm{\derivative<\bvar{w}>\loss<\distr>{\bvar{w}} - \derivative<\bvar{v}>\loss<\distr>{\bvar{v}}}_2\leq\frac{2}{\max\sbr*{\norm{\bvar{v}}_2, \norm{\bvar{w}}_2}}\norm{\bvar{w} - \bvar{v}}_2
            \end{equation}
        \end{lemma}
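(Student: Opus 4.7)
The plan is to reduce the Lipschitz bound to an angular bound by exploiting the scale-invariance of the ReLU subgradient, and then convert the angle $\theta = \theta(\bvar{v}, \bvar{w})$ into the claimed norm ratio. First I would observe that $\derivative<\bvar{w}>\loss<\distr>{\bvar{w}} = \E<(\bvar{x},\lvar{y})\sim\distr>{\lvar{y}\cdot \bvar{x}\cdot \indicator[\innerprod{\bvar{x}}{\bvar{w}}\geq 0]}$ is invariant under positive rescaling of $\bvar{w}$, so the gradient difference depends only on the directions of $\bvar{v}$ and $\bvar{w}$ and its integrand is supported on the ``wedge'' $W = \lbr*{\bvar{x}\in\reals[d]\cond \sgn\innerprod{\bvar{x}}{\bvar{w}}\neq \sgn\innerprod{\bvar{x}}{\bvar{v}}}$. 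Membership in $W$ depends only on the projection of $\bvar{x}$ onto $U = \text{span}(\bvar{v}, \bvar{w})$, and by rotational symmetry the standard Gaussian assigns $W$ mass $\theta/\pi$.

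Next I would bound $\norm{\derivative<\bvar{w}>\loss<\distr>{\bvar{w}} - \derivative<\bvar{v}>\loss<\distr>{\bvar{v}}}<2>$ by testing against an arbitrary unit vector $\bvar{u}$ and decomposing $\bvar{u} = \bvar{u}<\parallel> + \bvar{u}<\perp>$ and $\bvar{x} = \bvar{x}<\parallel> + \bvar{x}<\perp>$ with respect to $U$, splitting the inner product into two pieces. For the parallel piece, I would pass to polar coordinates $\bvar{x}<\parallel> = r(\cos\phi,\sin\phi)$ on $U$, using that $r$ and $\phi$ are independent under the Gaussian with $\phi$ uniform on $[0,2\pi)$ and $\E{r}=\sqrt{\pi/2}$. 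The wedge becomes a pair of opposite arcs of total angular measure $2\theta$, and a direct integration gives $\E{\abs{\innerprod{\bvar{x}<\parallel>}{\bvar{u}<\parallel>}}\indicator[\bvar{x}\in W]}\leq \theta\norm{\bvar{u}<\parallel>}<2>/\sqrt{2\pi}$. For the perpendicular piece, I would condition on $\bvar{x}<\parallel>$; because $\bvar{x}<\parallel>$ and $\bvar{x}<\perp>$ are independent under the Gaussian, Cauchy--Schwarz in the $\bvar{x}<\perp>$-integral combined with $\abs{\lvar{y}}\leq 1$ gives $\abs*{\E{\lvar{y}\innerprod{\bvar{x}<\perp>}{\bvar{u}<\perp>}\cond \bvar{x}<\parallel>}}\leq \norm{\bvar{u}<\perp>}<2>$, and integrating against $\indicator[\bvar{x}\in W]$ bounds this piece by $(\theta/\pi)\norm{\bvar{u}<\perp>}<2>$. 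Summing the two pieces and taking the supremum over unit $\bvar{u}$ yields $\norm{\derivative<\bvar{w}>\loss<\distr>{\bvar{w}} - \derivative<\bvar{v}>\loss<\distr>{\bvar{v}}}<2> \leq \theta/\sqrt{\pi}$.

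To finish, for $\theta\in[0,\pi/2]$, projecting $\bvar{w}$ orthogonally onto the line spanned by $\bvar{v}$ (and symmetrically) gives $\sin\theta\leq \norm{\bvar{w}-\bvar{v}}<2>/\max\sbr*{\norm{\bvar{v}}<2>,\norm{\bvar{w}}<2>}$, and combining with $\theta\leq(\pi/2)\sin\theta$ closes the bound with constant $\sqrt{\pi}/2 < 2$. For $\theta\in(\pi/2,\pi]$, the law of cosines yields $\norm{\bvar{w}-\bvar{v}}<2>>\max\sbr*{\norm{\bvar{v}}<2>,\norm{\bvar{w}}<2>}$ so that the right-hand side of the lemma already exceeds $2$, while the trivial estimate $\norm{\derivative<\bvar{w}>\loss<\distr>{\bvar{w}}}<2>\leq \E{\abs{\innerprod{\bvar{x}}{\bvar{u}}}}=\sqrt{2/\pi}$ gives $\norm{\derivative<\bvar{w}>\loss<\distr>{\bvar{w}}-\derivative<\bvar{v}>\loss<\distr>{\bvar{v}}}<2>\leq 2\sqrt{2/\pi}<2$, closing this case.

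The main obstacle will be the perpendicular-component bound. Because the adversary can arbitrarily correlate $\lvar{y}$ with $\bvar{x}<\perp>$, a naive Cauchy--Schwarz applied directly to $\E{\lvar{y}\innerprod{\bvar{x}<\perp>}{\bvar{u}<\perp>}\indicator[\bvar{x}\in W]}$ only extracts the wedge mass as $\sqrt{\prob{W}}=\sqrt{\theta/\pi}$, which yields merely a H\"older-$1/2$ bound, not the linear bound needed for smoothness. The key move is to condition on $\bvar{x}<\parallel>$ first---so that $\indicator[\bvar{x}\in W]$ becomes deterministic inside the conditional expectation---and only then apply Cauchy--Schwarz in $\bvar{x}<\perp>$. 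The Gaussian independence of $\bvar{x}<\parallel>$ and $\bvar{x}<\perp>$ is precisely what allows the wedge indicator to be pulled past the $\bvar{x}<\perp>$-integration without paying the square root, upgrading the bound from $\sqrt{\theta}$ to the linear $\theta$ needed for Lipschitzness.
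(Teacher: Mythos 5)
Your proof is correct, and the overall skeleton matches the paper's (bound the gradient difference by a constant times $\theta(\bvar{v},\bvar{w})$ using that the integrand lives on a Gaussian wedge of angular width $\theta$, then convert $\theta$ to $\norm{\bvar{w}-\bvar{v}}_2/\max(\norm{\bvar{v}}_2,\norm{\bvar{w}}_2)$ by the two-case $\sin$/law-of-cosines argument), but the mechanism you use to handle the test direction $\bvar{u}$ is genuinely different. The paper picks the maximizing unit vector $\bvar{u}$, embeds it together with $\bvar{v},\bvar{w}$ in a fixed $3$-dimensional subspace $V$, coarsely bounds $\abs{\innerprod{\bvar{x}}{\bvar{u}}}\leq\norm*{\bvar{x}<V>}_2$, and then evaluates $\E{\norm*{\bvar{x}<V>}_2\indicator[\bvar{x}\in W]}$ by an explicit spherical-coordinate integral. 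You instead split $\bvar{u}=\bvar{u}<\parallel>+\bvar{u}<\perp>$ relative to $\mathrm{span}(\bvar{v},\bvar{w})$, dispose of the in-plane piece with a $2$D polar integral, and handle the out-of-plane piece by conditioning on $\bvar{x}<\parallel>$ so that the wedge indicator becomes deterministic and Cauchy--Schwarz in $\bvar{x}<\perp>$ alone yields a bound linear in $\prob{W}=\theta/\pi$ rather than in $\sqrt{\prob{W}}$. Your observation that unconditional Cauchy--Schwarz only gives a H\"older-$1/2$ bound and that the Gaussian product structure across $\mathrm{span}(\bvar{v},\bvar{w})$ and its complement is what upgrades it is exactly the right diagnosis; the paper avoids this issue by dominating $\abs{\innerprod{\bvar{x}}{\bvar{u}}}$ with a rotation-invariant quantity $\norm{\bvar{x}<V>}_2$ that depends only on the $3$D projection, at the cost of a slightly heavier integral. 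Both routes yield $\norm{\derivative\loss<\distr>{\bvar{w}}-\derivative\loss<\distr>{\bvar{v}}}_2\leq C\theta$ with $C\approx 0.5$, comfortably inside the Lipschitz constant $2$ claimed by the lemma.
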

        \begin{proof}
            Without loss of generosity, assume $\norm{\bvar{v}}_2\geq\norm{\bvar{w}}_2$, we prove the approximate Lipschitz continuity of $\derivative<\bvar{w}>\loss<\distr>{\bvar{w}}$ by showing that, for every $\bvar{w},\bvar{v} \in \reals[d]$, $\norm{\derivative<\bvar{w}>\loss<\distr>{\bvar{w}} - \derivative<\bvar{v}>\loss<\distr>{\bvar{v}}}_2$ is upper bound by $L\cdot\norm{\bvar{w} - \bvar{v}}_2/\norm{\bvar{v}}_2$ for some constant $L \geq 1$. 

            We firstly show that $\norm{\derivative<\bvar{w}>\loss<\distr>{\bvar{w}} - \derivative<\bvar{v}>\loss<\distr>{\bvar{v}}}_2 = \bigO{\theta(\bvar{v},\bvar{w})}$, then we will prove $\theta(\bvar{v},\bvar{w})$ can be upper bounded by $\norm{\bvar{w} - \bvar{v}}_2/\norm{\bvar{v}}_2$ asymptotically for $\theta(\bvar{v},\bvar{w})\in[0,\pi/2]$ and $\theta(\bvar{v},\bvar{w})\in[\pi/2, \pi]$ separately.
            
            Recall that $\derivative<\bvar{w}>\loss<\distr>{\bvar{w}} = \E<\sbr{\bvar{x}, y}\sim\distr>{y\cdot\bvar{x}\cdot\indicator[\bvar{x}\in\hypothesis[\bvar{w}]]}$. For conciseness of the proof, we define
            \begin{equation*}
                \bvar{u} = \argmax_{\norm{\bvar{z}}_2 = 1}\innerprod{\derivative<\bvar{w}>\loss<\distr>{\bvar{w}} - \derivative<\bvar{v}>\loss<\distr>{\bvar{v}}}{\bvar{z}}.
            \end{equation*}
            Then, we have
            \begin{align}
                \norm{\derivative<\bvar{w}>\loss<\distr>{\bvar{w}} - \derivative<\bvar{v}>\loss<\distr>{\bvar{v}}}_2 =& \innerprod{\derivative<\bvar{w}>\loss<\distr>{\bvar{w}} - \derivative<\bvar{v}>\loss<\distr>{\bvar{v}}}{\bvar{u}}\notag\\
                =& \E{y\cdot\innerprod{\bvar{x}}{\bvar{u}}\cdot\sbr{\indicator[\bvar{x}\in\hypothesis[\bvar{w}]\isect\hypothesis*[\bvar{v}]] - \indicator[\bvar{x}\in\hypothesis*[\bvar{w}]\isect\hypothesis[\bvar{v}]]}}\notag\\
                \leq& \E{\abs{\innerprod{\bvar{x}}{\bvar{u}}}\cdot\sbr{\indicator[\bvar{x}\in\hypothesis[\bvar{w}]\isect\hypothesis*[\bvar{v}]] + \indicator[\bvar{x}\in\hypothesis*[\bvar{w}]\isect\hypothesis[\bvar{v}]]}}.\label{eq:relative-smoothness-of-statistic-relu-decoupling-from-label}
            \end{align}
            Now, let's notice that the expectation above only has constraints on a $2$-dimensional subspace spanned by $\lbr{\bvar{v},\bvar{w}}$. Thus, will show that $\abs{\innerprod{\bvar{x}}{\bvar{u}}}$ is essentially upper bounded by the $l_2$ norm of the projection of $\bvar{x}$ onto a $3$-dimensional subspace, which will allow us to use polar coordinates to calculate the above expectation.

            We construct a set of orthonormal basis $V = \lbr{\bvar{e}<1>, \bvar{e}<2>, \bvar{e}<3>}$ as follow. At first, let $\theta_1 = \theta(\bvar{v},\bvar{w})$ so that $\theta_1\in[0,\pi]$, and we define $\bvar*{w} = \bvar{e}<2>$ as well as $\bvar*{v} = -\bvar{e}<1>\sin\theta_1 + \bvar{e}<2>\cos\theta_1$. Then, denote $\bvar{u}<W>$ to be the projection of $\bvar{u}$ on to the subspace spanned by $W = \lbr{\bvar{e}<1>, \bvar{e}<2>}$ and $\theta_2 = \theta(\bvar{u}<W>, \bvar{e}<1>)$ so that $\bvar*{u}<W> = \bvar{e}<1>\cos\theta_2 + \bvar{e}<2>\sin\theta_2$. At last, we define $\theta_3 = \theta(\bvar{u}, \bvar{u}<W>)$ so that $\theta_3\in[0,\pi/2]$ and $\bvar{e}<3>$ to be such that
            \begin{align*}
                \bvar{u} =& \bvar*{u}<W>\cos\theta_3 + \bvar{e}<3>\sin\theta_3 \\
                =&\bvar{e}<1>\cos\theta_2\cos\theta_3 + \bvar{e}<2>\sin\theta_2\cos\theta_3+ \bvar{e}<3>\sin\theta_3.
            \end{align*}
            Denote $\bvar{x}<i> = \innerprod{\bvar{x}}{\bvar{e}<i>}$ and $\bvar{x}<V>$ to be the projection of $\bvar{x}$ onto the subspace spanned by $V$, by Cauchy inequality, there is 
            \begin{align*}
                \innerprod{\bvar{x}}{\bvar{u}}=& \bvar{x}<1>\cos\theta_2\cos\theta_3 + \bvar{x}<2>\sin\theta_2\cos\theta_3+ \bvar{x}<3>\sin\theta_3\\
                =&\innerprod{\bvar{x}<V>}{\bvar{u}}\\
                \leq& \norm{\bvar{x}<V>}_2
            \end{align*}
            \begin{figure}[ht]
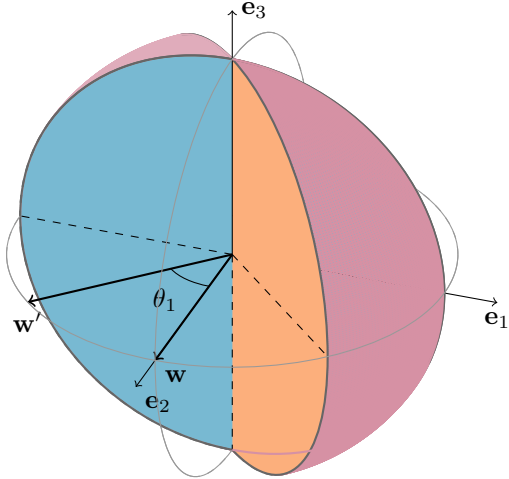

                \centering
                \adjustbox{max width=0.6\textwidth}{\drawintegralspaceALT*}
                \caption{Spherical coordinate interpretation.}
                \label{fig:spherical-coordinate-interpretation}
            \end{figure}
            Then, we transform the standard 3-dimensional coordinate system into a spherical coordinate system, also see figure \ref{fig:spherical-coordinate-interpretation}. For any $\bvar{x}<\lvar{V}> = (\bvar{x}<1>,\bvar{x}<2>,\bvar{x}<3>)$, let $\phi = \theta(\bvar{x}<\lvar{V}>,\bvar{e}<3>)$, $\theta = \theta(\bvar{x}<\lvar{V}\bvar{e}<3>[\bot]>, \bvar{e}<1>)$, and $r = \norm{\bvar{x}<\lvar{V}>}_2$, then we have $\bvar{x}<3> = r\cos\phi$, $\bvar{x}<1> = r\sin\phi\cos\theta$, and $\bvar{x}<2> = r\sin\phi\sin\theta$. Now, applying the standard Jacobian matrix that maps the spherical coordinates to 3-dimensional Cartesian coordinates yields $d\bvar{x}<1>d\bvar{x}<2>d\bvar{x}<3> = r^2\sin\phi drd\phi d\theta$. Therefore, following with inequality \eqref{eq:relative-smoothness-of-statistic-relu-decoupling-from-label}, we have
            \begin{align}
                \norm{\derivative<\bvar{w}>\loss<\distr>{\bvar{w}} - \derivative<\bvar{v}>\loss<\distr>{\bvar{v}}}_2\leq& 2\E{\norm{\bvar{x}<V>}_2\cdot\indicator[\bvar{x}\in\hypothesis[\bvar{w}]\isect\hypothesis*[\bvar{v}]]}\notag\\
                \ceq&2\E{\norm{\bvar{x}<V>}_2\cdot\indicator[\bvar{x}<2>\cos\theta_1\leq\bvar{x}<1>\sin\theta_1,\bvar{x}<2>\geq 0, \bvar{x}<3>\in\reals]}\notag\\
                \ceq[(i)]&\frac{1}{\sqrt{2\pi^3}}\int_0^{\theta_1}\int_0^{\pi}\int_0^{+\infty} r^3\sin\phi e^{-r^2/2} drd\phi d\theta\notag\\
                =&\theta_1\sqrt{\frac{2}{\pi^3}}\int_0^{+\infty}r^3e^{-r^2/2}dr\notag\\
                \ceq[(ii)]& \theta_1(2/\pi)^{3/2}\int_0^{+\infty}re^{-r^2/2}dr\notag\\
                =&\theta_1(2/\pi)^{3/2}\label{eq:relative-smoothness-of-statistic-relu-gradients-difference-is-bounded-by-angle}
            \end{align}
            where the first inequality holds because $\lbr*{\bvar{x}<\lvar{V}>\cond \bvar{x}<\lvar{V}>\in\hypothesis[\bvar{w}]\isect\hypothesis*[\bvar{v}]}$ and $\lbr*{\bvar{x}<\lvar{V}>\cond \bvar{x}<\lvar{V}>\in\hypothesis*[\bvar{w}]\isect\hypothesis[\bvar{v}]}$ are symmetric under Gaussian measure, the integral domain in equation (i) is valid for $\theta,\theta_1\in[0,\pi]$ because $\bvar{x}<2>\cos\theta_1\leq\bvar{x}<1>\sin\theta_1$ implies $\cot\theta_1\leq \cot\theta$, which, in turn, indicates $0\leq\theta\leq\theta_1$ as $\cot\theta$ is a monotone decreasing function on $\theta\in(0,\pi)$, and $\bvar{x}<2>\geq 0$ implies $\phi\in[0, \pi]$ as we know $r, \sin\theta\geq 0$ by construction, inequality (ii) is by using the law of integration by parts.

            \textbf{For the case of $\theta_1\in[0, \pi/2]$}, it is easy to see that
            \begin{align*}
                \norm{\bvar{w} - \bvar{v}}_2\geq& \norm{\norm{\bvar{v}}_2\cos\theta_1\cdot\bvar*{w} - \bvar{v}}_2\notag\\
                =& \norm{\bvar{v}}_2\sin\theta_1\\
                \cgeq[(i)]&\norm{\bvar{v}}_2\theta_1\cos\frac{\pi}{2\sqrt{3}}\\
                \cgeq[(ii)]&\sbr{\frac{\pi}{2}}^{3/2}\cos\frac{\pi}{2\sqrt{3}}\norm{\bvar{v}}_2\norm{\derivative<\bvar{w}>\loss<\distr>{\bvar{w}} - \derivative<\bvar{v}>\loss<\distr>{\bvar{v}}}_2\\
                \geq& \norm{\bvar{v}}_2\norm{\derivative<\bvar{w}>\loss<\distr>{\bvar{w}} - \derivative<\bvar{v}>\loss<\distr>{\bvar{v}}}_2
            \end{align*}
            where the first inequality holds because the RHS represent the shortest distance from vector $\bvar{v}$ to vector $\bvar{w}$, (i) is by the elementary inequality $x\cos\sbr*{x/\sqrt{3}}\leq \sin x$ as well as the assumption $\theta_1\in[0, \pi/2]$, (ii) is by inequality \eqref{eq:relative-smoothness-of-statistic-relu-gradients-difference-is-bounded-by-angle}, the last inequality holds due to $3/5 < \cos(\pi/2\sqrt{3})$ and $5/3 < (\pi/2)^{3/2}$.

            \textbf{For the case of $\theta_1\in[\pi/2, \pi]$}, by inequality \eqref{eq:relative-smoothness-of-statistic-relu-gradients-difference-is-bounded-by-angle} and $\norm{\bvar{w} - \bvar{v}}_2\geq \norm{\bvar{v}}_2$, we simply have
            \begin{equation*}
                \norm{\derivative<\bvar{w}>\loss<\distr>{\bvar{w}} - \derivative<\bvar{v}>\loss<\distr>{\bvar{v}}}_2\leq\sqrt{\pi/2}\leq \frac{2}{\norm{\bvar{v}}_2}\norm{\bvar{w} - \bvar{v}}_2
            \end{equation*}
            which completes the proof by taking $L = 2$.
        \end{proof}

        Now we are ready to show the convergence of the gradient norm in Algorithm \ref{algo:projected-stochastic-gradient-descent-for-minimizing-convex-surrogate-loss}.
        \begin{proposition}[Proposition \ref{prop:upper-bound-on-the-norm-of-statistic-relu-gradient}]\label{prop:prop:upper-bound-on-the-norm-of-statistic-relu-gradient-appendix}
            Let $\distr$ be a distribution on $\reals[d]\times\booldomain$ with standard normal $\bvar{x}$-marginal, $\loss<\distr>{\bvar{w}} = \E<(\bvar{x},\lvar{y})\sim\distr>{y\cdot\max(0, \innerprod{\bvar{x}}{\bvar{w}})}$, and $\func{g}<\bvar{w}>[\bvar{x}, y] = y\cdot\bvar{x}<\bvar{w}[\bot]>\cdot\indicator[\bvar{x}\in\hypothesis[\bvar{w}]]$. With $\beta = \sqrt{1/Td}$, after $T$ iterations, the output $\sbr*{\bvarseq{w}(T)}$ in algorithm \ref{algo:projected-stochastic-gradient-descent-for-minimizing-convex-surrogate-loss} will satisfies
            \begin{equation*}
                \E<\distr*(1),\ldots, \distr*(T)\sim\distr>{\frac{1}{T}\sum_{i=1}^T\norm*{\E<(\bvar{x},\lvar{y})\sim\distr>{\func{g}<\bvar{w}(i)>[\bvar{x},\lvar{y}]}}<2>[2]}\leq\sqrt{\frac{d}{T}}.
            \end{equation*}
            In addition, if $T \geq \sbr*{4d + \ln\sbr*{1/\delta}}/\epsilon^4$, it holds $\min_{i = 1,\ldots,T}\norm{\E<(\bvar{x},\lvar{y})\sim\distr>{\func{g}<\bvar{w}(i)>[\bvar{x},\lvar{y}]}}<2>\leq \epsilon$, with probability at least $1 - \delta$.
        \end{proposition}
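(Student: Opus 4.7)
The plan is to execute a smoothness-based descent analysis on the unit sphere, using the relative Lipschitz bound of Lemma~\ref{lma:relative-smoothness-of-statistic-relu} together with the $1$-homogeneity of $\loss<\distr>{\cdot}$ in $\bvar{w}$. Three structural facts make Projected SGD amenable. First, because $\func{g}<\bvar{w}>[\bvar{x},\lvar{y}]$ has $\bvar{x}<\bvar{w}[\bot]>$ built in, the empirical update direction $\bvar{g}(i) := \E<\distr*(i)>{\func{g}<\bvar{w}(i-1)>[\bvar{x},\lvar{y}]}$ is orthogonal to $\bvar{w}(i-1)$, hence $\norm{\bvar{u}(i)}<2>[2] = 1 + \beta^2 \norm{\bvar{g}(i)}<2>[2] \geq 1$. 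Second, since $\loss<\distr>{\lambda \bvar{w}} = \lambda \loss<\distr>{\bvar{w}}$ for $\lambda \geq 0$ and $\loss<\distr>{\cdot} \geq 0$ pointwise, the normalization step $\bvar{w}(i) = \bvar{u}(i)/\norm{\bvar{u}(i)}<2>$ cannot increase the loss: $\loss<\distr>{\bvar{w}(i)} \leq \loss<\distr>{\bvar{u}(i)}$. Third, $\E<\distr>{\func{g}<\bvar{w}>[\bvar{x},\lvar{y}]}$ is exactly the component of $\derivative<\bvar{w}>\loss<\distr>{\bvar{w}}$ orthogonal to $\bvar{w}$, so the radial part of the gradient never contributes to the descent inner product.

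With these in hand, the standard smoothness inequality applied at $\bvar{w}(i-1)$ and $\bvar{u}(i)$---both of norm at least $1$, so that Lemma~\ref{lma:relative-smoothness-of-statistic-relu} supplies smoothness constant $2$---combined with $\loss<\distr>{\bvar{w}(i)} \leq \loss<\distr>{\bvar{u}(i)}$ and $\bvar{u}(i) - \bvar{w}(i-1) = -\beta \bvar{g}(i)$, yields
\begin{align*}
    \loss<\distr>{\bvar{w}(i)} - \loss<\distr>{\bvar{w}(i-1)} \leq -\beta\, \innerprod{\E<\distr>{\func{g}<\bvar{w}(i-1)>[\bvar{x},\lvar{y}]}}{\bvar{g}(i)} + \beta^2 \norm{\bvar{g}(i)}<2>[2].
\end{align*}
Taking conditional expectation over the fresh sample $\distr*(i)$ replaces $\bvar{g}(i)$ inside the inner product by its mean and bounds the quadratic term by $2\beta^2 d$, via $\E<\distr>{\norm{\bvar{x}}<2>[2]} = d$ together with the standard variance-reduction estimate $\E{\norm{\bvar{g}(i)}<2>[2]} \leq (1+1/N)\E{\norm*{\func{g}<\bvar{w}(i-1)>}<2>[2]}$. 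Telescoping over $i=1,\ldots,T$, using $\loss<\distr>{\bvar{w}(T)} \geq 0$ and the elementary estimate $\loss<\distr>{\bvar{w}(0)} = O(1)$ (valid since $\bvar{w}(0)$ is a unit vector and the $\bvar{x}$-marginal is standard normal), gives $\beta \sum_{i=1}^{T} \E{\norm{\E<\distr>{\func{g}<\bvar{w}(i-1)>[\bvar{x},\lvar{y}]}}<2>[2]} \leq O(1) + 2\beta^2 d T$. Substituting $\beta = \sqrt{1/(Td)}$ establishes the first (in-expectation) claim.

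For the minimum-over-iterates claim with probability $1-\delta$, I would promote the telescoped descent inequality to a high-probability tail bound on $\sum_i \norm{\E<\distr>{\func{g}<\bvar{w}(i-1)>[\bvar{x},\lvar{y}]}}<2>[2]$ by treating the shifted sequence as a super-martingale and applying a Freedman/Bernstein-type concentration to the noise $\bvar{g}(i) - \E<\distr>{\func{g}<\bvar{w}(i-1)>[\bvar{x},\lvar{y}]}$. The principal obstacle is that $\bvar{g}(i)$ is driven by unbounded Gaussian samples, so standard bounded-difference concentration does not apply out of the box. The cleanest fix is to first condition on the uniform tail event $\norm{\bvar{x}}<2>[2] \leq 2d + O(\ln(NT/\delta))$ holding across all $NT$ samples (a consequence of the $\chi^2_d$ tail bound plus a union bound, valid with probability $1-\delta/2$), which deterministically controls $\norm{\bvar{g}(i)}<2>$; Freedman's inequality then absorbs the remaining martingale fluctuations into an $O(\ln(1/\delta))$ additive term, yielding $\sum_i \norm{\E<\distr>{\func{g}<\bvar{w}(i-1)>[\bvar{x},\lvar{y}]}}<2>[2] \leq O(\sqrt{Td} + \ln(1/\delta))$ with probability $1-\delta$. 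The choice of $N$ in Algorithm~\ref{algo:projected-stochastic-gradient-descent-for-minimizing-convex-surrogate-loss} (inherited from Algorithm~\ref{algo:conditional-classification-with-homogeneous-halfspaces-for-finite-classes}) is comfortably large enough for the per-iteration noise to be $o(\epsilon^2)$. Dividing by $T$ and invoking $T \geq (4d + \ln(1/\delta))/\epsilon^4$ finally yields $\min_i \norm{\E{\func{g}<\bvar{w}(i-1)>}}<2>[2] \leq \epsilon^2$, as required.
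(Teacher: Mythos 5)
Your in-expectation bound follows the paper's argument essentially step for step: relative smoothness from Lemma~\ref{lma:relative-smoothness-of-statistic-relu}, orthogonality of the update direction to $\bvar{w}(i-1)$ so that $\norm{\bvar{u}(i)}<2>\geq 1$, $1$-homogeneity of $\loss<\distr>{\cdot}$ so that the renormalization cannot increase the loss, conditional expectation over the fresh batch, telescoping, and $\beta=\sqrt{1/Td}$. The paper dispenses with your variance-reduction step by a plain application of Jensen's inequality together with property~(3) of Lemma~\ref{lma:upper-bound-on-relu-and-its-gradient-norm}, which yields a second-order term of $\beta^2 d/2$ rather than your $2\beta^2 d$, and it tracks the constant $\tfrac{1}{\sqrt{2\pi}}+\tfrac12<1$ so the bare $\sqrt{d/T}$ of the statement is actually attained; your $O(\cdot)$ bookkeeping would not quite deliver this, but that is cosmetic. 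Where you genuinely diverge is the high-probability step. You propose a Freedman-style martingale concentration on the noisy descent increments, preceded by a truncation to $\norm{\bvar{x}}<2>[2]=O(d+\ln(NT/\delta))$ across all samples to tame the unbounded Gaussians. The paper takes a much shorter route: the quantity being concentrated is $\func{G}<T>=\tfrac1T\sum_{i=1}^T\norm*{\E<(\bvar{x},y)\sim\distr>{\func{g}<\bvar{w}(i)>[\bvar{x},y]}}<2>[2]$, an average of \emph{true} rather than empirical gradient norms, each of which is deterministically bounded by $1/(2\pi)$ by property~(2) of Lemma~\ref{lma:upper-bound-on-relu-and-its-gradient-norm}, uniformly in $\bvar{w}$. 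Because the unbounded empirical gradients never appear inside $\func{G}<T>$ itself, no truncation is needed; a McDiarmid-type bounded-differences inequality (Lemma~\ref{lma:concentration-bound-by-devroye}) with per-coordinate sensitivity $\tfrac{1}{\sqrt{2\pi}T}$ then closes the argument in two lines. Your martingale framing is arguably more careful about the fact that perturbing a single batch $\distr*(i)$ propagates to every subsequent iterate $\bvar{w}(i),\ldots,\bvar{w}(T)$, but it buys that care at the cost of extra truncation and conditional-variance bookkeeping that the paper's uniform-boundedness observation makes unnecessary.
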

        \begin{proof}
            Consider the $i$th iteration of algorithm \ref{algo:projected-stochastic-gradient-descent-for-minimizing-convex-surrogate-loss}. Based on the update $\bvar{u}(i) = \bvar{w}(i - 1) - \beta \E<\distr*(i)>{\func{g}<\bvar{w}(i-1)>[\bvar{x},\lvar{y}]}$, we have
            \begin{align}
                &\loss<\distr>{\bvar{u}(i)} - \loss<\distr>{\bvar{w}(i - 1)}
                \\
                =& \innerprod{\derivative<\bvar{w}>\loss<\distr>{\bvar{w}(i-1)}}{\bvar{u}(i) - \bvar{w}(i-1)}\notag
                \\
                &+\int_0^1\innerprod{\derivative<\bvar{w}>\loss<\distr>{\bvar{w}(i-1) + t\sbr*{\bvar{u}(i) - \bvar{w}(i - 1)}} - \derivative<\bvar{w}>\loss<\distr>{\bvar{w}(i-1)}}{\bvar{u}(i) - \bvar{w}(i-1)}dt\notag
                \\
                \leq& -\beta\innerprod{\derivative<\bvar{w}>\loss<\distr>{\bvar{w}(i-1)}}{\E<\distr*(i)>{\func{g}<\bvar{w}(i-1)>[\bvar{x},\lvar{y}]}}\notag
                \\
                &+\int_0^1\norm{\derivative<\bvar{w}>\loss<\distr>{\bvar{w}(i-1) + t\sbr*{\bvar{u}(i) - \bvar{w}(i - 1)}} - \derivative<\bvar{w}>\loss<\distr>{\bvar{w}(i-1)}}_2\norm{\bvar{u}(i) - \bvar{w}(i-1)}_2dt\notag
                \\
                \leq& -\beta \innerprod{\E<\distr>{\func{g}<\bvar{w}(i-1)>[\bvar{x},\lvar{y}]}}{\E<\distr*(i)>{\func{g}<\bvar{w}(i-1)>[\bvar{x},\lvar{y}]}} + \beta^2\norm*{\E<\distr*(i)>{\func{g}<\bvar{w}(i-1)>[\bvar{x},\lvar{y}]}}<2>[2]\label{eq:statistic-relu-function-change-wrt-gradient}
            \end{align}
            where the first term of the last inequality holds because $\bvar{x} = \bvar{w}(i-1)[\otimes 2]\bvar{x} + \bvar{x}<\bvar{w}(i-1)[\bot]>$ and $\innerprod*{\bvar{z}<\bvar{w}(i-1)[\bot]>}{\bvar{w}(i-1)[\otimes 2]\bvar{x}} = 0$ for any $\bvar{z}\in \reals[d]$, the second term holds due to lemma \ref{lma:relative-smoothness-of-statistic-relu} and that the projection step (line \ref{line:psgd-projection-step}) of algorithm \ref{algo:projected-stochastic-gradient-descent-for-minimizing-convex-surrogate-loss} ensures that $\norm{\bvar{w}(i-1)}_2 = 1$. Then, observe that $\E<\distr*(i)>{\func{g}<\bvar{w}(i-1)>[\bvar{x},\lvar{y}]}$ lies on the orthogonal subspace of $\bvar{w}(i-1)$ (see figure \ref{fig:length-increase-appendix}), which implies
            \begin{equation*}
                \norm*{\bvar{u}(i)}<2>[2] = \norm*{\bvar{w}(i - 1)}<2>[2] + \beta \norm*{\E<\distr*(i)>{\func{g}<\bvar{w}(i-1)>[\bvar{x},\lvar{y}]}}<2>[2] \geq 1
            \end{equation*}
            \begin{figure}[ht]
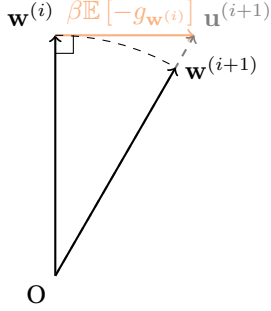

                \centering
                \adjustbox{max width=0.3\textwidth}{\drawlengthincrease(0.8)}
                \caption{Weight update step (line \ref{line:psgd-gradient-update}) and projection step (line \ref{line:psgd-projection-step}) in algorithm \ref{algo:projected-stochastic-gradient-descent-for-minimizing-convex-surrogate-loss}.}
                \label{fig:length-increase-appendix}
            \end{figure}

            Now, since $\loss<\distr>{\bvar{w}} = \E<\sbr{\bvar{x}, y}\sim\distr>{y\cdot\max\sbr*{0, \innerprod{\bvar{x}}{\bvar{w}}}}$ by definition, there is $\loss<\distr>{\bvar{w}} = \norm{\bvar{w}}_2\loss<\distr>{\bvar*{w}}$, which, along with the fact that $\norm{\bvar{u}(i)}<2>\geq 1$, indicates $\loss<\distr>{\bvar{w}(i)}\leq \loss<\distr>{\bvar{u}(i)}$. Therefore, applying $\loss<\distr>{\bvar{w}(i)}\leq \loss<\distr>{\bvar{u}(i)}$ to the LHS of inequality \eqref{eq:statistic-relu-function-change-wrt-gradient} gives
            \begin{equation}
                \loss<\distr>{\bvar{w}(i)} - \loss<\distr>{\bvar{w}(i - 1)}\leq -\beta \innerprod{\E<\distr>{\func{g}<\bvar{w}(i-1)>[\bvar{x},\lvar{y}]}}{\E<\distr*(i)>{\func{g}<\bvar{w}(i-1)>[\bvar{x},\lvar{y}]}} + \beta^2\norm*{\E<\distr*(i)>{\func{g}<\bvar{w}(i-1)>[\bvar{x},\lvar{y}]}}<2>[2].
            \end{equation}

            Then, conditioning on the previous samples $\distr*(1),\ldots, \distr*(i-1)$, we have, by the independence between $\distr$ and $\distr*(i)$ and Jensen's inequality, that 
            \begin{align*}
                &\E<\distr*(i)\sim\distr>{\loss<\distr>{\bvar{w}(i)} - \loss<\distr>{\bvar{w}(i - 1)}\cond \distr*(1), \ldots,\distr*(i-1)}
                \\
                =& -\beta\norm*{\E<\distr>{\func{g}<\bvar{w}(i-1)>[\bvar{x},y]}}+ \beta^2\E<\distr*(i)\sim\distr>{\norm*{\E<\distr*(i)>{\func{g}<\bvar{w}(i-1)>[\bvar{x},y]}}<2>[2]}
                \\
                \leq& -\beta\norm*{\E<\distr>{\func{g}<\bvar{w}(i-1)>[\bvar{x},y]}}+ \beta^2\E<\distr*(i)\sim\distr>{\E<\distr*(i)>{\norm*{\func{g}<\bvar{w}(i-1)>[\bvar{x},y]}<2>[2]}}
                \\
                \leq& -\beta\norm*{\E<\distr>{\func{g}<\bvar{w}(i-1)>[\bvar{x},y]}}<2>[2] + \frac{\beta^2 d}{2}
            \end{align*}
            where the last inequality holds because $\E<\distr*(i)\sim\distr>{\E<\distr*(i)>{\norm{\func{g}<\bvar{w}(i-1)>[\bvar{x},y]}<2>[2]}} = \E<\distr>{\norm{\func{g}<\bvar{w}(i-1)>[\bvar{x},y]}<2>[2]}$ and property (3) of lemma \ref{lma:upper-bound-on-relu-and-its-gradient-norm} gives $\E<\distr>{\norm{\func{g}<\bvar{w}(i-1)>[\bvar{x},y]}<2>[2]}\leq d/2$. Averaging the above inequality over all $T$ iterations and using the \emph{law of total expectation} gives
            \begin{align*}
                \frac{1}{T}\sum_{i=1}^T\norm*{\E<\distr>{\func{g}<\bvar{w}(i)>[\bvar{x},y]}}<2>[2]\leq&\frac{\loss<\distr>{\bvar{w}(0)} - \E<\distr*(T+1)\sim\distr>{\loss<\distr>{\bvar{w}(T+1)}}}{\beta T} + \frac{\beta d}{2}\\
                \leq& \frac{1}{\sqrt{2\pi}\beta T} + \frac{\beta d}{2}
            \end{align*}
            where the last inequality is derived through using property (1) of lemma \ref{lma:upper-bound-on-relu-and-its-gradient-norm} with $\norm{\bvar{w}(i)}_2 = 1$ for all $i = 0, \ldots, T+1$, and the fact that $\loss<\distr>{\bvar{w}}\geq 0$. Taking $\beta = \sqrt{1/T d}$ gives the first claim.

            To obtain the high-probability version, we define
            \begin{equation*}
                \func{G}<T>[\bvar{w}(1), \ldots, \bvar{w}(T)] = \frac{1}{T}\sum_{i=1}^T\norm*{\E<(\bvar{x},y)\sim\distr>{\func{g}<\bvar{w}(i)>[\bvar{x},y]}}<2>[2]
            \end{equation*}
            which implies
            \begin{align*}
                &\abs{\func{G}<T>[\bvar{w}(1),\ldots, \bvar{w}(i), \ldots, \bvar{w}(T)] - \func{G}<T>[\bvar{w}(1),\ldots, \bvar{w}(i)['], \ldots, \bvar{w}(T)]}
                \\
                \leq& \frac{1}{T}\abs{\norm{\E{\func{g}<\bvar{w}(i)>[\bvar{x},\lvar{y}]}}<2>[2] - \norm{\E{\func{g}<\bvar{w}(i) '>[\bvar{x},\lvar{y}]}}<2>[2]}
                \\
                \leq& \frac{1}{\sqrt{2\pi}T}
            \end{align*}
            where the last step holds due to property (2) of lemma \ref{lma:upper-bound-on-relu-and-its-gradient-norm}. Now using lemma \ref{lma:concentration-bound-by-devroye}, we get
            \begin{equation*}
                \prob{\func{G}<T>[\bvarseq{w}(T)] - \E<\distr*(1),\ldots,\distr*(T)\sim\distr>{\func{G}<T>[\bvarseq{w}(T)]}\geq t}\leq \exp\sbr{-4\pi t^2T}
            \end{equation*}
            Choosing $T\geq \sbr*{4d + \ln\sbr*{1/\delta}}/\epsilon^4$ gives $\E{\func{G}<T>[\bvarseq{w}(T)]}\leq \epsilon^2/2$ by our first claim, and, hence, 
            \begin{equation*}
                \prob{\frac{1}{T}\sum_{i=1}^T\norm{\E{\func{g}<\bvar{w}(i)>[\bvar{x},y]}}<2>[2]\leq \epsilon^2} = \prob{\func{G}<T>[\bvarseq{w}(T)]\leq\epsilon^2}\geq 1 - \delta
            \end{equation*}
             Finally, since $\min_{i = 1,\ldots,T}\norm{\E{\func{g}<\bvar{w}(i)>[\bvar{x},y]}}<2>[2]$ is at most the average, we obtain the second claim.
        \end{proof}

        Below are a few tools we needed in the proof of proposition \ref{prop:upper-bound-on-the-norm-of-statistic-relu-gradient}.
        \begin{lemma}[Theorem 2.2 of \citet{devroye2001combinatorial}]\label{lma:concentration-bound-by-devroye}
            Suppose that $\lvarseq{X}<d>\in\mathcal{X}$ are independent random variables, and let $f: \mathcal{X}^d\rightarrow \reals$. Let $\lvarseq{c}<n>$ satisfies
            \begin{equation*}
                \sup_{\lvarseq{x}<d>,\lvar{x}<i> '}\abs{f(\lvarseq{x}<i>,\ldots,\lvar{x}<d>) - f(\lvar{x}<1>,\ldots,\lvar{x}<i> ',\ldots,\lvar{x}<d>)}\leq \lvar{c}<i>
            \end{equation*}
            for $i\in[d]$. Then
            \begin{equation*}
                \prob{f(X) - \E{f(X)}\geq t}\leq \exp\sbr{-\frac{2t^2}{\sum_{i\in[d]}\lvar{c}<2>[2]}}.
            \end{equation*}
        \end{lemma}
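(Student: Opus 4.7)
The plan is to prove this classical bounded differences inequality (McDiarmid's inequality) by combining a Doob martingale decomposition of $f(X) - \E{f(X)}$ with the Chernoff exponential moment method and Hoeffding's lemma applied to each martingale difference.

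First, I would set up the Doob martingale adapted to the filtration generated by $X_1, \ldots, X_d$. Let $Z_0 = \E{f(X)}$ and $Z_i = \E{f(X) \mid X_1, \ldots, X_i}$ for $i = 1, \ldots, d$, so that $Z_d = f(X)$ and the differences $V_i = Z_i - Z_{i-1}$ telescope to $f(X) - \E{f(X)}$. Since the $X_j$'s are independent, one can write $Z_i$ as an integration of $f$ over the joint distribution of $X_{i+1}, \ldots, X_d$ with the first $i$ coordinates frozen, and by the tower property each $V_i$ has zero mean conditional on $X_1, \ldots, X_{i-1}$.

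Next, I would show that, conditional on $X_1, \ldots, X_{i-1}$, the martingale difference $V_i$ lies in an interval of length at most $c_i$. Introducing independent copies $X_j'$ and defining
\begin{align*}
A_i &= \inf_{x}\, \E{f(X_1, \ldots, X_{i-1}, x, X_{i+1}', \ldots, X_d') \mid X_1, \ldots, X_{i-1}} - Z_{i-1},\\
B_i &= \sup_{x}\, \E{f(X_1, \ldots, X_{i-1}, x, X_{i+1}', \ldots, X_d') \mid X_1, \ldots, X_{i-1}} - Z_{i-1},
\end{align*}
I would push the $\sup$ and $\inf$ inside the expectation and apply the bounded differences hypothesis at the $i$th coordinate to obtain $B_i - A_i \leq c_i$ almost surely. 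This is the only step that requires care, but it is essentially bookkeeping.

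Finally, I would apply the Chernoff bound. For any $s > 0$, $\prob{f(X) - \E{f(X)} \geq t} \leq e^{-st}\, \E{\exp\sbr{s\sum_{i=1}^d V_i}}$. Peeling off one layer of conditioning at a time via the tower property and invoking Hoeffding's lemma, which states that a zero-conditional-mean random variable confined to an interval of length $c$ satisfies $\E{e^{sV} \mid \mathcal{F}} \leq \exp(s^2 c^2/8)$, I obtain $\E{\exp(s\sum_i V_i)} \leq \exp(s^2 \sum_i c_i^2/8)$. Optimizing with $s = 4t/\sum_i c_i^2$ yields the advertised bound $\exp(-2t^2/\sum_i c_i^2)$. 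Since the entire argument follows the textbook template for McDiarmid's inequality, the main ``obstacle'' is merely the verification of the conditional range bound $B_i - A_i \leq c_i$; apart from this there is no genuine technical difficulty, which is consistent with the fact that the statement is quoted directly from \citet{devroye2001combinatorial} rather than proved anew.
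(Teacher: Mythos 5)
Your proof is correct, and it is the standard Doob-martingale plus Azuma--Hoeffding argument for McDiarmid's bounded-differences inequality (the conditional range bound $B_i - A_i \le c_i$, Hoeffding's lemma on each increment, and the Chernoff optimization $s = 4t/\sum_i c_i^2$ all check out). The paper does not actually prove this lemma---it quotes it verbatim as Theorem 2.2 of Devroye and Lugosi---so there is no internal proof to compare against, but your reconstruction matches the canonical argument that the cited source itself uses.
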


        \begin{lemma}\label{lma:upper-bound-on-relu-and-its-gradient-norm}
            Let $\distr$ be a distribution on $\reals[d]\times\interval$ with standard normal $\bvar{x}$-marginal, $\loss<\distr>{\bvar{w}} = \E<(\bvar{x},\lvar{y})\sim\distr>{y\cdot\max(0, \innerprod{\bvar{x}}{\bvar{w}})}$, and $\func{g}<\bvar{w}>[\bvar{x}, y] = y\cdot\bvar{x}<\bvar{w}[\bot]>\cdot\indicator[\bvar{x}\in\hypothesis[\bvar{w}]]$. Then, for any $\bvar{w}\in\reals[d]$, we have the following properties:
            \begin{enumerate}
                \item $\loss<\distr>{\bvar{w}}\leq \norm{\bvar{w}}_2/\sqrt{2\pi}$,
                \item $\norm{\E<(\bvar{x}, y)\sim\distr>{\func{g}<\bvar{w}>[\bvar{x}, y]}}<2>\leq 1/\sqrt{2\pi}$,
                \item $\E<(\bvar{x},y)\sim\distr>{\norm{\func{g}<\bvar{w}>[\bvar{x}, y]}<2>[2]}\leq d/2$.
            \end{enumerate}
        \end{lemma}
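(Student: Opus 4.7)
The plan is to handle each of the three bounds separately, all via straightforward reductions to standard Gaussian moment computations using $y\in[-1,+1]$ and the fact that under a standard normal $\bvar{x}$-marginal, the projection $\bvar{x}<\bvar{w}[\bot]>$ is a standard Gaussian on the $(d-1)$-dimensional subspace $\bvar{w}[\bot]$ and is independent of $\innerprod{\bvar{x}}{\bvar{w}}$.

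For property (1), I would first pull the bound $|y|\leq 1$ outside, so that $\loss<\distr>{\bvar{w}}\leq \E<\bvar{x}\sim\gaussian[d]>{\max(0,\innerprod{\bvar{x}}{\bvar{w}})}$. Since $\innerprod{\bvar{x}}{\bvar{w}}\sim\gaussian[][0][\norm{\bvar{w}}<2>[2]]$, the expectation of the ReLU of a centered Gaussian with standard deviation $\norm{\bvar{w}}<2>$ equals $\norm{\bvar{w}}<2>/\sqrt{2\pi}$ (the standard half-normal mean, scaled), which gives the claim.

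For property (2), I would choose $\bvar{u} = \E{\func{g}<\bvar{w}>[\bvar{x},y]}/\norm{\E{\func{g}<\bvar{w}>[\bvar{x},y]}}<2>$; note that $\bvar{u}\in\bvar{w}[\bot]$ since every realization of $\func{g}<\bvar{w}>$ lies in that subspace, so $\innerprod{\bvar{x}<\bvar{w}[\bot]>}{\bvar{u}} = \innerprod{\bvar{x}}{\bvar{u}}$. Then
\begin{equation*}
\norm{\E{\func{g}<\bvar{w}>[\bvar{x},y]}}<2> = \E{y\cdot\innerprod{\bvar{x}}{\bvar{u}}\cdot\indicator[\bvar{x}\in\hypothesis[\bvar{w}]]}\leq \E{\abs{\innerprod{\bvar{x}}{\bvar{u}}}\cdot\indicator[\bvar{x}\in\hypothesis[\bvar{w}]]}.
\end{equation*}
Because $\bvar{u}\perp\bvar{w}$, the random variables $\innerprod{\bvar{x}}{\bvar{u}}$ and $\indicator[\bvar{x}\in\hypothesis[\bvar{w}]] = \indicator[\innerprod{\bvar{x}}{\bvar{w}}\geq 0]$ are independent Gaussian-induced quantities, so the expectation factors as $\E{\abs{\innerprod{\bvar{x}}{\bvar{u}}}}\cdot\prob{\bvar{x}\in\hypothesis[\bvar{w}]} = \sqrt{2/\pi}\cdot (1/2) = 1/\sqrt{2\pi}$.

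For property (3), the $y^2\leq 1$ bound reduces the quantity to $\E{\norm{\bvar{x}<\bvar{w}[\bot]>}<2>[2]\cdot\indicator[\bvar{x}\in\hypothesis[\bvar{w}]]}$. Again using independence of $\bvar{x}<\bvar{w}[\bot]>$ and $\indicator[\bvar{x}\in\hypothesis[\bvar{w}]]$, this factors into $\E{\norm{\bvar{x}<\bvar{w}[\bot]>}<2>[2]}\cdot (1/2) = (d-1)/2\leq d/2$, since $\bvar{x}<\bvar{w}[\bot]>$ is a standard Gaussian in $d-1$ dimensions. The only mild subtlety anywhere in this lemma is making sure in (2) that the normalizing direction $\bvar{u}$ really lies in $\bvar{w}[\bot]$ so that the projection can be dropped and the clean independence argument applies; everything else is direct Gaussian bookkeeping, so I do not anticipate any real obstacle.
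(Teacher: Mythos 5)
Your proposal is correct and follows essentially the same path as the paper for all three parts: part (1) is verbatim the paper's calculation. For parts (2) and (3), the difference is that you factorize the expectation using the independence of $\innerprod{\bvar{x}}{\bvar{u}}$ (resp.\ $\bvar{x}<\bvar{w}[\bot]>$) and $\indicator[\bvar{x}\in\hypothesis[\bvar{w}]]$, whereas the paper simply drops the indicator $\indicator[\bvar{x}\in\hypothesis[\bvar{w}]]$ in (2) and applies the crude bound $\norm{\bvar{x}<\bvar{w}[\bot]>}_2\leq\norm{\bvar{x}}_2$ followed by a symmetry argument in (3). The numbers come out the same either way for (2), and your factorization gives the slightly sharper $(d-1)/2$ for (3). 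Your identification of $\bvar{u}$ as the normalized gradient is the same vector the paper picks via an argmax over the unit sphere. One small bookkeeping point you should make explicit: when $\E{\func{g}<\bvar{w}>[\bvar{x},y]}=\bvar{0}$ your $\bvar{u}$ is undefined, but then (2) is trivially true; and in (2) note that the pointwise bound $y\cdot\innerprod{\bvar{x}}{\bvar{u}}\leq\abs{\innerprod{\bvar{x}}{\bvar{u}}}$ uses $\abs{y}\leq 1$, which is exactly the content of $y\in\interval$.
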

        \begin{proof}
            To show the first claim, recall that 
            \begin{align*}
                \loss<\distr>{\bvar{w}} =& \E<\sbr{\bvar{x}, y}\sim\distr>{y\cdot\max\sbr*{0, \innerprod{\bvar{x}}{\bvar{w}}}}\\
                \leq& \E<\bvar{x}\sim\distr<\bvar{x}>>{\innerprod{\bvar{x}}{\bvar{w}}\cdot\indicator[\innerprod{\bvar{x}}{\bvar{w}}\geq 0]}\\
                \ceq[(i)]&\frac{\norm{\bvar{w}}_2}{\sqrt{2\pi}}\int_0^{+\infty}ze^{-z^2/2}dz\\
                =&\frac{\norm{\bvar{w}}_2}{\sqrt{2\pi}}
            \end{align*}
            where inequality (i) holds because $\bvar{x}\sim\gaussian[d]$ and, hence, $\innerprod{\bvar{x}}{\bvar*{w}}\sim\gaussian$. 
            
            To prove property (2), let $\bvar{u} = \argmax_{\norm{\bvar{z}}_2 =1 }\innerprod{\E<(\bvar{x}, y)\sim\distr>{y\cdot\bvar{x}<\bvar{w}[\bot]>\cdot\indicator[\bvar{x}\in\hypothesis[\bvar{w}]]}}{\bvar{z}}$, we have
            \begin{align*}
                &\norm*{\E<(\bvar{x}, y)\sim\distr>{y\cdot\bvar{x}<\bvar{w}[\bot]>\cdot\indicator[\bvar{x}\in\hypothesis[\bvar{w}]]}}<2>
                \\
                =& \E{y\cdot\innerprod{\bvar{x}<\bvar{w}[\bot]>}{\bvar{u}}\cdot\indicator[\bvar{x}\in\hypothesis[\bvar{w}]]}
                \\
                \leq& \E{\abs{\innerprod{\bvar{x}<\bvar{w}[\bot]>}{\bvar{u}}}\cdot\sbr{\indicator[\bvar{x}\in\hypothesis[\bvar{w}], \innerprod{\bvar{x}<\bvar{w}[\bot]>}{\bvar{u}}\geq 0] - \indicator[\bvar{x}\in\hypothesis[\bvar{w}], \innerprod{\bvar{x}<\bvar{w}[\bot]>}{\bvar{u}}< 0]}}
                \\
                \leq& \E{\innerprod{\bvar{x}<\bvar{w}[\bot]>}{\bvar{u}}\cdot\indicator[\innerprod{\bvar{x}<\bvar{w}[\bot]>}{\bvar{u}}\geq 0]}
                \\
                =&\frac{1}{\sqrt{2\pi}}.
            \end{align*}

            To obtain the last property, notice that $\norm{\bvar{x}<\bvar{w}[\bot]>}<2>\leq\norm{\bvar{x}}<2>$ because $\bvar{x}<\bvar{w}[\bot]>$ is a projection of $\bvar{x}$, then we have
            \begin{align*}
                \E<(\bvar{x}, y)\sim\distr>{\norm{y\cdot\bvar{x}<\bvar{w}[\bot]>\cdot\indicator[\innerprod{\bvar{x}}{\bvar{w}}>0]}<2>[2]}\leq&\E{\norm{\bvar{x}}<2>[2]\cdot\indicator[\bvar{x}\in\hypothesis[\bvar{w}]]}
                \\
                \cleq[(i)]&\frac{1}{2}\E{\norm{\bvar{x}}<2>[2]}
                \\
                =&d/2
            \end{align*}
            where inequality (i) holds because $y\geq 0$ and the symmetry of standard normal distribution.
        \end{proof}

    \section{Optimality Analysis of Approximate Stationary Point}
    \label{sec:optimality-analysis-of-approximate-stationary-point}
        We present our analysis for the main theorem of our algorithmic results in this section.
        \begin{theorem}[Theorem \ref{thm:main-theorem}]\label{thm:main-theorem-appendix}
            Let $\distr$ be a distribution on $\reals[d]\times\booldomain$ with standard normal $\bvar{x}$-marginal, and $\conceptclass$ be a class of binary classifiers on $\reals[d]\times\booldomain$. If there exists a unit vector $\bvar{v}\in\reals[d]$ such that, for some sufficiently small $\epsilon\in[0, 1/e]$, $\min_{c\in\conceptclass}\prob<\sbr{\bvar{x}, y}\sim\distr>{\bvar{x}\in\hypothesis[\bvar{v}]\isect c(\bvar{x})\neq y}\leq\epsilon$, then, with at most $\bigO*{d/\epsilon^6}$ examples, algorithm \ref{algo:conditional-classification-with-homogeneous-halfspaces-for-finite-classes} will return a $\bvar{w}(c)$ such that $\prob<\sbr{\bvar{x}, y}\sim\distr>{\bvar{x}\in\hypothesis[\bvar{w}(c)]\isect c(\bvar{x})\neq y} = \bigO*{\sqrt{\epsilon}}$ with probability at least $1 - \delta$ and running time at most $\bigO{d\abs{\conceptclass}/\epsilon^6}$. 
        \end{theorem}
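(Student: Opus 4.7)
The plan is to lift Lemma~\ref{lma:psgd-returns-at-least-one-optimal-solution} from a single classifier to the whole finite family $\conceptclass$, and then show that the empirical selection steps of Algorithm~\ref{algo:conditional-classification-with-homogeneous-halfspaces-for-finite-classes} do not spoil the guarantee. Let $c^\ast\in\conceptclass$ achieve the hypothesized bound, so in the relabelled distribution $\distr(c^\ast)$ built on line~\ref{line:data-mapping} we have $\prob<\distr(c^\ast)>{\bvar{x}\in\hypothesis[\bvar{v}]\isect y=1}\leq\epsilon$. The first subtlety is that Lemma~\ref{lma:psgd-returns-at-least-one-optimal-solution} requires $\theta(\bvar{v},\bvar{w}(0))\in[0,\pi/2)$, and $\bvar{v}$ is unknown; this is precisely why line~\ref{line:clc-call-psgd} unions the PSGD output from $\bvar{w}(0)$ and from $-\bvar{w}(0)$, one of which is certain to satisfy the angular hypothesis. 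Applying Lemma~\ref{lma:psgd-returns-at-least-one-optimal-solution} to the correctly initialized trajectory with failure budget $\delta/(2|\conceptclass|)$ and the prescribed $T=(4d+\ln(8|\conceptclass|/\delta))/\epsilon^4$, $N=1600\ln^2(16T|\conceptclass|/\delta)/\epsilon^2$ then yields some $\bvar{w}^\ast\in\parameterset{W}(c^\ast)$ with true intersection error at most $\tfrac{5}{2}(\epsilon\sqrt{\ln\epsilon^{-1}})^{1/2}=\bigO{\sqrt{\epsilon}}$, which unwinds to $\prob<\distr>{\bvar{x}\in\hypothesis[\bvar{w}^\ast]\isect c^\ast(\bvar{x})\neq y}=\bigO{\sqrt{\epsilon}}$.

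Next I would control the two empirical minimizations on line~\ref{line:find-the-loss-minimizer} and in the final return. For any fixed pair $(c,\bvar{w})$, the quantity $\prob<\distr>{\bvar{x}\in\hypothesis[\bvar{w}]\isect c(\bvar{x})\neq y}$ is the mean of an indicator, so Hoeffding's inequality on the sample $\distr*$ of size $|\distr*|=\ln(4|\conceptclass|T/\delta)/(2\epsilon)$ makes the empirical probability differ from the true one by at most $\bigO{\sqrt{\epsilon}}$ with probability at least $1-\delta/(2|\conceptclass|T)$. A union bound over the at most $2|\conceptclass|T$ candidate pairs $(c,\bvar{w})$ that Algorithm~\ref{algo:conditional-classification-with-homogeneous-halfspaces-for-finite-classes} ever scores consumes the remaining $\delta/2$ failure budget and yields uniform $\bigO{\sqrt{\epsilon}}$-accuracy of all empirical estimates. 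Under this event, the empirical winner $\bvar{w}(c^\ast)$ for the good classifier satisfies true error at most that of $\bvar{w}^\ast$ plus $\bigO{\sqrt{\epsilon}}$, and the outer argmin over $\conceptclass$ returns a pair whose true error is no worse than the empirical error of $(c^\ast,\bvar{w}(c^\ast))$ plus a further $\bigO{\sqrt{\epsilon}}$ slack, establishing the claimed $\bigO{\sqrt{\epsilon}}$ bound.

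Finally I would tally resources: the selection sample $\distr*$ is only $\tilde{\bigO}{1/\epsilon}$, while the PSGD loop for each $c$ draws $N$ fresh examples on each of $T$ iterations of Algorithm~\ref{algo:projected-stochastic-gradient-descent-for-minimizing-convex-surrogate-loss} (line~\ref{line:rejecting-sampling}), for a per-classifier cost of $TN=\tilde{\bigO}{d/\epsilon^{6}}$. Crucially, all classifiers can share one underlying pool of $\bvar{x}$-examples and merely relabel it via $\indicator[c(\bvar{x})\neq y]$, so the total sample complexity remains $\tilde{\bigO}{d/\epsilon^{6}}$; the running time, however, does scale with $|\conceptclass|$ since gradients must be recomputed per classifier, giving $\tilde{\bigO}{d|\conceptclass|/\epsilon^{6}}$.

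The conceptually delicate step, and the one I expect to need the most care, is the first: guaranteeing that among the two PSGD runs there is one whose initial iterate lies in the correct hemisphere with respect to the unknown $\bvar{v}$, and that the inductive angular contraction used inside Lemma~\ref{lma:psgd-returns-at-least-one-optimal-solution} is preserved across the sequence of iterates on that run. Once this is handled, the remaining pieces are routine Hoeffding plus union bounds, and the resource tallies follow directly from the parameter choices in the pseudocode.
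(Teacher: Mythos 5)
Your proposal tracks the paper's own proof essentially step for step: identify the best $c^\ast$, invoke Lemma~\ref{lma:psgd-returns-at-least-one-optimal-solution} on the PSGD run whose initialization lands in the correct hemisphere (which is why Algorithm~\ref{algo:conditional-classification-with-homogeneous-halfspaces-for-finite-classes} runs PSGD from both $\pm\bvar{w}(0)$), then Chernoff/Hoeffding plus union bounds over the $O(|\conceptclass|T)$ candidates scored by the two empirical argmins, and finally the resource count $\bigO{TN}=\bigO*{d/\epsilon^6}$ samples (shared across $c$) with running time $\bigO{|\conceptclass|TN}$. This is the same decomposition, the same key lemma, and the same accounting as the paper, so no further comparison is needed.
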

        \begin{proof}
            For conciseness of the proof, let the error indicator function $\func{f}<\bvar{w}>(c):\reals[d]\times\booldomain\rightarrow\booldomain$ be such that $\func{f}<\bvar{w}>(c)[\bvar{x},y] = \indicator[\bvar{x}\in\hypothesis[\bvar{w}]\isect c(\bvar{x})\neq y]$.
            
            Consider the $c\in\conceptclass$ that satisfies $\min_{\bvar{w}}\prob*<\distr>{\func{f}<\bvar{w}>(c)[\bvar{x},y] = 1}\leq \epsilon$. For $T = \sbr*{4d + \ln\sbr*{8/\delta_1}}/\epsilon^4$, $N \geq 1600\ln^2\sbr*{16T/\delta_1}/\epsilon^2$, lemma \ref{lma:psgd-returns-at-least-one-optimal-solution-appendix} and a union bound over the two calls of algorithm \ref{algo:projected-stochastic-gradient-descent-for-minimizing-convex-surrogate-loss} guarantees that there exists a $\bvar{w} '\in\parameterset{W}(c)$ such that $\prob<\sbr{\bvar{x}, y}\sim\distr>{\func{f}<\bvar{w} '>(c)[\bvar{x},y]}\leq \sbr{C + 1}\sbr*{\epsilon\sqrt{\ln\epsilon^{-1}}}^{1/2}$ with probability at least $1 - \delta_1/2$. 

            While estimating each $\bvar{w}\in\parameterset{W}(c)$ at line \ref{line:find-the-loss-minimizer} with $\ln(4T/\delta_1)/2\epsilon$ samples in $\distr*$, we know that
            \begin{equation*}
                \prob{\abs{\E<\distr*>{\func{f}<\bvar{w}>(c)[\bvar{x},y]} - \E<\distr>{\func{f}<\bvar{w}>(c)[\bvar{x},y]} }> \sqrt{\epsilon}}\leq \delta_1/2T
            \end{equation*}
            by lemma \ref{lma:chernoff-bound}. Taking a union bound over all $\bvar{w}\in\parameterset{W}(c)$ gives
            \begin{equation*}
                \prob{\E<\distr>{\func{f}<\bvar{w}(c)>(c)[\bvar{x},y]}> \E<\distr>{\func{f}<\bvar{w} '>(c)[\bvar{x},y]} + 2\sqrt{\epsilon}}\leq \delta_1/2
            \end{equation*}
            In addition, taking another union bound taking over lines \ref{line:clc-call-psgd},\ref{line:find-the-loss-minimizer} of algorithm \ref{algo:conditional-classification-with-homogeneous-halfspaces-for-finite-classes} and using the optimality of $\bvar{w}(c)$, we can conclude that $\prob*<\sbr{\bvar{x}, y}\sim\distr>{\bvar{x}\in\hypothesis[\bvar{w}(c)]\isect c(\bvar{x})\neq y}=  \bigO*{\sqrt{\epsilon}}$ with probability at least $1 - \delta_1$ in this iteration. 
            
            Finally, taking an union bound again over all $c\in\conceptclass$ and choosing $\delta_1 = \delta/\abs{\conceptclass}$, we know that the total number of examples needed is $\bigO{TN} =  \bigO{d\ln^2\sbr*{16T\abs{\conceptclass}/\delta}/\epsilon^6} = \bigO*{d/\epsilon^6}$ and the running time is simply $\bigO{\abs{\conceptclass}TN} = \bigO*{d\abs{\conceptclass}/\epsilon^6}$, since we can reuse the example for each $c\in\conceptclass$.
        \end{proof}
        \begin{proposition}[Proposition \ref{prop:gradient-projection-lower-bound}]\label{prop:gradient-projection-lower-bound-appendix}
            Let $\distr$ be a distribution on $\reals[d]\times\booldomain$ with standard normal $\bvar{x}$-marginal, and $\func{g}<\bvar{w}>[\bvar{x}, y] = y\cdot\bvar{x}<\bvar{w}[\bot]>\cdot\indicator[\bvar{x}\in\hypothesis[\bvar{w}]]$. Suppose $\bvar{v},\bvar{w}\in\reals[d]$ are unit vectors such that $\prob<\sbr{\bvar{x}, y}\sim\distr>{\bvar{x}\in\hypothesis[\bvar{v}]\isect y = 1}\leq\epsilon$ and $\theta(\bvar{v}, \bvar{w})\in[0,\pi/2)$, then, if $\prob<\sbr{\bvar{x}, y}\sim\distr>{\bvar{x}\in\hypothesis[\bvar{w}]\isect y = 1}\geq \frac{5}{2}\sbr*{\epsilon\sqrt{\ln\epsilon^{-1}}}^{1/2}$, it holds that
            \begin{equation*}
                \innerprod{\E<(\bvar{x}, y)\sim\distr>{-\func{g}<\bvar{w}>[\bvar{x},\lvar{y}]}}{\bvar*{v}<\bvar{w}[\bot]>} \geq \frac{2}{5}\epsilon\sqrt{\ln\epsilon^{-1}}
            \end{equation*}
            for some sufficiently small $\epsilon\in[0, 1/e]$.
        \end{proposition}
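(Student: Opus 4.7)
The target quantity is $\innerprod{\E<(\bvar{x},y)\sim\distr>{-\func{g}<\bvar{w}>[\bvar{x},y]}}{\bvar*{v}<\bvar{w}[\bot]>}$. Since $\bvar*{v}<\bvar{w}[\bot]>$ is orthogonal to $\bvar{w}$, we have $\innerprod{\bvar{x}<\bvar{w}[\bot]>}{\bvar*{v}<\bvar{w}[\bot]>}=\innerprod{\bvar{x}}{\bvar*{v}<\bvar{w}[\bot]>}$, so the target simplifies to $-\E{y\indicator[\bvar{x}\in\hypothesis[\bvar{w}]]\,z_2}$, where I set $z_1=\innerprod{\bvar{x}}{\bvar{w}}$ and $z_2=\innerprod{\bvar{x}}{\bvar*{v}<\bvar{w}[\bot]>}$; these are independent standard normals under $\distr<\bvar{x}>$. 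Writing $\bvar{v}=\cos\theta\,\bvar{w}+\sin\theta\,\bvar*{v}<\bvar{w}[\bot]>$ with $\theta=\theta(\bvar{v},\bvar{w})\in[0,\pi/2)$, the memberships translate to $\hypothesis[\bvar{w}]=\{z_1\geq 0\}$ and $\hypothesis[\bvar{v}]=\{\cos\theta\,z_1+\sin\theta\,z_2\geq 0\}$, so the problem becomes two-dimensional.

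I partition $\hypothesis[\bvar{w}]=R\cup A$ with $R=\hypothesis[\bvar{w}]\backslash\hypothesis[\bvar{v}]$ and $A=\hypothesis[\bvar{w}]\isect\hypothesis[\bvar{v}]$, as in Figure \ref{fig:error-region}. On $R$, the joint constraint $z_1\geq 0$ and $\cos\theta\,z_1+\sin\theta\,z_2<0$ forces $z_2<-\cot\theta\,z_1\leq 0$, so every summand $-z_2>0$ contributes positively. On $A$, the sign of $z_2$ is not determined, and this region produces a potentially adversarial negative term that must be controlled. The idea is that $R$ must carry nearly all of the large quantity $\prob<(\bvar{x}, y)\sim\distr>{\bvar{x}\in\hypothesis[\bvar{w}]\isect y=1}$ (since $\hypothesis[\bvar{v}]$ carries only $\epsilon$), which will give a positive contribution of order $\epsilon\sqrt{\ln\epsilon^{-1}}$ big enough to dominate the worst-case negative contribution.

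For the lower bound on $R$, I apply the truncation $-z_2\geq r\,\indicator[-z_2\geq r]$, giving
\begin{equation*}
\E{y\indicator[R]\,(-z_2)}\geq r\sbr*{\prob<\distr>{\bvar{x}\in R\isect y=1}-\prob<\distr<\bvar{x}>>{\bvar{x}\in R\isect -z_2<r}}.
\end{equation*}
By independence of $z_1,z_2$ and maximality of the Gaussian density at $0$, $\prob{\bvar{x}\in R\isect -z_2<r}\leq \prob{z_1\geq 0}\cdot\prob{-r<z_2<0}\leq r/(2\sqrt{2\pi})$. The hypotheses give $\prob{\bvar{x}\in R\isect y=1}\geq\prob{\bvar{x}\in\hypothesis[\bvar{w}]\isect y=1}-\prob{\bvar{x}\in\hypothesis[\bvar{v}]\isect y=1}\geq \tfrac{5}{2}\sbr*{\epsilon\sqrt{\ln\epsilon^{-1}}}^{1/2}-\epsilon\geq \tfrac{3}{2}\sbr*{\epsilon\sqrt{\ln\epsilon^{-1}}}^{1/2}$ for $\epsilon\leq 1/e$, using $\epsilon\leq\sbr*{\epsilon\sqrt{\ln\epsilon^{-1}}}^{1/2}$. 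Optimizing $r=\sqrt{2\pi}\,\prob{\bvar{x}\in R\isect y=1}$ yields a lower bound of $\tfrac{\sqrt{2\pi}}{2}\prob{\bvar{x}\in R\isect y=1}^2\geq \tfrac{9\sqrt{2\pi}}{8}\,\epsilon\sqrt{\ln\epsilon^{-1}}$.

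For the magnitude of the contribution from $A$, $\abs*{\E{y\indicator[A]\,z_2}}\leq \E{y\indicator[\hypothesis[\bvar{v}]]\abs{z_2}}$. I truncate at $r'=\sqrt{2\ln\epsilon^{-1}}$ and use the identity $\E{\abs{z_2}\indicator[\abs{z_2}>r']}=\sqrt{2/\pi}\,e^{-r'^2/2}=\sqrt{2/\pi}\,\epsilon$ to conclude the magnitude is at most $r'\epsilon+\sqrt{2/\pi}\,\epsilon=\sqrt{2}\,\epsilon\sqrt{\ln\epsilon^{-1}}+\sqrt{2/\pi}\,\epsilon$. Subtracting, the net lower bound is at least $\sbr*{\tfrac{9\sqrt{2\pi}}{8}-\sqrt{2}}\epsilon\sqrt{\ln\epsilon^{-1}}-\sqrt{2/\pi}\,\epsilon$, which comfortably exceeds $\tfrac{2}{5}\epsilon\sqrt{\ln\epsilon^{-1}}$ for sufficiently small $\epsilon$. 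The main obstacle is exactly this constant juggling: both contributions scale as $\epsilon\sqrt{\ln\epsilon^{-1}}$, so the squared lower bound from $R$ (which is why the hypothesis has $\bigl(\epsilon\sqrt{\ln\epsilon^{-1}}\bigr)^{1/2}$ rather than $\epsilon\sqrt{\ln\epsilon^{-1}}$) must beat the Gaussian-tail term on $A$, and this is achieved precisely because the prefactor $5/2$ in the hypothesis is large enough.
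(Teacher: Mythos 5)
Your proof is correct, and the high-level structure matches the paper's: both decompose $\hypothesis[\bvar{w}]$ into $R=\hypothesis[\bvar{w}]\backslash\hypothesis[\bvar{v}]$ and $A=\hypothesis[\bvar{w}]\isect\hypothesis[\bvar{v}]$, observe that $-z_2>0$ throughout $R$, and use the hypotheses to show the $R$-contribution is large while the $A$-contribution is controlled by the Gaussian tail through $\prob{\hypothesis[\bvar{v}]\isect y=1}\le\epsilon$. (In the paper's $\bvar{e}_1$-coordinates, your $-z_2$ is exactly their $\bvar{x}_1$, and your $R$- and $A$-contributions are exactly their $I_1$ and $-I_2$.)

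Where you diverge is purely technical. The paper routes both bounds through a rearrangement-type lemma (Lemma~\ref{lma:subset-expection-bounds}): given $\prob{S\isect T}=p$, it sandwiches $\E{\abs{\innerprod{\bvar{x}}{\bvar{u}}}\indicator[S,T]}$ between the corresponding expectations over the ``inner slab'' $\{\abs{\innerprod{\bvar{x}}{\bvar{u}}}\le\beta\}\isect T$ and the ``outer tail'' $\{\abs{\innerprod{\bvar{x}}{\bvar{u}}}\ge\alpha\}\isect T$ of matched probability, then chooses $\alpha,\beta$ via a Gaussian tail lower bound (Lemma~\ref{lma:gaussian-tail-lower-bound}) and a density upper bound. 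You instead use direct truncation: for $R$, the pointwise bound $-z_2\ge r\indicator[-z_2\ge r]$ plus an anti-concentration estimate $\prob{z_1\ge 0,\,0<-z_2<r}\le r/(2\sqrt{2\pi})$, then optimize $r=\sqrt{2\pi}p$ to get $\frac{\sqrt{2\pi}}{2}p^2$; for $A$, a split at $r'=\sqrt{2\ln\epsilon^{-1}}$ plus the exact Gaussian tail moment $\E{\abs{z_2}\indicator[\abs{z_2}>r']}=\sqrt{2/\pi}\,e^{-r'^2/2}$. Both methods land on constants that make $I_1-I_2\ge\frac{2}{5}\epsilon\sqrt{\ln\epsilon^{-1}}$ for $\epsilon\le 1/e$; your route is somewhat more elementary and self-contained (no separate rearrangement lemma needed), while the paper's rearrangement lemma is reusable and makes the ``same-probability'' comparison transparent. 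One small stylistic gap: you should state explicitly that the case $\theta(\bvar{v},\bvar{w})=0$ cannot occur under the hypotheses (your lower bound $\prob{R\isect y=1}\ge\frac{3}{2}(\epsilon\sqrt{\ln\epsilon^{-1}})^{1/2}>0$ already rules it out), so that $\bvar*{v}<\bvar{w}[\bot]>$ is well-defined; but this is implicit in your argument and not a real defect.
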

        \begin{proof}
            For conciseness, let $\theta = \theta(\bvar{v},\bvar{w})$ and define two orthonormal basis $\bvar{e}<1>, \bvar{e}<2>$ such that $\bvar{w} = \bvar{e}<2>$ and $\bvar{v} = -\bvar{e}<1>\sin\theta + \bvar{e}<2>\cos\theta$, which implies $\bvar{e}<1> = -\bvar*{v}<\bvar{w}[\bot]>$. Denote $\bvar{x}<i> = \innerprod{\bvar{x}}{\bvar{e}<i>}$ so that $\innerprod{\bvar{x}}{\bvar{w}} = \bvar{x}<2>$ and $\innerprod{\bvar{x}}{\bvar{v}} = -\bvar{x}<1>\sin\theta + \bvar{x}<2>\cos\theta$. Because $\innerprod{\bvar{x}}{\bvar{e}<1>} = \innerprod*{\bvar{x}<2>\bvar{e}<2> + \bvar{x}<\bvar{e}<2>[\bot]>}{\bvar{e}<1>} = -\innerprod{\bvar{x}<\bvar{w}[\bot]>}{\bvar*{v}<\bvar{w}[\bot]>}$, we have 
            \begin{align}
                \innerprod{\E{-\func{g}<\bvar{w}>[\bvar{x},\lvar{y}]}}{\bvar*{v}<\bvar{w}[\bot]>} =&-\E{y\cdot \innerprod{\bvar{x}<\bvar{w}[\bot]>}{\bvar*{v}<\bvar{w}[\bot]>}\cdot\indicator[\bvar{x}\in\hypothesis[\bvar{w}]]}\notag\\
                =& \E{y\cdot \innerprod{\bvar{x}}{\bvar{e}<1>}\cdot\indicator[\bvar{x}\in\hypothesis[\bvar{w}]]}\notag\\
                =& \E{y\cdot \bvar{x}<1>\cdot\sbr{\indicator[\bvar{x}\in\hypothesis[\bvar{w}]\isect\hypothesis[\bvar{v}]] + \indicator[\bvar{x}\in\hypothesis[\bvar{w}]\isect\hypothesis*[\bvar{v}]]}}\notag\\
                \geq& \underbrace{\E{\abs{\bvar{x}<1>}\cdot\indicator[\bvar{x}<1>\tan\theta > \bvar{x}<2>\geq 0, y = 1]}}_{I_1}\notag\\
                &- \underbrace{\E{\abs{\bvar{x}<1>}\cdot\indicator[\bvar{x}<2>\geq 0, \bvar{x}<2>\geq\bvar{x}<1>\tan\theta, y =1]}}_{I_2}\label{eq:gradient-projection-lower-bound-decomposition-of-gradient-projection}.
            \end{align}            
            where the last inequality holds because $\cos\theta > 0$ by our assumption that $\theta(\bvar{v},\bvar{w})\in[0,\pi/2)$, and $\hypothesis[\bvar{w}] = \lbr*{\bvar{x}\cond \innerprod{\bvar{x}}{\bvar{w}} \geq 0}, \hypothesis[\bvar{v}] = \lbr*{\bvar{x}\cond \innerprod{\bvar{x}}{\bvar{v}}\geq 0}$ imply that $\bvar{x}<2>\geq 0,  \bvar{x}<2> \geq \bvar{x}<1>\tan\theta$ by construction. This decomposition above can also be seen from figure \ref{fig:illustration-of-I1-and-I2}. Then, we will apply lemma \ref{lma:subset-expection-bounds} to bound the above two terms.

            \begin{figure}[ht]
                \centering
                \drawerrorregion*(0.75)
                \caption{\textcolor{myblue}{Blue} area represent $\hypothesis[\bvar{v}]\isect\hypothesis[\bvar{w}]$, while \textcolor{myorange}{orange} area represents $\hypothesis[\bvar{w}]\isect\hypothesis*[\bvar{v}]$.}
                \label{fig:illustration-of-I1-and-I2}
            \end{figure}
            
            Observe that, since $\bvar{x}$ is sampled from a standard normal distribution and $\bvar{e}<1>, \bvar{e}<2>$ are two orthonormal basis, $\bvar{x}<1>, \bvar{x}<2>$ are two independent one-dimension standard normal random variables. Then, observe that we can bound $I_1$ and $I_2$ by applying lemma \ref{lma:subset-expection-bounds} with carefully chosen $\alpha$ and $\beta$.

            \textbf{To apply lemma \ref{lma:subset-expection-bounds} on $I_2$}, by treating $\bvar{x}<2>\geq 0$ to be the event $T$ and the rest to be $S$ in lemma \ref{lma:subset-expection-bounds}, we show that there exists an $\alpha > 0$ such that $\prob{\bvar{x}<2>\geq 0\isect\bvar{x}<2>\geq\bvar{x}<1>\tan\theta\isect y = 1}\leq \prob{\bvar{x}<2>\geq 0\isect\abs{\bvar{x}<1>}\geq \alpha}$. 
            
            First of all, notice that $\prob{\bvar{x}<2>\geq 0\isect\bvar{x}<2>\geq\bvar{x}<1>\tan\theta\isect y = 1} = \prob{\bvar{x}\in\hypothesis[\bvar{v}]\isect y = 1}\leq\epsilon$ by our assumption. Suppose $\alpha = \sqrt{2\ln\epsilon^{-1} - 2\ln\sbr*{\kappa\sqrt{\ln\epsilon^{-1}}}}$ for some $\kappa > 1$, then, due to the independence between $\bvar{x}<1>,\bvar{x}<2>$ as well as lemma \ref{lma:gaussian-tail-lower-bound}, there is 
            \begin{align*}
                \prob{\bvar{x}<2>\geq 0\isect\abs{\bvar{x}<1>}\geq \alpha}\geq&\frac{\exp\sbr{-\ln\epsilon^{-1} + \ln \sbr{\kappa\sqrt{\ln\epsilon^{-1}}}}}{\sqrt{2\pi}\sbr{\sqrt{2\ln\epsilon^{-1} - 2\ln\sbr{\kappa\sqrt{\ln\epsilon^{-1}}}} + 1}}\\
                =&\frac{\epsilon\kappa}{\sqrt{2\pi}\sbr{\sqrt{2 - 2\ln\sbr{\kappa\sqrt{\ln\epsilon^{-1}}}/\ln\epsilon^{-1}} + 1/\sqrt{\ln\epsilon^{-1}}}}\\
                \geq& \frac{\epsilon\kappa}{\sqrt{2\pi}\sbr{\sqrt{2} + 1}}
            \end{align*}
            where the last inequality holds because $\kappa > 1$ and $\epsilon\in[0, 1/e]$ so that $\ln\sbr{\kappa\sqrt{\ln\epsilon^{-1}}}/\ln\epsilon^{-1}\geq 0$ as well as $\ln\epsilon^{-1}\geq 1$. Taking $\kappa=  \sqrt{2\pi}\sbr{\sqrt{2} + 1}$ results to $\prob{\bvar{x}<2>\geq 0\isect\abs{\bvar{x}<1>}\geq \alpha}\geq\epsilon$. Then, lemma \ref{lma:subset-expection-bounds} gives
            \begin{align}
                I_2\leq& \E{\abs{\bvar{x}<1>}\cdot\indicator[\bvar{x}<2>\geq 0, \abs{\bvar{x}<1>} \geq \alpha]}\notag\\
                =&\frac{1}{\sqrt{2\pi}}\int_{\geq\alpha}\bvar{x}<1>e^{-\bvar{x}<1>[2]/2}d\bvar{x}<1>\notag\\
                =&\frac{\exp\sbr{\ln\epsilon + \ln\sbr{\sqrt{2\pi}\sbr{\sqrt{2} + 1}\sqrt{\ln\epsilon^{-1}}}}}{\sqrt{2\pi}}\notag\\
                \leq& 3\epsilon\sqrt{\ln\epsilon^{-1}}.\label{eq:gradient-projection-lower-bound-upper-bound-on-I2}
            \end{align}
            
            \textbf{To apply lemma \ref{lma:subset-expection-bounds} on $I_1$}, notice that the event $\bvar{x}<1>\tan\theta > \bvar{x}<2>\geq 0$ in $I_1$ is a subset of event $\bvar{x}<1>\geq 0\isect\bvar{x}<2>\geq0$ because $\theta(\bvar{v},\bvar{w})\in[0, \pi/2)$. Therefore, we can view the event $\bvar{x}<1>\geq 0\isect\bvar{x}<2>\geq0$ as $T$ in lemma \ref{lma:subset-expection-bounds} and show that there exists a $\beta > 0$ such that $ \prob{0\leq\bvar{x}<1>\leq \beta\isect\bvar{x}<2>\geq 0}\leq  \prob{\bvar{x}<1>\tan\theta > \bvar{x}<2>\geq 0\isect y = 1}$ to apply lemma \ref{lma:subset-expection-bounds}. 
            
            At first, observe that, by our assumption that $\prob{\bvar{x}\in\hypothesis[\bvar{v}]\isect y = 1}\leq \epsilon$ as well as $\prob{\bvar{x}\in\hypothesis[\bvar{w}]\isect y = 1} \geq \frac{5}{2}\sbr{\epsilon\sqrt{\ln\epsilon^{-1}}}^{1/2}$, there is
            \begin{align*}
                \sbr{e^{-1/2} + e^{1/2}}\sbr{\epsilon\sqrt{\ln\epsilon^{-1}}}^{1/2} - \epsilon <&\frac{5}{2}\sbr{\epsilon\sqrt{\ln\epsilon^{-1}}}^{1/2} - \epsilon\\
                \leq&\prob{\bvar{x}\in\hypothesis[\bvar{w}]\isect y = 1} - \prob{\bvar{x}\in\hypothesis[\bvar{v}]\isect\bvar{x}\in\hypothesis[\bvar{w}]\isect y = 1}\\
                =&\prob{\bvar{x}\in\hypothesis*[\bvar{v}]\isect\bvar{x}\in\hypothesis[\bvar{w}]\isect y = 1}\\
                \ceq& \prob{\bvar{x}<1>\tan\theta > \bvar{x}<2>\geq 0\isect y = 1}
            \end{align*}
            where the first inequality holds because $e^{-1/2} + e^{1/2}\leq 5/2$. Then, taking $\beta = 2\sqrt{2e\pi}\sbr{\epsilon\sqrt{\ln\epsilon^{-1}}}^{1/2}$ yields
            \begin{align*}
                \prob{0\leq\bvar{x}<1>\leq \beta\isect\bvar{x}<2>\geq 0}=& \frac{1}{2}\prob{0\leq\bvar{x}<1>\leq \beta}\\
                \cleq[(i)]&\sqrt{e}\sbr{\epsilon\sqrt{\ln\epsilon^{-1}}}^{1/2}\\
                =& \sbr{e^{-1/2} + e^{1/2}}\sbr{\epsilon\sqrt{\ln\epsilon^{-1}}}^{1/2} - e^{-1/2}\sbr{\epsilon\sqrt{\ln\epsilon^{-1}}}^{1/2}\\
                \leq& \sbr{e^{-1/2} + e^{1/2}}\sbr{\epsilon\sqrt{\ln\epsilon^{-1}}}^{1/2} - \epsilon
            \end{align*}
            where the first equation holds because $\bvar{x}<1>,\bvar{x}<2>$ are independent, inequality (i) holds due to the fact that standard normal density is never greater than $1/\sqrt{2\pi}$, and the last inequality holds because $\epsilon\in[0, e^{-1}]$ so that $e^{-1/2}\geq \sqrt{\epsilon}/\ln^{1/4}\epsilon^{-1}$. Applying lemma \ref{lma:subset-expection-bounds} gives
            \begin{align}
                I_1\geq& \E{\bvar{x}<1>\cdot\indicator[0\leq\bvar{x}<1>\leq 2\sqrt{2e\pi}\sbr{\epsilon\sqrt{\ln\epsilon^{-1}}}^{1/2},\bvar{x}<2>\geq 0]}\notag\\
                =& \frac{1}{2\sqrt{2\pi}}\int_0^{2\sqrt{2e\pi}\sbr{\epsilon\sqrt{\ln\epsilon^{-1}}}^{1/2}}\bvar{x}<1>e^{-\bvar{x}<1>[2]/2}d\bvar{x}<1>\notag\\
                =&\frac{1 - \exp\sbr{-4e\pi \epsilon\sqrt{\ln\epsilon^{-1}}}}{2\sqrt{2\pi}}\notag\\
                \geq&\sqrt{\frac{\pi}{2}}e\epsilon\sqrt{\ln\epsilon^{-1}}\label{eq:gradient-projection-lower-bound-lower-bound-on-I1}
            \end{align}
            where the last inequality holds because of the fundamental inequality $x/2\leq 1 - e^{-x}$ for $x\in [0, 1.59]$.

            At last, since $e\sqrt{\pi/2} - 3 > 0.4$, taking inequalities \eqref{eq:gradient-projection-lower-bound-lower-bound-on-I1} and \eqref{eq:gradient-projection-lower-bound-upper-bound-on-I2} back to inequality \eqref{eq:gradient-projection-lower-bound-decomposition-of-gradient-projection} gives the desired result.
        \end{proof}

        The following lemma plays a key role in proving proposition \ref{prop:gradient-projection-lower-bound}.
        \begin{lemma}\label{lma:subset-expection-bounds}
            Let $\distr$ be an arbitrary distribution on $\reals[d]$, and $S,T$ be any events such that $\prob<\distr>{S\isect T} =  p$ for some $ p\in (0,1)$. Then, for any unit vector $\bvar{u}\in\reals[d]$, and parameters $\alpha,\beta$ that satisfies $\prob{T\isect \abs{\innerprod{\bvar{x}}{\bvar{u}}}\leq \beta}\leq p\leq \prob{T\isect\abs{\innerprod{\bvar{x}}{\bvar{u}}}\geq \alpha}$, it holds that
            \begin{equation*}
                \E<\distr>{\abs{\innerprod{\bvar{x}}{\bvar{u}}}\cdot\indicator[T, \abs{\innerprod{\bvar{x}}{\bvar{u}}}\leq \beta]} \leq\E<\distr>{\abs{\innerprod{\bvar{x}}{\bvar{u}}}\cdot\indicator[ S,T]}\leq \E<\distr>{\abs{\innerprod{\bvar{x}}{\bvar{u}}}\cdot\indicator[T, \abs{\innerprod{\bvar{x}}{\bvar{u}}}\geq \alpha]}.
            \end{equation*}
        \end{lemma}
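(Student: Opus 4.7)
The plan is to establish both inequalities by a standard mass-rearrangement argument that uses nothing about $\distr$ beyond it being a probability measure. Intuitively, among all measurable subsets of $T$ whose probability is exactly $p = \prob{S \isect T}$, the expectation $\E{\abs{\innerprod{\bvar{x}}{\bvar{u}}} \cdot \indicator[\cdot]}$ is maximized by placing the mass on the largest values of $\abs{\innerprod{\bvar{x}}{\bvar{u}}}$ available inside $T$ and minimized by placing it on the smallest. The hypotheses on $\alpha, \beta$ are precisely what makes the candidate extremizers $\lbr{\bvar{x} \in T \cond \abs{\innerprod{\bvar{x}}{\bvar{u}}} \geq \alpha}$ and $\lbr{\bvar{x} \in T \cond \abs{\innerprod{\bvar{x}}{\bvar{u}}} \leq \beta}$ comparable in probability with $S \isect T$.

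Concretely, I would introduce the shorthand $f(\bvar{x}) = \abs{\innerprod{\bvar{x}}{\bvar{u}}}$ and name the events $A = S \isect T$, $B = \lbr{\bvar{x} \in T \cond f(\bvar{x}) \geq \alpha}$, and $C = \lbr{\bvar{x} \in T \cond f(\bvar{x}) \leq \beta}$, so that $B, C \subseteq T$ and the assumption reads $\prob{C} \leq p = \prob{A} \leq \prob{B}$. For the upper bound, I would split along the common intersection to obtain
\begin{equation*}
    \E{f\cdot\indicator[B]} - \E{f\cdot\indicator[A]} = \E{f\cdot\indicator[B\setminus A]} - \E{f\cdot\indicator[A\setminus B]}.
\end{equation*}
On $B\setminus A$ one has $f \geq \alpha$ by definition of $B$, while on $A\setminus B$, since $A\subseteq T$, one must have $f < \alpha$ (else the point would lie in $B$). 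Combined with $\prob{A\setminus B} \leq \prob{B\setminus A}$, which follows from $\prob{A}\leq\prob{B}$ by subtracting $\prob{A\isect B}$ from both sides, this gives $\E{f\cdot\indicator[B\setminus A]} \geq \alpha\prob{B\setminus A} \geq \alpha\prob{A\setminus B} \geq \E{f\cdot\indicator[A\setminus B]}$, which is the upper bound.

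The lower bound is entirely symmetric: decompose $\E{f\cdot\indicator[A]} - \E{f\cdot\indicator[C]} = \E{f\cdot\indicator[A\setminus C]} - \E{f\cdot\indicator[C\setminus A]}$, observe that on $A\setminus C \subseteq T$ one has $f > \beta$ (else the point would lie in $C$) while on $C\setminus A$ one has $f \leq \beta$ by definition of $C$, and use $\prob{C\setminus A} \leq \prob{A\setminus C}$ obtained analogously from $\prob{C}\leq\prob{A}$. I do not expect any step to be delicate; the only thing to verify carefully is that, in each set-difference argument, the complementary half still lies in $T$ so that the level-set characterizations of $B$ and $C$ can be used to compare $f$ with $\alpha$ or $\beta$. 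This containment is automatic because $A, B, C \subseteq T$ by construction.
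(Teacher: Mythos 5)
Your proof is correct and takes essentially the same approach as the paper's: both arguments decompose along the symmetric difference between $S\isect T$ and the relevant level set of $\abs{\innerprod{\bvar{x}}{\bvar{u}}}$, bound the integrand by the threshold $\alpha$ or $\beta$ on each half, and conclude using the probability comparison supplied by the hypotheses. The only cosmetic difference is that you present the computation as a difference of expectations, whereas the paper writes it as a chain of inequalities bounding $\E{\abs{\innerprod{\bvar{x}}{\bvar{u}}}\cdot\indicator[S,T]}$ directly.
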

        \begin{proof}
            For conciseness of the proof, we denote $\errorregion<\geq t> = \lbr{\bvar{x}\cond T\isect \abs{\innerprod{\bvar{x}}{\bvar{u}}}\geq t}$, $\errorregion<\leq t> = \lbr{\bvar{x}\cond T\isect \abs{\innerprod{\bvar{x}}{\bvar{u}}}\leq t}$, and $\errorregion<S> = \lbr{\bvar{x}\cond S\isect T}$. 
            
            To show the first property, let $\alpha>0$ be such that $ p\leq \prob{T\isect\abs{\innerprod{\bvar{x}}{\bvar{u}}}\geq \alpha} = \prob{\bvar{x}\in\errorregion<\geq\alpha>}$. Then, if $\bvar{x}\in \errorregion<S>\backslash\errorregion<\geq\alpha>$, there must be $\abs{\innerprod{\bvar{x}}{\bvar{u}}}\leq \alpha$. Therefore, we have
            \begin{align*}
                \E<\bvar{x}\sim\distr>{\abs{\innerprod{\bvar{x}}{\bvar{u}}}\cdot\indicator[ S,T]} =& \E{\abs{\innerprod{\bvar{x}}{\bvar{u}}}\cdot\indicator[\bvar{x}\in\errorregion<S>]}
                \\
                =& \E{\abs{\innerprod{\bvar{x}}{\bvar{u}}}\cdot\indicator[\bvar{x}\in\errorregion<S>\isect\errorregion<\geq\alpha>]} + \E{\abs{\innerprod{\bvar{x}}{\bvar{u}}}\cdot\indicator[\bvar{x}\in\errorregion<S>\backslash\errorregion<\geq\alpha>]}
                \\
                \leq& \E{\abs{\innerprod{\bvar{x}}{\bvar{u}}}\cdot\indicator[\bvar{x}\in\errorregion<S>\isect\errorregion<\geq\alpha>]} + \E{\alpha\cdot\indicator[\bvar{x}\in\errorregion<S>\backslash\errorregion<\geq\alpha>]}
                \\
                \cleq[(i)]&\E{\abs{\innerprod{\bvar{x}}{\bvar{u}}}\cdot\indicator[\bvar{x}\in\errorregion<S>\isect\errorregion<\geq\alpha>]} + \E{\abs{\innerprod{\bvar{x}}{\bvar{u}}}\cdot\indicator[\bvar{x}\in\errorregion<\geq\alpha>\backslash\errorregion<S>]}
                \\
                =& \E{\abs{\innerprod{\bvar{x}}{\bvar{u}}}\cdot\indicator[T, \abs{\innerprod{\bvar{x}}{\bvar{u}}}\geq \alpha]}
            \end{align*}
            where inequality (i) holds because $\prob{\bvar{x}\in\errorregion<S>} \leq \prob{\bvar{x}\in\errorregion<\geq\alpha>}$ by construction, which implies $\prob{\bvar{x}\in\errorregion<S>\backslash\errorregion<\geq\alpha>}\leq \prob{\bvar{x}\in\errorregion<\geq\alpha>\backslash\errorregion<S>}$, and every $\bvar{x}\in\errorregion<\geq\alpha>$ satisfies $\abs{\innerprod{\bvar{x}}{\bvar{u}}}\geq \alpha$.

            To prove the second claim, we similarly define $\beta > 0$ be such that $ p\geq \prob{T\isect \abs{\innerprod{\bvar{x}}{\bvar{u}}}\leq \beta} = \prob{\bvar{x}\in\errorregion<\leq\beta>}$. Similar to the case of $\abs{\innerprod{\bvar{x}}{\bvar{u}}}\leq \alpha$, we should notice that, if $\bvar{x}\in \errorregion<S>\backslash\errorregion<\leq \beta>$, there is $\abs{\innerprod{\bvar{x}}{\bvar{u}}}\geq \beta$. Hence, with a similar argument as above, we have
            \begin{align*}
                \E<\bvar{x}\sim\distr>{\abs{\innerprod{\bvar{x}}{\bvar{u}}}\cdot\indicator[S,T]}=&\E{\abs{\innerprod{\bvar{x}}{\bvar{u}}}\cdot\indicator[\bvar{x}\in\errorregion<S>]}
                \\
                =& \E{\abs{\innerprod{\bvar{x}}{\bvar{u}}}\cdot\indicator[\bvar{x}\in\errorregion<S>\isect\errorregion<\leq \beta>]} + \E{\abs{\innerprod{\bvar{x}}{\bvar{u}}}\cdot\indicator[\bvar{x}\in\errorregion<S>\backslash\errorregion<\leq \beta>]}
                \\
                \geq& \E{\abs{\innerprod{\bvar{x}}{\bvar{u}}}\cdot\indicator[\bvar{x}\in\errorregion<S>\isect\errorregion<\leq \beta>]} + \E{\beta\cdot\indicator[\bvar{x}\in\errorregion<S>\backslash\errorregion<\leq \beta>]}
                \\
                \geq& \E{\abs{\innerprod{\bvar{x}}{\bvar{u}}}\cdot\indicator[\bvar{x}\in\errorregion<S>\isect\errorregion<\leq \beta>]} + \E{\abs{\innerprod{\bvar{x}}{\bvar{u}}}\cdot\indicator[\bvar{x}\in\errorregion<S>\backslash\errorregion<\leq \beta>]}
                \\
                =& \E{\abs{\innerprod{\bvar{x}}{\bvar{u}}}\cdot\indicator[T, \abs{\innerprod{\bvar{x}}{\bvar{u}}}\leq \beta]}
            \end{align*}
            which completes the proof.
        \end{proof}

        The following corollary is an immediate result of Proposition \ref{prop:gradient-projection-lower-bound-appendix}.
        \begin{corollary}\label{cor:small-gradient-norm-indicates-optimality-appendix}
            Let $\distr$ be a distribution on $\reals[d]\times\booldomain$ with standard normal $\bvar{x}$-marginal, and $\func{g}<\bvar{w}>[\bvar{x}, y] = y\cdot\bvar{x}<\bvar{w}[\bot]>\cdot\indicator[\bvar{x}\in\hypothesis[\bvar{w}]]$. Suppose $\bvar{v},\bvar{w}\in\reals[d]$ are unit vectors such that $\prob<\sbr{\bvar{x}, y}\sim\distr>{\bvar{x}\in\hypothesis[\bvar{v}]\isect y = 1}\leq\epsilon$ and $\theta(\bvar{v}, \bvar{w})\in[0,\pi/2)$, then, if a unit vector $\bvar{w}$ satisfies that $\norm{\E<(\bvar{x},\lvar{y})\sim\distr>{\func{g}<\bvar{w}>[\bvar{x},\lvar{y}]}}<2>< \frac{2}{5}\epsilon\sqrt{\ln\epsilon^{-1}}$, we have
            \begin{equation*}
                \prob<\sbr{\bvar{x}, y}\sim\distr>{\bvar{x}\in\hypothesis[\bvar{w}]\isect y = 1}< \frac{5}{2}\sbr*{\epsilon\sqrt{\ln\epsilon^{-1}}}^{1/2}
            \end{equation*}
            for some small enough $\epsilon\in[0, 1/e]$.
        \end{corollary}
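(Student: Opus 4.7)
The plan is to derive this corollary immediately from Proposition \ref{prop:gradient-projection-lower-bound-appendix} via its contrapositive, bridged by a one-line Cauchy--Schwarz estimate that converts the gradient-norm hypothesis into the projected-gradient hypothesis of the proposition. Concretely, I would first observe that $\bvar*{v}<\bvar{w}[\bot]>$ is by construction a unit vector (the normalization of the orthogonal projection of $\bvar{v}$ onto $\bvar{w}[\bot]$), which is well-defined whenever $\bvar{v}\neq\bvar{w}$. Then Cauchy--Schwarz yields
\[
\innerprod*{\E<(\bvar{x},y)\sim\distr>{-\func{g}<\bvar{w}>[\bvar{x},\lvar{y}]}}{\bvar*{v}<\bvar{w}[\bot]>}
\;\leq\;
\norm{\E<(\bvar{x},y)\sim\distr>{\func{g}<\bvar{w}>[\bvar{x},\lvar{y}]}}<2>
\;<\;\tfrac{2}{5}\epsilon\sqrt{\ln\epsilon^{-1}},
\]
where the final strict inequality is exactly the hypothesis of the corollary.

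Next I would invoke Proposition \ref{prop:gradient-projection-lower-bound-appendix} contrapositively. Under the standing assumptions on $\distr$, on $\bvar{v},\bvar{w}$, and on $\theta(\bvar{v},\bvar{w})\in[0,\pi/2)$, that proposition asserts the implication: if $\prob<\sbr{\bvar{x},y}\sim\distr>{\bvar{x}\in\hypothesis[\bvar{w}]\isect y=1}\geq \tfrac{5}{2}\sbr*{\epsilon\sqrt{\ln\epsilon^{-1}}}^{1/2}$, then $\innerprod*{\E{-\func{g}<\bvar{w}>}}{\bvar*{v}<\bvar{w}[\bot]>}\geq \tfrac{2}{5}\epsilon\sqrt{\ln\epsilon^{-1}}$. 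Since the display above has already ruled out the consequent, the antecedent must fail, giving precisely
\[
\prob<\sbr{\bvar{x},y}\sim\distr>{\bvar{x}\in\hypothesis[\bvar{w}]\isect y=1}\;<\;\tfrac{5}{2}\sbr*{\epsilon\sqrt{\ln\epsilon^{-1}}}^{1/2},
\]
as required.

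The only edge case to handle is $\bvar{v}=\bvar{w}$, where $\bvar{v}<\bvar{w}[\bot]>=\bvar{0}$ and the normalization is formally undefined. This is trivial: then $\prob<\distr>{\bvar{x}\in\hypothesis[\bvar{w}]\isect y=1}=\prob<\distr>{\bvar{x}\in\hypothesis[\bvar{v}]\isect y=1}\leq \epsilon$, and a direct check shows $\epsilon<\tfrac{5}{2}\sbr*{\epsilon\sqrt{\ln\epsilon^{-1}}}^{1/2}$ holds for every $\epsilon\in(0,1/e]$, so the conclusion is immediate. There is no substantive obstacle here; the corollary is essentially a tautological consequence of Proposition \ref{prop:gradient-projection-lower-bound-appendix} once one spells out the Cauchy--Schwarz bound, and the only place where any care is needed is verifying well-definedness of the normalized projection.
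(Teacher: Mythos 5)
Your proof is correct and mirrors the paper's argument almost verbatim: Cauchy--Schwarz on the unit vector $\bvar*{v}<\bvar{w}[\bot]>$ converts the gradient-norm bound into the inner-product bound, and then the contrapositive of Proposition~\ref{prop:gradient-projection-lower-bound} finishes. The only difference is that you also handle the degenerate case $\bvar{v}=\bvar{w}$ (where $\bvar{v}<\bvar{w}[\bot]>=\bvar{0}$), which the paper leaves implicit; that check is correct, since under $\theta(\bvar{v},\bvar{w})\in[0,\pi/2)$ the only way the normalization fails is $\theta=0$, and there the conclusion follows directly from $\epsilon<\tfrac{5}{2}\sbr*{\epsilon\sqrt{\ln\epsilon^{-1}}}^{1/2}$ on $(0,1/e]$.
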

        \begin{proof}
            By Cauchy's inequality and our assumption, it holds that
            \begin{equation*}
                \innerprod{\E<(\bvar{x},\lvar{y})\sim\distr>{-\func{g}<\bvar{w}>[\bvar{x},\lvar{y}]}}{\bvar*{v}<\bvar{w}[\bot]>}\leq \norm*{\E<(\bvar{x},\lvar{y})\sim\distr>{\func{g}<\bvar{w}>[\bvar{x},\lvar{y}]}}_2 < \frac{2}{5}\epsilon\sqrt{\ln\epsilon^{-1}}
            \end{equation*}
            Then, negating Proposition \ref{prop:gradient-projection-lower-bound} gives the desired result.
        \end{proof}

        Now we are ready to prove that at least one of the halfspaces selector returned by the Projected SGD is close to the optimal one of the classifier $c\in\conceptclass$ in one iteration in Algorithm \ref{algo:conditional-classification-with-homogeneous-halfspaces-for-finite-classes}.
        
        \begin{lemma}[Lemma \ref{lma:psgd-returns-at-least-one-optimal-solution}]\label{lma:psgd-returns-at-least-one-optimal-solution-appendix}
            Let $\distr$ be a distribution on $\reals[d]\times\booldomain$ with standard normal $\bvar{x}$-marginal, and $\func{g}<\bvar{w}>[\bvar{x}, y] = y\cdot\bvar{x}<\bvar{w}[\bot]>\cdot\indicator[\bvar{x}\in\hypothesis[\bvar{w}]]$. Suppose $\bvar{v}\in\reals[d]$ is a unit vectors such that $\prob<\sbr{\bvar{x}, y}\sim\distr>{\bvar{x}\in\hypothesis[\bvar{v}]\isect y = 1}\leq\epsilon$, if $T \geq \sbr*{4d + \ln\sbr*{2/\delta}}/\epsilon^4$, $N \geq 1600\ln^2\sbr*{4T/\delta}/\epsilon^2$, and $\theta(\bvar{v},\bvar{w}(0))\in[0, \pi/2)$, it holds that at least one of $\bvar{w}\in\parameterset{W} = \lbr*{\bvarseq{w}(T)}$ returned by algorithm \ref{algo:projected-stochastic-gradient-descent-for-minimizing-convex-surrogate-loss} will satisfies
            \begin{equation*}
                \prob<\sbr{\bvar{x}, y}\sim\distr>{\bvar{x}\in\hypothesis[\bvar{w}]\isect y = 1}\leq \frac{5}{2}\sbr*{\epsilon\sqrt{\ln\epsilon^{-1}}}^{1/2}
            \end{equation*}
            with probability at least $1 - \delta$ for some sufficiently small $\epsilon\in[0,1/e]$.
        \end{lemma}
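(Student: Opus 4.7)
The plan is to combine Proposition \ref{prop:gradient-projection-lower-bound-appendix} with Proposition \ref{prop:upper-bound-on-the-norm-of-statistic-relu-gradient-appendix} through an inductive argument that preserves the invariant $\theta(\bvar{v},\bvar{w}(i))\in[0,\pi/2)$ along the trajectory of Algorithm \ref{algo:projected-stochastic-gradient-descent-for-minimizing-convex-surrogate-loss}. First, I would apply Proposition \ref{prop:upper-bound-on-the-norm-of-statistic-relu-gradient-appendix} with failure probability $\delta/2$ and target norm $\epsilon$, which the lemma's assumption $T\geq(4d+\ln(2/\delta))/\epsilon^4$ supports, to obtain with probability at least $1-\delta/2$ an index $t^*\in\{1,\ldots,T\}$ satisfying $\norm{\E<\distr>{\func{g}<\bvar{w}(t^*)>[\bvar{x},\lvar{y}]}}<2>\leq \epsilon$. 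Since $\epsilon<\frac{2}{5}\epsilon\sqrt{\ln\epsilon^{-1}}$ for $\epsilon\in[0,1/e]$ small enough, Corollary \ref{cor:small-gradient-norm-indicates-optimality-appendix} will apply at $\bvar{w}(t^*)$ as soon as the angular invariant is verified there.

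Second, I would establish a concentration bound for the relevant scalar projection of the sample gradient. Conditioning on $\bvar{w}(i-1)$, which is determined by the independent samples $\distr*(1),\ldots,\distr*(i-1)$, the random variable $\innerprod{\func{g}<\bvar{w}(i-1)>[\bvar{x},\lvar{y}]}{\bvar*{v}<\bvar{w}(i-1)[\bot]>}$ is bounded in magnitude by $|\innerprod{\bvar{x}}{\bvar*{v}<\bvar{w}(i-1)[\bot]>}|$, a standard one-dimensional Gaussian and hence sub-Gaussian with variance proxy at most $1$. A sub-Gaussian Hoeffding bound with $N\geq 1600\ln^2(4T/\delta)/\epsilon^2$ samples, combined with a union bound over $i\in\{1,\ldots,T\}$, then gives $|\innerprod{\E<\distr*(i)>{\func{g}<\bvar{w}(i-1)>[\bvar{x},\lvar{y}]} - \E<\distr>{\func{g}<\bvar{w}(i-1)>[\bvar{x},\lvar{y}]}}{\bvar*{v}<\bvar{w}(i-1)[\bot]>}|\leq \frac{1}{5}\epsilon\sqrt{\ln\epsilon^{-1}}$ uniformly in $i$ with probability at least $1-\delta/2$.

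Conditioned on both high-probability events, I would inductively prove $\theta(\bvar{v},\bvar{w}(i))\in[0,\pi/2)$ for every $i=0,1,\ldots,t^*$. The base case is the lemma's hypothesis on $\bvar{w}(0)$. For the inductive step at $i<t^*$, if $\bvar{w}(i)$ already satisfies $\prob{\bvar{x}\in\hypothesis[\bvar{w}(i)]\isect\lvar{y}=1}\leq \frac{5}{2}(\epsilon\sqrt{\ln\epsilon^{-1}})^{1/2}$ the lemma is already proved (for $i\geq 1$, $\bvar{w}(i)\in\parameterset{W}$); otherwise $\bvar{w}(i)$ is sub-optimal, and Proposition \ref{prop:gradient-projection-lower-bound-appendix} combined with the concentration bound yields $\innerprod{-\E<\distr*(i+1)>{\func{g}<\bvar{w}(i)>[\bvar{x},\lvar{y}]}}{\bvar*{v}<\bvar{w}(i)[\bot]>}\geq \frac{1}{5}\epsilon\sqrt{\ln\epsilon^{-1}}\geq 0$. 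Writing $\bvar{a} = \E<\distr*(i+1)>{\func{g}<\bvar{w}(i)>[\bvar{x},\lvar{y}]}$, which lies in $\bvar{w}(i)[\bot]$ by construction, a direct expansion of $\bvar{w}(i+1)=\bvar{u}(i+1)/\norm{\bvar{u}(i+1)}<2>$ produces
\begin{equation*}
\innerprod{\bvar{v}}{\bvar{w}(i+1)} = \frac{\cos\theta(\bvar{v},\bvar{w}(i)) + \beta\sin\theta(\bvar{v},\bvar{w}(i))\innerprod{-\bvar{a}}{\bvar*{v}<\bvar{w}(i)[\bot]>}}{\sqrt{1+\beta^2\norm{\bvar{a}}<2>[2]}},
\end{equation*}
whose numerator is strictly positive since $\cos\theta(\bvar{v},\bvar{w}(i))>0$ by the inductive hypothesis and the remaining term is non-negative; therefore $\theta(\bvar{v},\bvar{w}(i+1))\in[0,\pi/2)$.

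Applying Corollary \ref{cor:small-gradient-norm-indicates-optimality-appendix} to $\bvar{w}(t^*)$, which satisfies both the small gradient-norm condition and the angular invariant, then yields $\prob{\bvar{x}\in\hypothesis[\bvar{w}(t^*)]\isect\lvar{y}=1}<\frac{5}{2}(\epsilon\sqrt{\ln\epsilon^{-1}})^{1/2}$, proving the lemma after a union bound over the two high-probability events produces the total failure probability $\delta$. The main obstacle is the concentration step: the projected gradient $\innerprod{\func{g}<\bvar{w}>[\bvar{x},\lvar{y}]}{\bvar*{v}<\bvar{w}[\bot]>}$ is not uniformly bounded, so a naive Hoeffding argument does not apply, and one must either invoke a sub-Gaussian Hoeffding inequality or truncate $\bvar{x}$ and handle the Gaussian tail separately; moreover, care is required to keep the concentration slack $\frac{1}{5}\epsilon\sqrt{\ln\epsilon^{-1}}$ strictly below the deterministic margin $\frac{2}{5}\epsilon\sqrt{\ln\epsilon^{-1}}$ from Proposition \ref{prop:gradient-projection-lower-bound-appendix} uniformly across all $T$ iterations, so that the sample-level projection remains non-negative and the inductive angular contraction goes through.
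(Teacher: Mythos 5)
Your proposal is correct and follows essentially the same high-level strategy as the paper's proof: invoke Proposition~\ref{prop:prop:upper-bound-on-the-norm-of-statistic-relu-gradient-appendix} to find an index $t^*$ with $\norm{\E<\distr>{\func{g}<\bvar{w}(t^*)>}}<2>\leq\epsilon$, establish a per-iteration concentration bound on the projection $\innerprod{\E<\distr*(i+1)>{\func{g}<\bvar{w}(i)>}}{\bvar*{v}<\bvar{w}(i)[\bot]>}$ by conditioning on $\distr*(1),\ldots,\distr*(i)$ and union-bounding over $T$, run an induction preserving $\theta(\bvar{v},\bvar{w}(i))\in[0,\pi/2)$ using Proposition~\ref{prop:gradient-projection-lower-bound-appendix}, and close with Corollary~\ref{cor:small-gradient-norm-indicates-optimality-appendix} at $t^*$. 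The one genuine divergence is the mechanism for the angular-invariant step: the paper cites the black-box correlation-improvement lemma (Lemma~\ref{lma:diakonikolas-correlation-improvement}) to obtain $\innerprod{\bvar{w}(i+1)}{\bvar{v}}\geq\innerprod{\bvar{w}(i)}{\bvar{v}}$, while you instead expand
\begin{equation*}
\innerprod{\bvar{v}}{\bvar{w}(i+1)} = \frac{\cos\theta(\bvar{v},\bvar{w}(i)) + \beta\sin\theta(\bvar{v},\bvar{w}(i))\,\innerprod{-\bvar{a}}{\bvar*{v}<\bvar{w}(i)[\bot]>}}{\sqrt{1+\beta^2\norm{\bvar{a}}<2>[2]}}
\end{equation*}
and read off positivity of the numerator directly. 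Your version is arguably cleaner: the paper's invocation of Lemma~\ref{lma:diakonikolas-correlation-improvement} formally requires strict positivity $c>0$ and $\norm{\bvar{u}}_2\leq 1$, neither of which is checked there (the paper only derives $\geq 0$), whereas your explicit formula establishes the invariant unconditionally once the sample projection is nonnegative, and with your tighter slack $\frac{1}{5}\epsilon\sqrt{\ln\epsilon^{-1}}$ you even retain strict positivity. The concentration step you flag as the main obstacle is precisely what the paper's Lemma~\ref{lma:concentration-bound-on-the-projected-gradient} resolves by the truncate-the-Gaussian-tail route you sketch; you would need to carry out that truncation (or verify a conditional sub-Gaussian martingale bound) to make the $N$-dependence fully rigorous, but the invocation and resulting bound are consistent with the paper's.
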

        \begin{proof}
            By proposition \ref{prop:prop:upper-bound-on-the-norm-of-statistic-relu-gradient-appendix} with $T \geq \sbr*{4d + \ln\sbr*{2/\delta}}/\epsilon^4$, there exists a $\bvar{w}\in\parameterset{W}$ such that $\norm{\E<\distr>{\func{g}<\bvar{w}>[\bvar{x},\lvar{y}]}}<2>\leq \epsilon$ with probability at least $1 - \delta/2$. Suppose $\bvar{w}\in\parameterset{W}$ is indexed in the same order that the iterations happened in algorithm \ref{algo:projected-stochastic-gradient-descent-for-minimizing-convex-surrogate-loss}, and let $\bvar{w}(t)$ be the first parameter in that order such that $\norm{\E<\distr>{\func{g}<\bvar{w}(t)>[\bvar{x},\lvar{y}]}}<2>\leq \epsilon$. 
            
            Consider now the subset $S = \lbr*{\bvarseq{w}(t-1)}\subset\parameterset{W}$, there are two possible cases, either there already exists a $\bvar{w}\in S$ such that $\prob{\hypothesis[\bvar{x},\bvar{w}]\geq 0\isect y = 1}\leq \frac{5}{2}\sbr*{\epsilon\sqrt{\ln\epsilon^{-1}}}^{1/2}$, or none of them have low error rate. The former case already satisfies the desired requirement, hence, we will focus on prove the latter case also implies the existence of a good parameter.

            We first show that, by induction, every $\bvar{w}(i)\in\lbr*{\bvarseq{w}[][0](t)}$ satisfies $\theta(\bvar{v}, \bvar{w}(i))\in [0, \pi/2)$ with high probability. 
            
            For $\bvar{w}(0) = \bvar{e}<1>$, since we assumed $\theta(\bvar{e}<1>, \bvar{v})\in[0, \pi/2)$, it is trivially true.

            Inductively, assume $\theta(\bvar{v}, \bvar{w}(i))\in [0, \pi/2)$. Then, due to our previous assumption in this case that $\prob{\hypothesis[\bvar{x},\bvar{w}(i)]\geq 0\isect y = 1} > \frac{5}{2}\sbr*{\epsilon\sqrt{\ln\epsilon^{-1}}}^{1/2}$ for every $\bvar{w}(i)\in S$ and some sufficiently small $\epsilon$, we can refer proposition \ref{prop:gradient-projection-lower-bound-appendix} to obtain $\innerprod{\E{-\func{g}<\bvar{w}(i)>[\bvar{x},\lvar{y}]}}{\bvar*{v}<\bvar{w}(i)[\bot]>} \geq \frac{2}{5}\epsilon\sqrt{\ln\epsilon^{-1}}$. Notice that, in algorithm \ref{algo:projected-stochastic-gradient-descent-for-minimizing-convex-surrogate-loss}, the update step in algorithm \ref{algo:projected-stochastic-gradient-descent-for-minimizing-convex-surrogate-loss} tells us that
            \begin{equation*}
                \bvar{u}(i+1) = \bvar{w}(i) + \beta\E<(\bvar{x}, \lvar{y})\sim\distr*(i+1)>{-\func{g}<\bvar{w}(i)>[\bvar{x},\lvar{y}]}
            \end{equation*}
            Then, by lemma \ref{lma:concentration-bound-on-the-projected-gradient} with $N\geq 15/2\epsilon\sqrt{\ln\epsilon^{-1}}$ as well as $N \geq 41\epsilon^{-2}\ln\epsilon^{-1}\ln^2(2N)$, we have that
            \begin{equation*}
                \prob<\distr>{\abs{\innerprod{\E<\distr*(i+1)>{\func{g}<\bvar{w}(i)>[\bvar{x},\lvar{y}]} - \E<\distr>{\func{g}<\bvar{w}(i)>[\bvar{x},\lvar{y}]}}{\bvar*{v}<\bvar{w}(i)[\bot]>}}\geq \frac{2}{5}\epsilon\sqrt{\ln\epsilon^{-1}}}\leq 2\exp\sbr{-\epsilon\sqrt{N\ln\epsilon^{-1}}/40}
            \end{equation*}
            which implies $\innerprod{\E<\distr*(i+1)>{-\func{g}<\bvar{w}(i)>[\bvar{x},\lvar{y}]}}{\bvar*{v}<\bvar{w}(i)[\bot]>}\geq 0$ with probability at least $1 - 2\exp\sbr{-\epsilon\sqrt{N\ln\epsilon^{-1}}/40}$ for some sufficiently small $\epsilon$. Therefore, we have  
            \begin{align*}
                \innerprod{\E<\distr*(i+1)>{-\func{g}<\bvar{w}(i)>[\bvar{x},\lvar{y}]}}{\bvar{v}} =& \innerprod{\E<\distr*(i+1)>{-\func{g}<\bvar{w}(i)>[\bvar{x},\lvar{y}]}}{\bvar{v}<\bvar{w}(i)[\bot]>}\\
                =& \norm{\bvar{v}<\bvar{w}[\bot](i)>}<2>\innerprod{\E<\distr*(i+1)>{-\func{g}<\bvar{w}(i)>[\bvar{x},\lvar{y}]}}{\bvar*{v}<\bvar{w}(i)[\bot]>}\\
                \geq& 0
            \end{align*}
            Now, by lemma \ref{lma:diakonikolas-correlation-improvement}, we can conclude that $\innerprod{\bvar{w}(i+1)}{\bvar{v}}\geq \innerprod{\bvar{w}(i)}{\bvar{v}}$, which implies $\theta(\bvar{w}(i+1),\bvar{v})\in[0,\pi/2)$ with probability at least $1 - 2\exp\sbr{-\epsilon\sqrt{N\ln\epsilon^{-1}}/40}$. Taking a union bound over all $T\geq t$ iterations gives that $\theta(\bvar{w}(t),\bvar{v})\in[0,\pi/2)$ with probability $1 - 2T\exp\sbr{-\epsilon\sqrt{N\ln\epsilon^{-1}}/40}$.

            At last, combining $\theta(\bvar{w}(t),\bvar{v})\in[0,\pi/2)$ and the assumption that $\norm{\E<\distr>{\func{g}<\bvar{w}(t)>[\bvar{x},\lvar{y}]}}<2>\leq \epsilon$, corollary \ref{cor:small-gradient-norm-indicates-optimality-appendix} gives $\prob{\hypothesis[\bvar{x},\bvar{w}]\geq 0\isect y = 1}\leq \frac{5}{2}\sbr*{\epsilon\sqrt{\ln\epsilon^{-1}}}^{1/2}$. Taking $N \geq 1600\ln^2\sbr*{4T/\delta}/\epsilon^2$ satisfies both $N\geq 15/2\epsilon\sqrt{\ln\epsilon^{-1}}$ and $N \geq 41\epsilon^{-2}\ln\epsilon^{-1}\ln^2(2N)$ because $T\geq \sbr*{4d + \ln\sbr*{2/\delta}}/\epsilon^4$, which completes the proof.
        \end{proof}

        We need the following lemma to aid the above argument.
        \begin{lemma}[Correlation Improvement \citet{diakonikolas2020polynomial}]\label{lma:diakonikolas-correlation-improvement}
            For unit vectors $\bvar{v}, \bvar{w}\in\reals[d]$, let $\bvar{u}\in \reals[d]$ be such that $\innerprod{\bvar{u}}{\bvar{v}}\geq c$, $\innerprod{\bvar{u}}{\bvar{w}} = 0$, and $\norm{\bvar{u}}_2\leq 1$, with $c > 0$. Then, for $\bvar{w}' = \bvar{w} + \lambda \bvar{u}$, we have that $\innerprod{\bvar*{w}'}{\bvar{v}} \geq \innerprod{\bvar{w}}{\bvar{v}} + \lambda c/8$.
        \end{lemma}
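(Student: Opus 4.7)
The plan is a short direct calculation: lower bound the numerator $\innerprod{\bvar{w}'}{\bvar{v}}$ using the correlation hypothesis $\innerprod{\bvar{u}}{\bvar{v}}\geq c$, upper bound the denominator $\norm{\bvar{w}'}<2>$ using the orthogonality $\innerprod{\bvar{u}}{\bvar{w}}=0$, and then apply an elementary bound on $1/\sqrt{1+x}$ to extract the advertised factor $1/8$. The structural insight is simply that orthogonality causes $\norm{\bvar{w}'}<2>[2]$ to grow only quadratically in $\lambda$ while $\innerprod{\bvar{w}'}{\bvar{v}}$ grows linearly, so the normalization cannot erase the linear correlation gain as long as $\lambda$ is not too large.

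First I would compute $\norm{\bvar{w}'}<2>[2]$. Using $\norm{\bvar{w}}<2>=1$, $\innerprod{\bvar{u}}{\bvar{w}}=0$, and $\norm{\bvar{u}}<2>\leq 1$, we get $\norm{\bvar{w}'}<2>[2]=1+\lambda^2\norm{\bvar{u}}<2>[2]\leq 1+\lambda^2$, hence $\norm{\bvar{w}'}<2>\leq\sqrt{1+\lambda^2}$. Next, $\innerprod{\bvar{w}'}{\bvar{v}}=\innerprod{\bvar{w}}{\bvar{v}}+\lambda\innerprod{\bvar{u}}{\bvar{v}}\geq \innerprod{\bvar{w}}{\bvar{v}}+\lambda c$. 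Combining the two,
\begin{equation*}
    \innerprod{\bvar*{w}'}{\bvar{v}} \;=\; \frac{\innerprod{\bvar{w}'}{\bvar{v}}}{\norm{\bvar{w}'}<2>} \;\geq\; \frac{\innerprod{\bvar{w}}{\bvar{v}} + \lambda c}{\sqrt{1+\lambda^2}}.
\end{equation*}

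The final step is to show that $(\innerprod{\bvar{w}}{\bvar{v}}+\lambda c)/\sqrt{1+\lambda^2}\geq \innerprod{\bvar{w}}{\bvar{v}}+\lambda c/8$ in the range of $\lambda$ in which the lemma is invoked. Using $\abs{\innerprod{\bvar{w}}{\bvar{v}}}\leq 1$ together with the elementary estimate $1/\sqrt{1+x}\geq 1-x/2$ for $x\in[0,3]$, the left-hand side is at least $(\innerprod{\bvar{w}}{\bvar{v}}+\lambda c)(1-\lambda^2/2)=\innerprod{\bvar{w}}{\bvar{v}}-(\lambda^2/2)\innerprod{\bvar{w}}{\bvar{v}}+\lambda c-\lambda^3 c/2$, so the required gain $\lambda c-(\lambda^2/2)\abs{\innerprod{\bvar{w}}{\bvar{v}}}-\lambda^3 c/2\geq \lambda c/8$ follows by an algebraic inequality that holds once $\lambda$ is small relative to $c$. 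This is exactly the operating regime: in Algorithm~\ref{algo:projected-stochastic-gradient-descent-for-minimizing-convex-surrogate-loss} the perturbation $\lambda\bvar{u}$ is a gradient step of size $\beta=\sqrt{1/Td}$ scaled by a gradient of bounded norm (Lemma~\ref{lma:upper-bound-on-relu-and-its-gradient-norm}), so $\lambda\ll 1$.

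The only real obstacle is calibrating this last algebraic step: pinning down exactly the range of $\lambda$ for which the constant $1/8$ is attainable, rather than needing a tighter constant that would complicate the downstream use of the lemma in the inductive argument for Lemma~\ref{lma:psgd-returns-at-least-one-optimal-solution-appendix}. Away from this constant tuning, the proof is routine, and no delicate probabilistic or geometric tools are required.
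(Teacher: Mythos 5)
The paper does not supply a proof of this lemma: it is cited verbatim from \citet{diakonikolas2020polynomial} and invoked as a black box in Lemma~\ref{lma:psgd-returns-at-least-one-optimal-solution-appendix}. So there is no proof in the paper against which to compare your approach.

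Your strategy is the natural and standard one: expand $\norm{\bvar{w}'}<2>[2]$ using orthogonality, bound the inner product linearly via $\innerprod{\bvar{u}}{\bvar{v}}\geq c$, and then Taylor-expand the normalization $1/\sqrt{1+\lambda^2}$. That is almost certainly how the source proves it as well. However, your write-up has a genuine gap, and it coincides with an omission in the paper's own statement. The lemma as stated imposes no constraint on $\lambda$, and under the bounds you derive it is actually \emph{false} for large $\lambda$: as $\lambda\to\infty$, $\bvar*{w}'\to\bvar*{u}$ so $\innerprod{\bvar*{w}'}{\bvar{v}}$ is bounded by $1$, while $\innerprod{\bvar{w}}{\bvar{v}}+\lambda c/8\to\infty$. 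You sense this (``the operating regime: \ldots so $\lambda\ll 1$''), but you never pin down the actual constraint or carry out the final inequality. The calculation you wave at needs $7\lambda c/8 \geq (\lambda^2/2)\innerprod{\bvar{w}}{\bvar{v}} + \lambda^3 c/2$, which (using $\innerprod{\bvar{w}}{\bvar{v}}\leq 1$ and $\lambda\leq 1$) reduces to something like $\lambda\leq 3c/4$; without stating such a hypothesis the lemma's conclusion simply does not follow. That hypothesis should appear in the lemma statement, not be smuggled in as ``the operating regime.'' A second, more minor, issue is a sign subtlety: the chain $\innerprod{\bvar{w}'}{\bvar{v}}/\norm{\bvar{w}'}<2>\geq(\innerprod{\bvar{w}}{\bvar{v}}+\lambda c)/\sqrt{1+\lambda^2}$ and the subsequent multiplication by $(1-\lambda^2/2)$ only go the right direction when $\innerprod{\bvar{w}}{\bvar{v}}+\lambda c\geq 0$. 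That case is the only one that requires work, because if $\innerprod{\bvar{w}}{\bvar{v}}+\lambda c<0$ one can instead use $\norm{\bvar{w}'}<2>\geq 1$ to get $\innerprod{\bvar*{w}'}{\bvar{v}}\geq\innerprod{\bvar{w}}{\bvar{v}}+\lambda c\geq\innerprod{\bvar{w}}{\bvar{v}}+\lambda c/8$ immediately; but you should make the case split explicit rather than silently bound by $\abs{\innerprod{\bvar{w}}{\bvar{v}}}$. With the $\lambda$-hypothesis stated and these two points tightened, the argument closes.
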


    \section{Analysis of Algorithm \ref{algo:conditional-classification-for-sparse-linear-classes}}\label{sec:analysis-of-algorithm-3}
        We prove the generalization of our conditional learning algorithms from finite classes to sparse linear classes in this section.
        \begin{theorem}[Theorem \ref{thm:generalization-to-sparse-linear-classes}]
            Let $\distr$ be a distribution on $\reals[d]\times\booldomain$ with standard normal $\bvar{x}$-marginal, and $\conceptclass$ be a class of sparse linear classifiers on $\reals[d]\times\booldomain$ with sparsity $s = \bigO{1}$. If there exist a unit vector $\bvar{v}\in\reals[d]$ and a classifier $c\in \conceptclass$ such that, for some sufficiently small $\epsilon\in[0, 1/e]$, $\prob<\sbr{\bvar{x}, y}\sim\distr>{\bvar{x}\in\hypothesis[\bvar{v}]\isect c(\bvar{x})\neq y}\leq\epsilon$, then, with at most $\poly{d, 1/\epsilon, 1/\delta}$ examples, Algorithm \ref{algo:conditional-classification-for-sparse-linear-classes} will return a $\bvar{w}(c)$, with probability at least $1 - \delta$, such that $\prob<\sbr{\bvar{x}, y}\sim\distr>{\bvar{x}\in\hypothesis[\bvar{w}(c)]\isect c(\bvar{x})\neq y} = \bigO*{\sqrt{\epsilon}}$ and run in time $\poly{d, 1/\epsilon, 1/\delta}$. 
        \end{theorem}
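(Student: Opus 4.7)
The plan is to reduce the sparse linear setting to the finite class setting handled by Theorem \ref{thm:main-theorem} by first invoking the robust list learner of Theorem \ref{thm:robust-list-learn} to extract a short list of candidate classifiers that is guaranteed to contain a good approximation of the optimal $c^*\in\conceptclass$.

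First, I would set up the robust list learning instance. By hypothesis there exists $c^*\in\conceptclass$ with $\prob<\distr>{\bvar{x}\in\hypothesis[\bvar{v}]\isect c^*(\bvar{x})\neq y}\leq\epsilon$. Since the $\bvar{x}$-marginal is $\gaussian[d]$ and $\hypothesis[\bvar{v}]$ is homogeneous, $\prob<\distr>{\bvar{x}\in\hypothesis[\bvar{v}]} = 1/2$, so the ``good'' event $\bvar{x}\in\hypothesis[\bvar{v}]\isect c^*(\bvar{x}) = y$ has probability at least $1/2-\epsilon\geq 1/4$ for sufficiently small $\epsilon$. Let $\distr^*$ be $\distr$ conditioned on this event and $\tilde{\distr}$ be $\distr$ conditioned on its complement, so $\distr = \alpha\distr^* + (1-\alpha)\tilde{\distr}$ with $\alpha\geq 1/4$, and labels under $\distr^*$ exactly agree with $c^*$, matching Definition \ref{def:robust-list-learning}.

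Next, I would run {\scshape{SparseList}} with inlier fraction $\alpha\geq 1/4$, accuracy $\epsilon'=\epsilon$, and confidence $\delta/2$. By Theorem \ref{thm:robust-list-learn}, this uses $m=\poly{d,1/\epsilon,\log(1/\delta)}$ examples and, with probability at least $1-\delta/2$, returns a list $\conceptclass'$ of $s$-sparse linear classifiers of size $\poly{d,1/\epsilon,\log(1/\delta)}$ containing some $h_i$ with $\prob<\distr^*>{h_i(\bvar{x}) = c^*(\bvar{x})}\geq 1-\epsilon$. The key step is translating this guarantee on $\distr^*$ into a bound on conditional classification error on $\distr$:
\begin{align*}
\prob<\distr>{\bvar{x}\in\hypothesis[\bvar{v}]\isect h_i(\bvar{x})\neq y}
&\leq \prob<\distr>{\bvar{x}\in\hypothesis[\bvar{v}]\isect c^*(\bvar{x})=y\isect h_i(\bvar{x})\neq c^*(\bvar{x})} + \prob<\distr>{\bvar{x}\in\hypothesis[\bvar{v}]\isect c^*(\bvar{x})\neq y}\\
&\leq \alpha\cdot\prob<\distr^*>{h_i(\bvar{x})\neq c^*(\bvar{x})} + \epsilon \leq \alpha\epsilon + \epsilon = \bigO{\epsilon},
\end{align*}
hence $\min_{c\in\conceptclass'}\prob<\distr>{\bvar{x}\in\hypothesis[\bvar{v}]\isect c(\bvar{x})\neq y} = \bigO{\epsilon}$.

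Finally, I would feed $\conceptclass'$ to Algorithm \ref{algo:conditional-classification-with-homogeneous-halfspaces-for-finite-classes} with confidence $\delta/2$. Theorem \ref{thm:main-theorem} applies because $\conceptclass'$ is a finite class of binary classifiers and the conditional optimum over $\conceptclass'$ is $\bigO{\epsilon}$; the returned $\bvar{w}(c)$ therefore satisfies $\prob<\distr>{\bvar{x}\in\hypothesis[\bvar{w}(c)]\isect c(\bvar{x})\neq y} = \bigO*{\sqrt{\epsilon}}$ with probability at least $1-\delta/2$, using $\bigO{d\abs{\conceptclass'}/\epsilon^6} = \poly{d,1/\epsilon,1/\delta}$ examples and matching runtime. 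A union bound over the two failure events closes the argument. The main bookkeeping challenge is ensuring $\abs{\conceptclass'}$ and $1/\alpha$ stay polynomial so that all composed parameters remain $\poly{d,1/\epsilon,1/\delta}$; this is exactly what the $s=\bigO{1}$ assumption buys us, while the substantive reduction is packaged into the one-line conversion of $\distr^*$-accuracy into a $\distr$-conditional-error bound above.
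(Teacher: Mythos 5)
Your proof is correct and follows the same two-step strategy as the paper: decompose $\distr$ as a mixture so that Theorem \ref{thm:robust-list-learn} applies, then feed the resulting finite list to Algorithm \ref{algo:conditional-classification-with-homogeneous-halfspaces-for-finite-classes} and invoke Theorem \ref{thm:main-theorem}. There is one genuine (and in your favor) difference in the decomposition step. The paper defines $\distr^*$ and $\tilde{\distr}$ as both having standard normal $\bvar{x}$-marginals and asserts $\alpha\geq 1-\epsilon$; given only the one-sided bound $\prob{\bvar{x}\in\hypothesis[\bvar{v}]\isect c^*(\bvar{x})\neq y}\leq\epsilon$ (which says nothing about labels outside $\hypothesis[\bvar{v}]$, nor about per-point agreement), that constant is not justified. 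Your decomposition instead takes $\distr^*$ to be $\distr$ conditioned on $\bvar{x}\in\hypothesis[\bvar{v}]\isect c^*(\bvar{x})=y$, which is a legitimate mixture decomposition under Definition \ref{def:robust-list-learning} (which does not require Gaussian marginals for the components), gives the safe constant $\alpha\geq 1/2-\epsilon\geq 1/4$, and your explicit computation converting the $\distr^*$-agreement guarantee to a $2\epsilon$ bound on $\prob<\distr>{\bvar{x}\in\hypothesis[\bvar{v}]\isect h_i(\bvar{x})\neq y}$ is exactly the step the paper glosses over. Both versions reach the same $\bigO{\epsilon}$ conclusion and thus the same final $\bigO{\sqrt{\epsilon}}$ bound, but yours is the more rigorous way to justify it.
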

        \begin{proof}
            We first show that the returned list of Algorithm \ref{alg:robust-list-learn} will contain a classifier $c'$ such that $\prob<\sbr{\bvar{x}, y}\sim\distr>{\bvar{x}\in\hypothesis[\bvar{v}]\isect c'(\bvar{x})\neq y}\leq 2\epsilon$.

            We decompose distribution $\distr$ into a convex combination of an inlier distribution $\distr[*]$ and a outlier distribution $\Tilde{\distr}$ in the following way. Let $\distr[*]$ be a distribution on $\reals[d]\times\booldomain$ with standard normal $\bvar{x}$-marginal such that its labels are generated by $c(\bvar{x})$, while $\Tilde{\distr}$ be any distribution on $\reals[d]\times\booldomain$ with standard normal $\bvar{x}$-marginals. Observe that, since $\prob{\bvar{x}\in\hypothesis[\bvar{v}]\isect c(\bvar{x})\neq y}\leq\epsilon$ and $\prob{\hypothesis[\bvar{v}]} = 1/2$, there are at least $1/2 - \epsilon$ fraction (weighted by Gaussian density) of the labels of $\distr$ is consistent with $c(\bvar{x})$. Therefore, there must exist some $\alpha \geq 1 - \epsilon$ such that the labels of $\distr<\bvar{x}>$ can be generated by selecting labels from $\distr[*]$ with probability mass $\alpha$ and from $\Tilde{\distr}$ with probability mass $1 - \alpha$, namely $\distr = \alpha\distr[*] + (1 - \alpha)\Tilde{\distr}$. 
            
            Hence, we can refer Theorem \ref{thm:robust-list-learn} and Definition \ref{def:robust-list-learning} to conclude that there exists a classifier $c'$ in the returned list of Algorithm \ref{alg:robust-list-learn} satisfies $\prob{\bvar{x}\in\hypothesis[\bvar{v}]\isect c'(\bvar{x})\neq y}\leq 2\epsilon$. Meanwhile, it is easy to see that Algorithm \ref{alg:robust-list-learn} takes only $\poly{d, 1/\epsilon,1/\delta}$ examples and runs in $\poly{d, 1/\epsilon, 1/\delta}$ time since $\alpha$ is a constant.

            At last, by Theorem \ref{thm:main-theorem-appendix}, we obtained the claimed result.
        \end{proof}

    \section{Analysis of Hardness Results}
    \label{sec:analysis-of-hardness-results}
        We denote $\integer<q>:=\lbr{0, 1, \ldots, q-1}$, $\reals<q>:=[0, q)$, and $\mathrm{mod}_q:\reals[d]\rightarrow \reals<q>[d]$ to be the function that applies $\mathrm{mod}_q$ operation on each coordinate of $\bvar{x}$.

        \begin{assumption}[Sub-exponential LWE Assumption]\label{asp:sub-exponential-assumption-of-lwe}
            For $q,\kappa\in\naturals, \alpha\in(0, 1)$ and $C > 0$ being a sufficiently large constant, the problem LWE$(2^{O(n^\alpha)}, \integer<q>[d], \integer<q>[d], \gaussian[][0][\sigma], \mathrm{mod}_q)$ with $q\leq d^{\kappa}$ and $\sigma = C\sqrt{d}$ cannot be solved in $2^{\bigO{d^\alpha}}$ time with $2^{\bigO{-d^\alpha}}$ advantage.
        \end{assumption}
        
        For convenience, we restate the notations been used in section \ref{sec:conditional-classification-with-general-halfspaces-is-hard} at first.

        For simplicity, we define $y \equiv \indicator[c(\bvar{x}) \neq \lvar{y} ']$ for $(\bvar{x},y')\sim\distr '$ and only consider the distribution $(\bvar{x}, y)\sim\distr$ for the rest of this section. Notice that, for agnostic setting, since $\distr'$ is adversarial, $\distr$ is also adversarial in the worst case. Therefore, such replacement does not affect the difficulty of the problems we concerned about.

        Normally, for the problem of agnostic classification, one would consider its loss function to be the expectation of the disagreement between the classifier and the labelling. However, it is more convenient for us to consider a labelling $y = 1$ as an "occurring of error" and, hence, define the loss function in terms of agreement to compare with the conditional classification loss.  Specifically, for any binary classifier as a subset $S\subseteq\reals[d]$ and any distribution $\distr$ on $\reals[d]\times\booldomain$, we define the classification loss
        \begin{equation}
            \err<\distr>[S] = \prob<(\bvar{x}, \lvar{y})\sim\distr>{y = \indicator[\bvar{x}\in S]}.\label{eq:definition-of-classification-loss-appendix}
        \end{equation}
        It worth to mention that this definition of classification loss is essentially the same as the ``traditional'' one that defined in terms of disagreement since we can convert from one to another by simply negating the labelling.

        Analogously, for any binary classifiers as a subsets $S, T\subseteq\reals[d]$ and any distribution $\distr$ on $\reals[d]\times\booldomain$, we denote the conditional classification loss as 
        \begin{equation}
            \err<\distr| T>[S] = \prob<(\bvar{x}, \lvar{y})\sim\distr>{y = \indicator[\bvar{x}\in S]\cond \bvar{x}\in T}.\label{eq:notation-of-conditional-classification-loss-appendix}
        \end{equation}
        For simplicity, we write $\err<\distr| T>$ instead of $\err<\distr| T>[S]$ when $S\equiv T$.

        \begin{lemma}[Lemma \ref{lma:classification-error-decomposition}]\label{lma:classification-error-decomposition-appendix}
            Let $\distr$ be any distribution on $\reals[d]\times\booldomain$ and $\subsets$ be any subset of $\reals[d]$, we have $\err<\distr>[\subsets] =2\err<\distr|\subsets>\prob<\distr>{\bvar{x}\in\subsets} + \prob<\distr>{y = 0} - \prob<\distr>{\bvar{x}\in\subsets}$ as well as $\err<\distr>[\subsets] =2\err<\distr|\subsets*>[\subsets]\prob<\distr>{\bvar{x}\in\subsets*} + \prob<\distr>{y = 1} - \prob<\distr>{\bvar{x}\in\subsets*}$.
        \end{lemma}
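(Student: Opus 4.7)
The proof plan is to unfold the definition of $\err<\distr>[\subsets]$ into its two constituent event probabilities, then apply the law of total probability twice (once to the ``wrong side'' of $\subsets$) in order to rewrite everything in terms of a single conditional classification loss. There is no serious obstacle here; this is essentially a bookkeeping identity, and the main thing to be careful about is keeping the two symmetric forms of the identity straight.

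Concretely, I would start from the decomposition
\begin{equation*}
\err<\distr>[\subsets] \;=\; \prob<\distr>{y=1,\;\bvar{x}\in\subsets} \;+\; \prob<\distr>{y=0,\;\bvar{x}\in\subsets*},
\end{equation*}
which is immediate from the definition \eqref{eq:definition-of-classification-loss-appendix} after splitting on whether $\bvar{x}\in\subsets$. To obtain the first identity, I would rewrite the second summand as $\prob{y=0} - \prob{y=0,\bvar{x}\in\subsets}$, and then rewrite $\prob{y=0,\bvar{x}\in\subsets} = \prob{\bvar{x}\in\subsets} - \prob{y=1,\bvar{x}\in\subsets}$. Substituting back gives
\begin{equation*}
\err<\distr>[\subsets] \;=\; 2\,\prob<\distr>{y=1,\;\bvar{x}\in\subsets} \;+\; \prob<\distr>{y=0} \;-\; \prob<\distr>{\bvar{x}\in\subsets}.
\end{equation*}
Finally, by the definition \eqref{eq:notation-of-conditional-classification-loss-appendix} applied with $S = T = \subsets$, we have $\prob{y=1,\bvar{x}\in\subsets} = \err<\distr|\subsets>\cdot\prob{\bvar{x}\in\subsets}$, which yields the first claimed equality.

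For the second identity I would run the symmetric argument: rewrite the first summand as $\prob{y=1} - \prob{y=1,\bvar{x}\in\subsets*}$, and then replace $\prob{y=1,\bvar{x}\in\subsets*} = \prob{\bvar{x}\in\subsets*} - \prob{y=0,\bvar{x}\in\subsets*}$. The resulting expression is $2\,\prob{y=0,\bvar{x}\in\subsets*} + \prob{y=1} - \prob{\bvar{x}\in\subsets*}$. Now note that for the complement, the target label flips: by \eqref{eq:notation-of-conditional-classification-loss-appendix} applied with $S=\subsets$ and $T=\subsets*$, the event $y = \indicator[\bvar{x}\in\subsets]$ conditioned on $\bvar{x}\in\subsets*$ is exactly $y=0$, so $\prob{y=0,\bvar{x}\in\subsets*} = \err<\distr|\subsets*>[\subsets]\cdot\prob{\bvar{x}\in\subsets*}$. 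Substituting completes the proof. The only subtlety worth flagging in the write-up is the slight asymmetry in the conditional notation: $\err<\distr|\subsets>$ measures agreement with the indicator of $\subsets$ inside $\subsets$ (hence $y=1$), whereas $\err<\distr|\subsets*>[\subsets]$ still uses $\subsets$ as the classifier but conditions on $\subsets*$ (hence $y=0$). This is what produces the $\prob{y=0}$ versus $\prob{y=1}$ swap between the two identities.
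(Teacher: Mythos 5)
Your proof is correct and follows essentially the same route as the paper's: the same initial split $\err<\distr>[\subsets] = \prob{y=1,\bvar{x}\in\subsets} + \prob{y=0,\bvar{x}\in\subsets*}$, the same two applications of the law of total probability, and the same identification of the resulting joint probability with the conditional error times a marginal. The note at the end about the label flip under the notation $\err<\distr|\subsets*>[\subsets]$ is exactly the point the paper is also careful about, so there is nothing to add.
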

        \begin{proof}
            By the law of total probability and definition \eqref{eq:definition-of-classification-loss-appendix}, we have
            \begin{align}
                \err<\distr>[\subsets] =& \prob{y = \indicator[\bvar{x}\in\subsets]}\notag\\
                =& \prob{y = 1 \isect \bvar{x}\in\subsets} + \prob{y = 0 \isect \bvar{x}\notin\subsets}\label{eq:initial-classification-error-decomposition}
            \end{align}
            Again, by the law of total probability, we have that 
            \begin{align}
                \prob{y = 0 \isect \bvar{x}\notin\subsets} =& \prob{y = 0} - \prob{y = 0\isect\bvar{x}\in\subsets}\notag\\
                =&\prob{y = 0} - \prob{\bvar{x}\in\subsets} + \prob{y = 1\isect\bvar{x}\in\subsets}\label{eq:joint-classification-error-decomposition}
            \end{align}
            Taking equation \eqref{eq:joint-classification-error-decomposition} back into \eqref{eq:initial-classification-error-decomposition} gives
            \begin{align*}
                \err<\distr>[\subsets] =& 2\prob{y = 1\cond\bvar{x}\in\subsets}\prob{\bvar{x}\in\subsets} + \prob{y = 0} - \prob{\bvar{x}\in\subsets}\notag\\
                =&2\err<\distr|\subsets>\prob{\bvar{x}\in\subsets} + \prob{y = 0} - \prob{\bvar{x}\in\subsets}
            \end{align*}
            where the last equation holds due to definition \eqref{eq:notation-of-conditional-classification-loss-appendix}. Similar to equation \eqref{eq:joint-classification-error-decomposition}, we have
            \begin{equation*}
                \prob{y = 1 \isect \bvar{x}\in\subsets} = \prob{y = 1} - \prob{\bvar{x}\notin \subsets} + \prob{y = 0 \isect \bvar{x}\notin\subsets}
            \end{equation*}
            which, when plugging back to equation \ref{eq:initial-classification-error-decomposition}, gives
            \begin{align*}
                \err<\distr>[\subsets] =& 2\prob{y = 0 \cond \bvar{x}\notin\subsets}\prob{\bvar{x}\notin\subsets} + \prob{y = 1} - \prob{\bvar{x}\notin \subsets}\\
                =& 2\prob{y = \indicator[\bvar{x}\in\subsets] \cond \bvar{x}\in\subsets*}\prob{\bvar{x}\in\subsets*} + \prob{y = 1} - \prob{\bvar{x}\in\subsets*}\\
                =& 2\err<\distr|\subsets*>[\subsets]\prob{\bvar{x}\in\subsets*} + \prob{y = 1} - \prob{\bvar{x}\in\subsets*}.
            \end{align*}
            The proof is completed.
        \end{proof}

        \begin{proposition}[Proposition \ref{prop:reducing-linear-classification-to-conditional-classification}]\label{prop:reducing-linear-classification-to-conditional-classification-appendix}
            Let $\distr$ be any distribution on $\reals[d]\times\booldomain$, $\hypothesisclass$ be any subset of the power set of $\reals[d]$ closed under complement, and define $\hypothesisclass<\distr>[a,b] = \lbr{\subsets\in\hypothesisclass\cond \prob<\distr>{\bvar{x}\in\subsets}\in[\lvar{a},\lvar{b}]}$ for any $0\leq a\leq b\leq 1$. For any $0\leq\lvar{a}\leq\lvar{b}\leq1$ and $\epsilon,\delta > 0$, given sample access to $\distr$, if there exists an algorithm $\algo<1>[\epsilon, \delta, a,b]$ runs in time $\poly{d, 1/\epsilon,1/\delta}$, and outputs a subset $\subsets<1>\in\hypothesisclass<\distr>[a,b]$ such that $\err<\distr|\subsets<1>> \leq \min_{\subsets\in\hypothesisclass<\distr>[a,b]}\err<\distr|\subsets> + \epsilon$ with probability as least $1 - \delta$, there exists another algorithm $\algo<2>[\epsilon, \delta]$, runs in time $\poly{d, 1/\epsilon,1/\delta}$, and outputs a subset $\subsets<2>\in\hypothesisclass$ such that $\err<\distr>[\subsets<2>] \leq \min_{\subsets\in\hypothesisclass}\err<\distr>[\subsets] + 6\epsilon$ with probability at least $1 - \delta$.
        \end{proposition}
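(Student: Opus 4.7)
The plan is to reduce agnostic classification to a short sequence of conditional classification instances by sweeping over an $\epsilon$-net of the unknown mass $\prob<\distr>{\bvar{x}\in\subsets[*]}$ of an optimal classifier. Concretely, $\algo<2>$ partitions $[0,1]$ into $N = \lceil 1/\epsilon\rceil$ intervals $[a_i, a_i+\epsilon]$ and, for each $i$ with $\hypothesisclass<\distr>[a_i,a_i+\epsilon]$ non-empty, invokes $\algo<1>[\epsilon,\delta/2N,a_i,a_i+\epsilon]$ to obtain a candidate $\subsets<i>\in\hypothesisclass<\distr>[a_i,a_i+\epsilon]$. Finally, $\algo<2>$ draws a fresh empirical sample $\distr*$ of size $\poly{1/\epsilon,\log(N/\delta)}$, computes $\err<\distr*>[\subsets<i>]$ for every surviving candidate, and returns the $\subsets<2>$ minimizing the empirical error.

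The core of the analysis is Lemma \ref{lma:classification-error-decomposition-appendix}. Fix $\subsets[*]\in\argmin_{\subsets\in\hypothesisclass}\err<\distr>[\subsets]$ and write $p^* = \prob<\distr>{\bvar{x}\in\subsets[*]}$, $e^* = \err<\distr|\subsets[*]>$. Some grid index $j$ contains $p^*$, so $\subsets[*]\in\hypothesisclass<\distr>[a_j,a_j+\epsilon]$, and the guarantee of $\algo<1>$ yields $e_j := \err<\distr|\subsets<j>>\leq e^* + \epsilon$, while the interval constraint forces $p_j := \prob<\distr>{\bvar{x}\in\subsets<j>}\in[a_j,a_j+\epsilon]$, hence $|p_j - p^*|\leq\epsilon$. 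Applying Lemma \ref{lma:classification-error-decomposition-appendix} to both $\subsets<j>$ and $\subsets[*]$, the $\prob{y=0}$ terms cancel, leaving
\[
\err<\distr>[\subsets<j>] - \err<\distr>[\subsets[*]] \;=\; 2(e_j p_j - e^* p^*) - (p_j - p^*).
\]
Splitting $e_j p_j - e^* p^* = (e_j-e^*)p_j + e^*(p_j - p^*)$ and using $p_j, e^*\in[0,1]$, the triangle inequality bounds the right-hand side by $2(\epsilon + \epsilon) + \epsilon = 5\epsilon$.

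A standard Chernoff bound on a sample of size $\poly{1/\epsilon,\log(N/\delta)}$ makes every estimate $\err<\distr*>[\subsets<i>]$ lie within $\epsilon/2$ of the true value simultaneously with probability at least $1 - \delta/2$; a union bound over the $N$ invocations of $\algo<1>$ absorbs the remaining $\delta/2$. Chaining the bounds, $\err<\distr>[\subsets<2>] \leq \err<\distr*>[\subsets<2>] + \epsilon/2 \leq \err<\distr*>[\subsets<j>] + \epsilon/2 \leq \err<\distr>[\subsets<j>] + \epsilon \leq \err<\distr>[\subsets[*]] + 6\epsilon$, as claimed, and the runtime is $\poly{d,1/\epsilon,1/\delta}$ because we make only $O(1/\epsilon)$ calls to $\algo<1>$. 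The principal obstacle is the multiplicative interaction $e_j p_j$ in the decomposition: errors in $e_j$ and $p_j$ compound, so any grid coarser than $O(\epsilon)$ forces the $2e^*(p_j - p^*)$ term alone to exceed the target additive error; the same interaction, applied with $\algo<1>$ having multiplicative rather than additive error, is precisely what produces the residual $4\epsilon$ additive slack appearing in Claim \ref{clm:reduction-in-multiplicative-form}.
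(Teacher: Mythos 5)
Your proposal is correct and follows essentially the same route as the paper: sweep an $\epsilon$-grid of probability-mass intervals, invoke $\algo<1>$ on each, pick the empirical best, and control the loss via Lemma~\ref{lma:classification-error-decomposition-appendix}, with the same $5\epsilon$ loss from the reduction and an extra $\epsilon$ from empirical estimation. The paper writes the bound as a chain of inequalities using the optimal conditional classifier $\subsets'$ in the restricted class as an intermediate, whereas you subtract the decompositions directly; the algebra is the same in substance.
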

        \begin{proof}
            We prove the proposition by showing that there exists a efficient reduction from the problem of agnostic classification to conditional classification in terms of their loss functions.

            Fix a subset $\subsets[*]\in\hypothesisclass$ such that $\subsets[*] = \argmin_{\subsets\in\hypothesisclass}\err<\distr>[\subsets]$ and define $p = \prob{\bvar{x}\in\subsets[*]}$. Then, let $\lvar{p}<l>, \lvar{p}<u>\geq 0$ be any constants such that $\lvar{p}<u> - \lvar{p}<l> = \epsilon$ as well as $p \in [\lvar{p}<l>, \lvar{p}<u>]$. 
            
            Consider now another subset $\subsets '\in\hypothesisclass<\distr>[\lvar{p}<l>,\lvar{p}<u>]$ such that $\subsets ' = \argmin_{\subsets\in\hypothesisclass<\distr>[\lvar{p}<l>,\lvar{p}<u>]}\err<\distr|\subsets>$. Notice that $\subsets[*]$ is a feasible solution for the conditional classification problem on $\hypothesisclass<\distr>[\lvar{p}<l>,\lvar{p}<u>]$, i.e. $\subsets[*]\in\hypothesisclass<\distr>[\lvar{p}<l>,\lvar{p}<u>]$, because $\prob{\bvar{x}\in\subsets[*]} = p\in[\lvar{p}<l>,\lvar{p}<u>]$ by construction. 
            
            Let $\subsets<1>$ be the subset returned by algorithm $\algo<1>[\epsilon, \delta, \lvar{p}<l>, \lvar{p}<u>]$. Then, with probability at least $1 - \delta$, there is
            \begin{align}
                \err<\distr>[\subsets<1>] =& 2\err<\distr|\subsets<1>>\prob{\bvar{x}\in\subsets<1>} + \prob{y = 0} - \prob{\bvar{x}\in\subsets<1>}\notag\\
                \cleq[(i)]& 2\sbr{\err<\distr|\subsets '> + \epsilon}\prob{\bvar{x}\in\subsets<1>} + \prob{y = 0} - \prob{\bvar{x}\in\subsets<1>}\notag\\
                \cleq[(ii)]& 2\err<\distr|\subsets[*]>\prob{\bvar{x}\in\subsets<1>} + \prob{y = 0} - \prob{\bvar{x}\in\subsets<1>} + 2\epsilon\notag\\
                \cleq[(iii)]& 2\err<\distr|\subsets[*]>\sbr{p + \epsilon} + \prob{y = 0} - (p - \epsilon) + 2\epsilon\notag\\
                \cleq[(iv)]& 2\err<\distr|\subsets[*]>\prob{\bvar{x}\in\subsets[*]} + \prob{y = 0} - \prob{\bvar{x}\in\subsets[*]} + 5\epsilon\notag\\
                =& \err<\distr>[\subsets[*]] + 5\epsilon\label{eq:reduction-through-loss-function}
            \end{align}
            where the first equation is derived by lemma \ref{lma:classification-error-decomposition-appendix}, inequality (i) holds due to the error guarantee of algorithm $\algo<1>[\epsilon, \delta, \lvar{p}<l>, \lvar{p}<u>]$, inequality (ii) holds because of the optimality of $\subsets '$ as well as $\subsets[*]\in\hypothesisclass<\distr>[\lvar{p}<l>,\lvar{p}<u>]$, inequality (iii) holds since algorithm $\algo<1>[\epsilon, \delta, \lvar{p}<l>, \lvar{p}<u>]$ guarantees $\subsets<1>\in \hypothesisclass<\distr>[\lvar{p}<l>,\lvar{p}<u>]$, which implies $\lvar{p}<l>\leq\prob{\bvar{x}\in \subsets<1>}\leq\lvar{p}<u>$, and, by definition, there are $\lvar{p}<l>\geq p - \epsilon$, $\lvar{p}<u>\leq p + \epsilon$, inequality (iv) holds because $p = \prob{\bvar{x}\in\subsets[*]}$ by definition as well as $\err<\distr|\subsets[*]>=\prob{y = 1\cond\bvar{x}\in\subsets[*]}\leq 1$, and the last equation is, again, by referring lemma \ref{lma:classification-error-decomposition-appendix}.

            Although we do not know what value should $p$ take exactly, we only need to guess a small range where $p$ lies in to make inequality \eqref{eq:reduction-through-loss-function} holds with high probability. Specifically, we construct algorithm $\algo<2>[\epsilon, \delta]$ by using algorithm $\algo<1>$ as a subroutine in the following way. 
            
            For $k = 1, 2, \ldots, \ceil{1/\epsilon}$, we run algorithm $\algo<1>[\epsilon, \epsilon\delta/2, (k - 1)\epsilon, k\epsilon]$. Observe that, when we ``guessed'' the correct $k$ such that $p\in[(k - 1)\epsilon, k\epsilon]$, inequality \eqref{eq:reduction-through-loss-function} must holds with probability at least $1 - \epsilon\delta/2$ because of the parameters we passed into $\algo<1>$. Let $\subsets(k)$ be the solution returned by algorithm $\algo<1>$ during the $k$th call, we construct an empirical distribution $\distr*\sample\distr$ and choose $\subsets<2>$ such that $\err<\distr*>[\subsets<2>] \leq \min_{k\in\mbr{\ceil{1/\epsilon}}}\err<\distr*>[\subsets(k)]$. Notice that we only need the sample size of $\distr*$ to be polynomially large to guarantee that $\err<\distr>[\subsets<2>] \leq \min_{k\in\mbr{\ceil{1/\epsilon}}}\err<\distr>[\subsets(k)] + \epsilon$ with probability at least $1 - \delta/2$ by lemma \ref{lma:chernoff-bound} (Chernoff Bound). Further, by a union bound over all $\ceil{1/\epsilon}$ calls of algorithm $\algo<1>$ and the estimation of classification error on $\distr*$, we have, with probability at least $1 - \delta$, that
            \begin{align*}
                \err<\distr>[\subsets<2>] \leq& \min_{k\in\ceil{1/2\epsilon}}\err<\distr>[\subsets(k)] + \epsilon\\
                \cleq[(i)]& \err<\distr>[\subsets[*]] + 6\epsilon\\
                =&\min_{\subsets\in\hypothesisclass}\err<\distr>[\subsets] + 6\epsilon
            \end{align*}
            where inequality (i) alone holds with probability at least $1 - \delta/2$ because the second argument, $\epsilon\delta/2$, we passed in algorithm $\algo<1>$ guarantees that inequality \eqref{eq:reduction-through-loss-function} holds with probability at least $1 - \epsilon\delta/2$ when we guessed $p = \prob{\bvar{x}\in\subsets[*]}$ correctly, and taking a union bound over the $\ceil{1/\epsilon}$ guesses gives probability at least $1 - \delta/2$. It is easy to see that each call, $\algo<1>[\epsilon, \epsilon\delta, (k - 1)\epsilon, k\epsilon]$, runs in time $\poly{d, 1/\epsilon, 1/\epsilon\delta}$, and we only called $\algo<1>$ for at most $\ceil{1/\epsilon}$ times, the resulting running time is still $\poly{d, 1/\epsilon, 1/\delta}$, which completes the proof.
        \end{proof}

        \begin{claim}[Claim \ref{clm:reduction-in-multiplicative-form}]\label{clm:reduction-in-multiplicative-form-appendix}
            Let $\distr$ be any distribution on $\reals[d]\times\booldomain$, $\hypothesisclass$ be any subset of the power set of $\reals[d]$ closed under complement, and define $\hypothesisclass<\distr>[a,b] = \lbr{\subsets\in\hypothesisclass\cond \prob<\distr>{\bvar{x}\in\subsets}\in[\lvar{a},\lvar{b}]}$ for any $0\leq a\leq b\leq 1$. For any $0\leq\lvar{a}\leq\lvar{b}\leq1$, $\alpha,\epsilon,\delta > 0$, given sample access to $\distr$, if there exists an algorithm $\algo<1>[\alpha, \delta, a,b]$, runs in time $\poly{d, 1/\alpha,1/\delta}$, and outputs a subset $\subsets<1>\in\hypothesisclass<\distr>[a,b]$ such that $\err<\distr|\subsets<1>> \leq (1+\alpha)\min_{\subsets\in\hypothesisclass<\distr>[a,b]}\err<\distr|\subsets>$ with probability as least $1 - \delta$, there exists another algorithm $\algo<2>[\alpha,\epsilon, \delta]$, runs in time $\poly{d, 1/\alpha,1/\epsilon,1/\delta}$, and outputs a subset $\subsets<2>\in\hypothesisclass$ such that $\err<\distr>[\subsets<2>] \leq (1 + \alpha) \sbr*{\min_{\subsets\in\hypothesisclass}\err<\distr>[\subsets] + 4\epsilon}$ with probability at least $1 - \delta$.
        \end{claim}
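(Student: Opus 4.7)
The plan is to adapt the reduction of Proposition \ref{prop:reducing-linear-classification-to-conditional-classification-appendix} to the multiplicative setting, with one extra ingredient to handle an asymmetry that arises. For $k = 1, \ldots, \ceil{1/\epsilon}$, I invoke $\algo<1>[\alpha, \epsilon\delta/4, (k-1)\epsilon, k\epsilon]$ on $\distr$ to obtain a candidate $\subsets<A>(k)$, and \emph{in parallel} invoke it on the label-flipped distribution $\distr '$ (in which $y$ is replaced by $1-y$), obtaining $\subsets<B>(k)$ whose complement $\subsets*<B>(k)$ is added as an additional candidate. The final output $\subsets<2>$ is the candidate minimizing the empirical classification loss on a fresh sample of size $m = \poly{1/\epsilon, \log(1/\delta)}$ drawn from $\distr$.

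For the analysis, fix an optimal $\subsets[*]$, set $p = \prob<\distr>{\bvar{x}\in\subsets[*]}$ and $E^* = \err<\distr|\subsets[*]>$, and let $k^*$ be the index with $p\in[(k^*-1)\epsilon, k^*\epsilon]$. Since $\subsets[*]\in\hypothesisclass<\distr>[(k^*-1)\epsilon, k^*\epsilon]$, the multiplicative guarantee of $\algo<1>$ yields $\err<\distr|\subsets<A>(k^*)>\leq (1+\alpha)E^*$. Writing $q=\prob{\bvar{x}\in\subsets<A>(k^*)}\in[p-\epsilon, p+\epsilon]$ and applying Lemma \ref{lma:classification-error-decomposition-appendix} both to $\subsets<A>(k^*)$ and to $\subsets[*]$, a short rearrangement gives
\begin{equation*}
    \err<\distr>[\subsets<A>(k^*)] - (1+\alpha)\err<\distr>[\subsets[*]] \leq 2(1+\alpha)E^*(q-p) + (p-q) + \alpha\sbr{p - \prob{y=0}}.
\end{equation*}
In the ``good'' case $p \leq \prob{y=0}$, each summand is bounded by $O(\epsilon)$ (using $|q-p|\leq\epsilon$, $E^*\leq 1$, and $p-\prob{y=0}\leq 0$), so $\err<\distr>[\subsets<A>(k^*)]\leq (1+\alpha)\err<\distr>[\subsets[*]] + (3+2\alpha)\epsilon$, which already lies within $(1+\alpha)\sbr*{\err<\distr>[\subsets[*]]+4\epsilon}$.

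The main obstacle is the remaining case $p > \prob{y=0}$, in which $\alpha(p - \prob{y=0})$ may be $\Omega(1)$ rather than $O(\epsilon)$; this is exactly why the label-flipped run is necessary. Since $\err<\distr '>[\subsets] = \err<\distr>[\subsets*]$ and $\hypothesisclass$ is closed under complement, the flipped problem has the same optimum value $\min_{\subsets\in\hypothesisclass}\err<\distr>[\subsets]$, attained at $\subsets*[*]$, whose flipped-probability parameter $1-p$ satisfies $1 - p \leq 1 - \prob{y=0} = \prob<\distr '>{y=0}$; the flipped problem therefore falls in the good regime of the previous paragraph. Applying that argument to $\distr '$ produces, for the appropriate guess $k$, a candidate $\subsets*<B>(k)$ with $\err<\distr>[\subsets*<B>(k)]\leq (1+\alpha)\min_{\subsets\in\hypothesisclass}\err<\distr>[\subsets] + (3+2\alpha)\epsilon$. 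A Chernoff bound (Lemma \ref{lma:chernoff-bound}) with estimation accuracy $\epsilon/2$ and a union bound over the $O(1/\epsilon)$ candidates (using confidence $\epsilon\delta/4$ per call of $\algo<1>$ and $\delta/2$ for estimation) then guarantees $\err<\distr>[\subsets<2>] \leq (1+\alpha)\min_{\subsets\in\hypothesisclass}\err<\distr>[\subsets] + (4+2\alpha)\epsilon \leq (1+\alpha)\sbr*{\min_{\subsets\in\hypothesisclass}\err<\distr>[\subsets] + 4\epsilon}$ with probability at least $1-\delta$, in total running time $\poly{d, 1/\alpha, 1/\epsilon, 1/\delta}$.
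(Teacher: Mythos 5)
Your proposal is correct and follows essentially the same route as the paper: Lemma~\ref{lma:classification-error-decomposition-appendix}, a case split on the sign of $\prob{y=0}-\prob{\bvar{x}\in\subsets[*]}$ with the label-flipped distribution handling the other case, and a sweep over $O(1/\epsilon)$ population intervals finished by empirical selection. Your algebraic rearrangement isolates the problematic term $\alpha\sbr{p-\prob{y=0}}$ a bit more transparently, and you make explicit the step (running $\algo<1>$ on both $\distr$ and the flipped distribution for every guess) that the paper only alludes to, but the underlying argument is the same.
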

        \begin{proof}
            Fix a subset $\subsets[*]\in\hypothesisclass$ such that $\subsets[*] = \argmin_{\subsets\in\hypothesisclass}\err<\distr>[\subsets]$ and define $p = \prob{\bvar{x}\in\subsets[*]}$. This proof generally follows the same strategy of the analysis of proposition \ref{prop:reducing-linear-classification-to-conditional-classification-appendix}. However, differing from the proof of proposition \ref{prop:reducing-linear-classification-to-conditional-classification-appendix}, to complete the multiplicative reduction, we have to deal with two cases, $2\err<\distr|\subsets>\prob<\distr>{\bvar{x}\in\subsets}\leq \err<\distr>[\subsets]$ and $2\err<\distr|\subsets*>[\subsets]\prob<\distr>{\bvar{x}\in\subsets*}\leq \err<\distr>[\subsets]$, because $\err<\distr>[\subsets]$ can be expressed in two forms according to lemma \ref{lma:classification-error-decomposition-appendix}. 
            
            Briefly speaking, when prove the additive reduction, the additive error will be carried through from conditional classification loss to classification loss no matter if $2\err<\distr|\subsets>\prob<\distr>{\bvar{x}\in\subsets} \leq \err<\distr>[\subsets]$ because $\err<\distr>[\subsets]$ is affinely related to $\err<\distr|\subsets>$ by lemma \ref{lma:classification-error-decomposition-appendix}. However, whether a multiplicative error can be passed from one loss to another depends on whether $2\err<\distr|\subsets>\prob<\distr>{\bvar{x}\in\subsets}\leq \err<\distr>[\subsets]$, which, of course, is not always true. Nonetheless, it is easy to see either $2\err<\distr|\subsets>\prob<\distr>{\bvar{x}\in\subsets}\leq \err<\distr>[\subsets]$ or $2\err<\distr|\subsets*>[\subsets]\prob<\distr>{\bvar{x}\in\subsets*}\leq \err<\distr>[\subsets]$ based on lemma \ref{lma:classification-error-decomposition-appendix}: observe that $\prob{y = 0} - \prob{\bvar{x}\in\subsets[*]}+\prob{y = 1} - \prob{\bvar{x}\notin\subsets[*]}=0$, so either $\prob{y = 0} - \prob{\bvar{x}\in\subsets[*]}$ or $\prob{y = 1} - \prob{\bvar{x}\notin\subsets[*]}$ must be nonnegative. We show that the multiplicative factor can be preserved through the reduction for both of these cases.
            
            \textbf{Case I}, $\prob{y = 0} - \prob{\bvar{x}\in\subsets[*]}\geq 0$. Let $\lvar{p}<l>, \lvar{p}<u>\geq 0$ be any constants such that $\lvar{p}<u> - \lvar{p}<l> = \epsilon$ as well as $p \in [\lvar{p}<l>, \lvar{p}<u>]$. Consider now another subset $\subsets '\in\hypothesisclass<\distr>[\lvar{p}<l>,\lvar{p}<u>]$ such that $\subsets ' = \argmin_{\subsets\in\hypothesisclass<\distr>[\lvar{p}<l>,\lvar{p}<u>]}\err<\distr|\subsets>$. Notice that $\subsets[*]$ is a feasible solution for the conditional classification problem on $\hypothesisclass<\distr>[\lvar{p}<l>,\lvar{p}<u>]$, i.e. $\subsets[*]\in\hypothesisclass<\distr>[\lvar{p}<l>,\lvar{p}<u>]$, because $\prob{\bvar{x}\in\subsets[*]} = p\in[\lvar{p}<l>,\lvar{p}<u>]$ by construction. 

            Let $\subsets<1>$ be the subset returned by algorithm $\algo<1>[\epsilon, \delta, \lvar{p}<l>, \lvar{p}<u>]$. Then, with probability at least $1 - \delta$, there is
            \begin{align}
                \err<\distr>[\subsets<1>] =& 2\err<\distr|\subsets<1>>\prob{\bvar{x}\in\subsets<1>} + \prob{y = 0} - \prob{\bvar{x}\in\subsets<1>}\notag\\
                \cleq[(i)]& 2\sbr{1 + \alpha}\err<\distr|\subsets '>\prob{\bvar{x}\in\subsets<1>} + \prob{y = 0} - \prob{\bvar{x}\in\subsets<1>}\notag\\
                \cleq[(ii)]& 2\sbr{1 + \alpha}\err<\distr|\subsets[*]>\prob{\bvar{x}\in\subsets<1>} + \prob{y = 0} - \prob{\bvar{x}\in\subsets<1>}\notag\\
                \cleq[(iii)]& 2\sbr{1 + \alpha}\err<\distr|\subsets[*]>\sbr{p + \epsilon} + \prob{y = 0} - (p - \epsilon)\notag\\
                \cleq[(iv)]& 2\sbr{1 + \alpha}\err<\distr|\subsets[*]>\prob{\bvar{x}\in\subsets[*]} + \sbr{1 + \alpha}\sbr{\prob{y = 0} - \prob{\bvar{x}\in\subsets[*]}} + 3\sbr{1 + \alpha}\epsilon\notag\\
                =& \sbr{1 + \alpha}\sbr{\err<\distr>[\subsets[*]] + 3\epsilon}\label{eq:reduction-through-loss-function-in-multiplicative-form-1}
            \end{align}
            where the first equation is derived by lemma \ref{lma:classification-error-decomposition-appendix}, inequality (i) holds due to the error guarantee of algorithm $\algo<1>[\epsilon, \delta, \lvar{p}<l>, \lvar{p}<u>]$, inequality (ii) holds because of the optimality of $\subsets '$ as well as $\subsets[*]\in\hypothesisclass<\distr>[\lvar{p}<l>,\lvar{p}<u>]$, inequality (iii) holds since algorithm $\algo<1>[\epsilon, \delta, \lvar{p}<l>, \lvar{p}<u>]$ guarantees $\subsets<1>\in \hypothesisclass<\distr>[\lvar{p}<l>,\lvar{p}<u>]$, which implies $\lvar{p}<l>\leq\prob{\bvar{x}\in \subsets<1>}\leq\lvar{p}<u>$, and, by definition, there are $\lvar{p}<l>\geq p - \epsilon$, $\lvar{p}<u>\leq p + \epsilon$, inequality (iv) holds because $p = \prob{\bvar{x}\in\subsets[*]}$ by definition, $\err<\distr|\subsets[*]>=\prob{y = 1\cond\bvar{x}\in\subsets[*]}\leq 1$, and $\prob{y = 0} - \prob{\bvar{x}\in\subsets[*]}\geq 0$ by assumption, the last equation is, again, by referring lemma \ref{lma:classification-error-decomposition-appendix}.

            \textbf{Case II}, $\prob{y = 1} - \prob{\bvar{x}\notin\subsets[*]}\geq 0$. Let $\lvar{p}<l>, \lvar{p}<u>\geq 0$ be any constants such that $\lvar{p}<u> - \lvar{p}<l> = \epsilon$ as well as $1 - p \in [\lvar{p}<l>, \lvar{p}<u>]$. Further, let $\distr<0>$ be the distribution on $\reals[d]\times\booldomain$ constructed by flipping the labels of $\distr$. Notice that, for any $\subsets\in\hypothesisclass$, we have, by definition \ref{eq:notation-of-conditional-classification-loss-appendix}, that
            \begin{align}
                \err<\distr<0>|\subsets> =& \prob<(\bvar{x},\lvar{y})\sim\distr<0>>{y = \indicator[\bvar{x}\in\subsets]\cond\bvar{x}\in\subsets}\notag\\
                =& \prob<(\bvar{x},\lvar{y})\sim\distr<0>>{y = 1\cond\bvar{x}\in\subsets}\notag\\
                \ceq[(i)]& \prob<(\bvar{x},\lvar{y})\sim\distr>{y = 0\cond\bvar{x}\in\subsets}\notag\\
                =& \prob<(\bvar{x},\lvar{y})\sim\distr>{y = \indicator[\bvar{x}\in\subsets*]\cond\bvar{x}\in\subsets}\notag\\
                =&\err<\distr|\subsets>[\subsets*]\label{eq:transform-conditional-loss-by-flipping-labels}
            \end{align}
            where equation (i) is because $\distr<0>$ has reversed labelling from $\distr$ so that every $y = 1$ in $\distr<0>$ is $y = 0$ in $\distr$, and the last equation is by definition \eqref{eq:notation-of-conditional-classification-loss-appendix}.
            
            Consider now another subset $\subsets '\in\hypothesisclass<\distr<0>>[\lvar{p}<l>,\lvar{p}<u>]$ such that $\subsets ' = \argmin_{\subsets\in\hypothesisclass<\distr<0>>[\lvar{p}<l>,\lvar{p}<u>]}\err<\distr<0>|\subsets>$. Notice that $\subsets*[*]$ is a feasible solution for the conditional classification problem on $\hypothesisclass<\distr>[\lvar{p}<l>,\lvar{p}<u>]$, i.e. $\subsets[*]\in\hypothesisclass<\distr>[\lvar{p}<l>,\lvar{p}<u>]$, because $\prob{\bvar{x}\in\subsets*[*]} = \prob{\bvar{x}\notin\subsets[*]} = 1-p\in[\lvar{p}<l>,\lvar{p}<u>]$ by construction. Observe now that, since $\distr<0>$ only differ from $\distr$ on the labelling, any subset $\subsets\in\hypothesisclass<\distr>[\lvar{p}<l>,\lvar{p}<u>]$ must also be in $\hypothesisclass<\distr<0>>[\lvar{p}<l>,\lvar{p}<u>]$, vice versa. Therefore, we also have $\subsets '\in\hypothesisclass<\distr>[\lvar{p}<l>,\lvar{p}<u>]$ as well as $\subsets*[*]\in\hypothesisclass<\distr<0>>[\lvar{p}<l>,\lvar{p}<u>]$

            Let $\subsets<1>$ be the subset returned by algorithm $\algo<1>[\epsilon, \delta, \lvar{p}<l>, \lvar{p}<u>]$ given sample access to $\distr<0>$. Then, with probability at least $1 - \delta$, there is
            \begin{align}
                \err<\distr>[\subsets*<1>] =& 2\err<\distr|\subsets<1>>[\subsets*<1>]\prob{\bvar{x}\in\subsets<1>} + \prob{y = 1} - \prob{\bvar{x}\in\subsets<1>}\notag\\
                \ceq[(i)]& 2\err<\distr<0>|\subsets<1>>\prob{\bvar{x}\in\subsets<1>} + \prob{y = 1} - \prob{\bvar{x}\in\subsets<1>}\notag\\
                \cleq[(ii)]& 2\sbr{1 + \alpha}\err<\distr<0>|\subsets '>\prob{\bvar{x}\in\subsets<1>} + \prob{y = 1} - \prob{\bvar{x}\in\subsets<1>}\notag\\
                \cleq[(iii)]& 2\sbr{1 + \alpha}\err<\distr<0>|\subsets*[*]>\prob{\bvar{x}\in\subsets<1>} + \prob{y = 1} - \prob{\bvar{x}\in\subsets<1>}\notag\\
                \ceq[(iv)]& 2\sbr{1 + \alpha}\err<\distr|\subsets*[*]>[\subsets[*]]\prob{\bvar{x}\in\subsets<1>} + \prob{y = 1} - \prob{\bvar{x}\in\subsets<1>}\notag\\
                \cleq[(v)]& 2\sbr{1 + \alpha}\err<\distr|\subsets*[*]>[\subsets[*]]\sbr{1 - p + \epsilon} + \prob{y = 1} - (1 - p - \epsilon)\notag\\
                \cleq[(vi)]& 2\sbr{1 + \alpha}\err<\distr|\subsets*[*]>[\subsets[*]]\prob{\bvar{x}\notin\subsets[*]} + \sbr{1 + \alpha}\sbr{\prob{y = 1} - \prob{\bvar{x}\notin\subsets[*]}} + 3\sbr{1 + \alpha}\epsilon\notag\\
                =& \sbr{1 + \alpha}\sbr{\err<\distr>[\subsets[*]] + 3\epsilon}\label{eq:reduction-through-loss-function-in-multiplicative-form-2}
            \end{align}
            where the first equation is derived by lemma \ref{lma:classification-error-decomposition-appendix}, inequality (i) holds through using equation \eqref{eq:transform-conditional-loss-by-flipping-labels} reversely on $\err<\distr|\subsets<1>>[\subsets*<1>]$, inequality (ii) holds due to the error guarantee of algorithm $\algo<1>[\epsilon, \delta, \lvar{p}<l>, \lvar{p}<u>]$, inequality (iii) holds because of the optimality of $\subsets '$ as well as $\subsets*[*]\in\hypothesisclass<\distr<0>>[\lvar{p}<l>,\lvar{p}<u>]$ as we discussed previously, inequality (iv) holds by applying equation \ref{eq:transform-conditional-loss-by-flipping-labels} on $\err<\distr<0>|\subsets*[*]>$, inequality (v) holds since algorithm $\algo<1>[\epsilon, \delta, \lvar{p}<l>, \lvar{p}<u>]$ guarantees $\subsets<1>\in \hypothesisclass<\distr<0>>[\lvar{p}<l>,\lvar{p}<u>]$, which implies $\lvar{p}<l>\leq\prob{\bvar{x}\in \subsets<1>}\leq\lvar{p}<u>$, and, by definition that $1 - p\in[\lvar{p}<l>, \lvar{p}<u>]$, there are $\lvar{p}<l>\geq 1 - p - \epsilon$, $\lvar{p}<u>\leq 1 - p + \epsilon$, inequality (vi) holds because $1 - p = \prob{\bvar{x}\in\subsets*[*]} = \prob{\bvar{x}\notin\subsets[*]}$ by definition, $\err<\distr|\subsets*[*]>[\subsets[*]]=\prob{y = 0\cond\bvar{x}\notin\subsets[*]}\leq 1$, and $\prob{y = 1} - \prob{\bvar{x}\notin\subsets[*]}\geq 0$ by assumption, the last equation is, again, by referring lemma \ref{lma:classification-error-decomposition-appendix}.

            Given inequalities \eqref{eq:reduction-through-loss-function-in-multiplicative-form-1} and \eqref{eq:reduction-through-loss-function-in-multiplicative-form-2}, we can conclude that, when $\prob{\bvar{x}\in \subsets[*]}$ is known, we can always use $\algo<1>$ to find a subset $\subsets$ such that, with probability at least $1 - \delta$, $\err<\distr>[\subsets]\leq \sbr{1 + \alpha}\sbr{\err<\distr>[\subsets[*]] + 3\epsilon}$.

            Then, the construction and analysis of $\algo<2>$ is rather identical to those of $\algo<2>$ in the proof of proposition \ref{prop:reducing-linear-classification-to-conditional-classification-appendix}. We will then refer the proof of proposition \ref{prop:reducing-linear-classification-to-conditional-classification-appendix} to completes the analysis.
        \end{proof}
        
    \section{Gaussian Properties And Concentration Tools}
        In this section, we show some common properties of Gaussian distributions for completeness.
        
        \begin{fact}[Gaussian Tail Bound]\label{fac:gaussian-tail-upper-bound}
            Let $\lvar{z}\sim\gaussian[][0][\sigma^2]$, then $\prob{\bvar{x}\geq t}\leq e^{-t^2/2\sigma^2}$.
        \end{fact}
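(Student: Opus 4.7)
The plan is to prove this by the standard Chernoff/MGF technique: apply Markov's inequality to the exponentiated random variable $e^{\lambda z}$ and then optimize the free parameter $\lambda > 0$. Concretely, for any $\lambda > 0$,
\begin{equation*}
    \Pr[z \geq t] = \Pr[e^{\lambda z} \geq e^{\lambda t}] \leq e^{-\lambda t}\, \mathbb{E}[e^{\lambda z}].
\end{equation*}
The Gaussian MGF is $\mathbb{E}[e^{\lambda z}] = e^{\lambda^2 \sigma^2/2}$, which follows from a direct computation (completing the square in the exponent of the integrand $\frac{1}{\sqrt{2\pi\sigma^2}} e^{\lambda z - z^2/(2\sigma^2)}$ and integrating the resulting shifted Gaussian density).

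Substituting, we obtain the family of bounds $\Pr[z \geq t] \leq \exp(\lambda^2 \sigma^2/2 - \lambda t)$ for every $\lambda > 0$. Minimizing the exponent by taking $\frac{d}{d\lambda}[\lambda^2 \sigma^2 / 2 - \lambda t] = 0$ gives the optimal $\lambda^* = t/\sigma^2$, and plugging this in yields
\begin{equation*}
    \Pr[z \geq t] \leq \exp\!\left( \frac{t^2}{2\sigma^2} - \frac{t^2}{\sigma^2} \right) = \exp\!\left(-\frac{t^2}{2\sigma^2}\right),
\end{equation*}
which is the claim. Note that this argument also requires $t \geq 0$ so that $\lambda^* > 0$; for $t < 0$ the right-hand side exceeds $1$ and the bound is trivial.

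There is essentially no obstacle here; the only technical content is the Gaussian MGF computation, which is a textbook exercise. One minor stylistic choice is whether to present the MGF calculation inline or to invoke it as a standard fact. Since the statement is labeled a \emph{Fact} in the paper and appears only to be used downstream as a convenient tail estimate, a one-line derivation citing the Gaussian MGF and optimizing $\lambda$ should suffice.
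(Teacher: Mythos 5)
Your proof is correct, and it uses the standard Chernoff/MGF approach, which is the canonical way to establish this bound. The paper itself states this as a \emph{Fact} without proof, so there is no paper argument to compare against; your derivation supplies the missing justification cleanly. The Markov step, the MGF computation $\E{e^{\lambda z}} = e^{\lambda^2\sigma^2/2}$, and the optimization at $\lambda^* = t/\sigma^2$ are all right.

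One small error in your closing remark: for $t < 0$ the right-hand side $e^{-t^2/2\sigma^2}$ does \emph{not} exceed $1$; since $t^2 > 0$, it is strictly less than $1$, and in fact the stated inequality genuinely fails for sufficiently negative $t$ (the left side tends to $1$ while the right side tends to $0$). So the bound is not ``trivially true'' for $t < 0$ --- it is simply inapplicable, and the fact should be understood as a one-sided tail bound for $t \geq 0$, which is the only regime in which the paper uses it. This does not affect the correctness of the main argument.
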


        
        \begin{lemma}[Chernoff Bound]\label{lma:chernoff-bound}
            Let $\lvarseq{X}<m>$ be a sequence of $m$ independent Bernoulli trials, each with probability of success $\E{\lvar{X}<i>} = p$, then with $\gamma\in[0,1]$, we have
            \begin{equation*}
                \prob{\abs{\frac{1}{m}\sum_{i=1}^m\lvar{X}<i> - p} > \gamma}\leq 2e^{-2m\gamma^2}.
            \end{equation*}
        \end{lemma}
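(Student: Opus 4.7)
The plan is to establish this standard two-sided Hoeffding bound via the exponential moment method combined with Hoeffding's lemma. First I would reduce to proving the one-sided tail $\Pr[\frac{1}{m}\sum_i X_i - p > \gamma] \leq e^{-2m\gamma^2}$, since the symmetric tail $\Pr[\frac{1}{m}\sum_i X_i - p < -\gamma]$ follows by the identical argument applied to the variables $1 - X_i$ (which are Bernoulli with success probability $1 - p$), and a union bound over the two tails produces the factor of $2$.

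For the one-sided bound, fix any $\lambda > 0$ and apply Markov's inequality to the exponentiated centered sum:
\[
\Pr\!\left[\sum_{i=1}^m (X_i - p) > m\gamma\right] \;\leq\; e^{-\lambda m \gamma}\,\prod_{i=1}^m \mathbb{E}\!\left[e^{\lambda (X_i - p)}\right],
\]
where the product factorization uses the independence of the $X_i$. The central inequality I would invoke is Hoeffding's lemma: for any random variable $Y$ bounded in $[a,b]$ with $\mathbb{E}[Y] = 0$, we have $\mathbb{E}[e^{\lambda Y}] \leq e^{\lambda^2 (b-a)^2 / 8}$. Since each centered Bernoulli $X_i - p$ lies in $[-p, 1-p]$, an interval of length $1$, this gives $\mathbb{E}[e^{\lambda (X_i - p)}] \leq e^{\lambda^2 / 8}$. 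Substituting and optimizing in $\lambda$ by choosing $\lambda = 4\gamma$ yields $\Pr[\sum_i(X_i - p) > m\gamma] \leq e^{-2m\gamma^2}$, which combined with the symmetric tail and the union bound gives the stated estimate.

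The main (and essentially only non-routine) obstacle is proving Hoeffding's lemma itself. The standard argument proceeds through the cumulant generating function $\psi(\lambda) = \log \mathbb{E}[e^{\lambda Y}]$: one checks that $\psi(0) = \psi'(0) = 0$, and bounds $\psi''(\lambda) \leq (b-a)^2/4$ by expressing $\psi''(\lambda)$ as the variance of $Y$ under the exponentially tilted measure $d\mathbb{Q}/d\mathbb{P} \propto e^{\lambda Y}$ and invoking Popoviciu's inequality for the variance of a random variable supported in $[a,b]$. Two integrations then produce $\psi(\lambda) \leq \lambda^2 (b-a)^2 / 8$, which is exactly the MGF bound used above. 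Once Hoeffding's lemma is established, the remainder of the proof reduces to routine algebra and the choice $\lambda = 4\gamma$ mentioned above.
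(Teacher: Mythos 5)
Your proposal is correct. The paper states this lemma without proof, as one of several standard concentration tools collected in an appendix, so there is no in-text argument to compare against. Your derivation is the canonical one: union bound to reduce to a one-sided tail, Markov's inequality applied to the exponentiated centered sum, Hoeffding's lemma to bound each factor by $e^{\lambda^2/8}$ (since each $X_i - p$ lies in an interval of length $1$), and the optimal choice $\lambda = 4\gamma$ yielding the exponent $-4m\gamma^2 + 2m\gamma^2 = -2m\gamma^2$. The sketch of Hoeffding's lemma via the second derivative of the cumulant generating function and the variance bound under the tilted measure is likewise the standard route and is sound. Nothing is missing.
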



        \begin{lemma}[Hoeffding Bound]\label{lma:hoeffding-bound}
            Let $\lvarseq{X}<m>$ be a sequence of $m$ independent random variables such that $\lvar{X}<i>\in[\lvar{a}<i>,\lvar{b}<i>]$ for every $i$. Then, for all $t \geq 0$, we have 
            \begin{equation*}
                \prob{\sum_{i=1}^m(\lvar{X}<i> - \E{\lvar{X}<i>})\geq t}\leq\exp\sbr{-\frac{t^2}{2\sum_{i=1}^m \sbr*{\lvar{b}<i> - \lvar{a}<i>}^2}}.
            \end{equation*}
        \end{lemma}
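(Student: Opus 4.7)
The plan is to prove this classical inequality via the Chernoff/Cramér exponential-moment method, which turns a tail estimate on a sum into a product of moment-generating-function (MGF) estimates on the individual summands. First, for an arbitrary parameter $s > 0$ (to be optimized later), I apply Markov's inequality to the nonnegative random variable $e^{s\sum_i (X_i - \E{X_i})}$, obtaining
\[
\prob{\sum_{i=1}^m (X_i - \E{X_i}) \geq t} \;\leq\; e^{-st}\,\E{e^{s\sum_{i=1}^m (X_i - \E{X_i})}}.
\]
By independence of the $X_i$, the MGF on the right factors as $\prod_{i=1}^m \E{e^{s(X_i - \E{X_i})}}$, so everything reduces to a uniform MGF bound on each centered summand $Y_i := X_i - \E{X_i}$, which is mean-zero and supported on $[a_i - \E{X_i},\,b_i - \E{X_i}]$, an interval of length exactly $b_i - a_i$.

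The technical heart of the argument is Hoeffding's lemma: any mean-zero random variable $Y$ supported on $[\alpha,\beta]$ satisfies $\E{e^{sY}} \leq \exp(s^2(\beta-\alpha)^2/8)$. I would establish it by convexity: write $Y = \tfrac{\beta - Y}{\beta-\alpha}\alpha + \tfrac{Y-\alpha}{\beta-\alpha}\beta$ and apply convexity of $z \mapsto e^{sz}$ to obtain $e^{sY} \leq \tfrac{\beta - Y}{\beta-\alpha}e^{s\alpha} + \tfrac{Y-\alpha}{\beta-\alpha}e^{s\beta}$. Taking expectations and using $\E{Y}=0$ leaves $\E{e^{sY}} \leq e^{L(u)}$, where $u = s(\beta-\alpha)$ and $L(u) = -\gamma u + \log(1 - \gamma + \gamma e^u)$ with $\gamma = -\alpha/(\beta-\alpha) \in [0,1]$. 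A second-order Taylor expansion of $L$ at $0$, using $L(0) = L'(0) = 0$ together with the bound $L''(u) \leq 1/4$ (which follows since $L''(u)$ equals the variance of a Bernoulli random variable with parameter in $[0,1]$, and $p(1-p) \leq 1/4$), yields $L(u) \leq u^2/8$, proving the lemma.

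Combining the MGF factorization with Hoeffding's lemma applied to each $Y_i$ gives
\[
\prob{\sum_{i=1}^m (X_i - \E{X_i}) \geq t} \;\leq\; \exp\!\sbr*{-st + \frac{s^2}{8}\sum_{i=1}^m (b_i - a_i)^2}.
\]
Optimizing the right-hand side over $s > 0$ is a one-line calculus exercise: the minimizer is $s^\star = 4t/\sum_{i=1}^m (b_i - a_i)^2$, and substituting back produces the exponent $-t^2/\bigl(2\sum_{i=1}^m (b_i - a_i)^2\bigr)$ claimed in the statement.

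The only nontrivial step is proving Hoeffding's lemma itself; the remainder is routine bookkeeping (Markov, independence, and a one-variable optimization). Since this is a classical textbook result, there is no real obstacle beyond carefully deriving the $u^2/8$ bound on $L(u)$, and the proof can be written in roughly half a page.
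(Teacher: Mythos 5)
The paper does not prove this lemma at all — it is cited as a classical concentration tool in the appendix without proof — so there is no ``paper's approach'' to compare against. Your proposed proof is the standard Chernoff/Cramér argument combined with Hoeffding's lemma, and all the constituent steps (Markov on $e^{s\sum Y_i}$, factorization by independence, the convexity bound, the Taylor expansion of $L(u)$ with $L''\leq 1/4$, and the optimization over $s$) are correct.

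There is one small arithmetic slip in your final sentence worth flagging. With $V := \sum_{i=1}^m (b_i-a_i)^2$, minimizing $-st + s^2 V/8$ at $s^\star = 4t/V$ gives the value
\[
-\frac{4t^2}{V} + \frac{(4t/V)^2\,V}{8} = -\frac{4t^2}{V} + \frac{2t^2}{V} = -\frac{2t^2}{V},
\]
i.e.\ the classical exponent $-2t^2/V$, not the $-t^2/(2V)$ you wrote. Your claim that substitution ``produces the exponent $-t^2/(2\sum (b_i-a_i)^2)$'' is therefore off by a factor of $4$. This is harmless for the lemma as stated, since the paper's version of the bound (with $-t^2/(2V)$) is strictly weaker than the one you actually derived, so the lemma still follows; but you should either correct the final substitution to $-2t^2/V$ and remark that it implies the stated bound, or explicitly note that the paper's constant is looser than what the standard Hoeffding argument yields.
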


        \begin{lemma}\label{lma:concentration-bound-on-the-projected-gradient}
            Let $\distr$ be a distribution on $\reals[d]\times\booldomain$ with standard normal $\bvar{x}$-marginal, and $\func{g}<\bvar{w}>[\bvar{x}, y] = y\cdot\bvar{x}<\bvar{w}[\bot]>\cdot\indicator[\bvar{x}\in\hypothesis[\bvar{w}]]$. Given a sequence of $m$ examples $\distr* = \lbr*{(\bvar{x}(1),\lvar{y}(1)),\ldots,(\bvar{x}(m),\lvar{y}(m))}$ sampled i.i.d. from $\distr$ and two orthonormal vectors $\bvar{u}, \bvar{w}\in\reals[d]$, then for $t\in(0, 1]$, it holds that
            \begin{equation*}
                \prob{\abs{\innerprod{\E<\distr*>{\func{g}<\bvar{w}>[\bvar{x}, y]} -\E<\distr>{\func{g}<\bvar{w}>[\bvar{x}, y]}}{\bvar{u}}}\geq t}\leq 2e^{-t\sqrt{m}/16}.
            \end{equation*}
            for any $m\geq 3/t$ that also satisfies $m\geq 256t^{-2}\ln^2(2m)$.
        \end{lemma}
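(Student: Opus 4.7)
}
The plan is a standard truncate-then-Hoeffding argument, exploiting the Gaussian tail of the inner product $\innerprod{\bvar{x}}{\bvar{u}}$. First I would observe that because $\bvar{u},\bvar{w}$ are orthonormal we have $\innerprod{\bvar{x}<\bvar{w}[\bot]>}{\bvar{u}} = \innerprod{(\identity - \bvar*{w}[\otimes 2])\bvar{x}}{\bvar{u}} = \innerprod{\bvar{x}}{\bvar{u}}$, so the one-dimensional projection of the gradient reduces to
\[
Z_i := \innerprod{\func{g}<\bvar{w}>[\bvar{x}(i),\lvar{y}(i)]}{\bvar{u}} \;=\; \lvar{y}(i)\cdot\innerprod{\bvar{x}(i)}{\bvar{u}}\cdot\indicator[\bvar{x}(i)\in\hypothesis[\bvar{w}]].
\]
Since $\lvar{y}(i)\in\booldomain$ and $\indicator[\cdot]\in\booldomain$, we get $|Z_i|\leq|\innerprod{\bvar{x}(i)}{\bvar{u}}|$, which is the absolute value of a standard Gaussian (as $\norm{\bvar{u}}_2=1$). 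Thus the $Z_i$ are i.i.d., mean-$\E<\distr>{Z}$, and sub-Gaussian but unbounded.

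Next I would truncate: fix a threshold $R>0$ (to be chosen), let $\bar Z_i := Z_i\cdot\indicator[|Z_i|\leq R]$, and decompose
\[
\tfrac{1}{m}\sum_{i=1}^{m}Z_i - \E<\distr>{Z} \;=\; \Bigl(\tfrac{1}{m}\sum_{i=1}^{m}\bar Z_i - \E<\distr>{\bar Z}\Bigr) + \bigl(\E<\distr>{\bar Z} - \E<\distr>{Z}\bigr),
\]
valid on the event $\Omega_R = \{\max_i|Z_i|\leq R\}$ that all samples are un-truncated. By Fact \ref{fac:gaussian-tail-upper-bound} and a union bound, $\prob{\Omega_R^c}\leq 2m\,e^{-R^2/2}$. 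For the bias term, a direct integration gives $|\E<\distr>{\bar Z}-\E<\distr>{Z}|\leq\E{|\innerprod{\bvar{x}}{\bvar{u}}|\cdot\indicator[|\innerprod{\bvar{x}}{\bvar{u}}|>R]}=\sqrt{2/\pi}\,e^{-R^2/2}$. For the centered sum of the bounded $\bar Z_i\in[-R,R]$, Lemma \ref{lma:hoeffding-bound} yields $\prob{|\tfrac{1}{m}\sum(\bar Z_i-\E\bar Z)|\geq s}\leq 2\exp(-m s^2/(2R^2))$.

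Finally I would set the truncation level $R^2=t\sqrt{m}$ and split the deviation budget as $s=t/2$ for the Hoeffding piece, leaving $t/2$ to absorb the bias. The Hoeffding exponent becomes $mt^2/(8R^2) = t\sqrt{m}/8$, while the tail probability is $2m\exp(-t\sqrt{m}/2)$; using the hypothesis $m\geq 256 t^{-2}\ln^2(2m)$ one checks $\ln(2m)\leq (7/16)\,t\sqrt{m}$, so $2m\exp(-t\sqrt{m}/2)\leq e^{-t\sqrt{m}/16}$. The bias condition $\sqrt{2/\pi}\,e^{-R^2/2}\leq t/2$ reduces to $t\sqrt{m}\geq 2\ln(2\sqrt{2/\pi}/t)$, which follows comfortably from $m\geq 3/t$ together with $m\geq 256 t^{-2}\ln^2(2m)$. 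Combining the two failure probabilities via one more union bound gives the claimed bound $2e^{-t\sqrt{m}/16}$.

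The only mildly delicate step is the parameter tuning in the last paragraph: one must pick $R$ simultaneously small enough that the Hoeffding exponent stays at least $t\sqrt{m}/16$ and large enough that both the Gaussian tail of $\Omega_R^c$ and the truncation bias are controlled, and this is where the two assumptions on $m$ are spent. Everything else is standard sub-Gaussian bookkeeping.
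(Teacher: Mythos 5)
Your plan follows the same route the paper takes: truncate (or condition) the Gaussian projection at a data-dependent radius, apply Hoeffding to the bounded piece, and control both the truncation bias and the probability of escaping the truncation ball using the Gaussian tail of $\innerprod{\bvar{x}}{\bvar{u}}$. The paper conditions on the event $\{\abs{\bvar{x}<1>}\leq B\}$ and bounds the bias via a change-of-measure computation (getting $3e^{-B^2/2}$), whereas you truncate the random variable itself (getting the slightly tighter $\sqrt{2/\pi}\,e^{-R^2/2}$); this is cosmetic. Your parameter bookkeeping — showing $\ln(2m)\leq t\sqrt{m}/16$ and the bias condition $t\sqrt{m}\geq 2\ln(2\sqrt{2/\pi}/t)$ both follow from $m\geq 256\,t^{-2}\ln^2(2m)$ and $m\geq 3/t$ — is also correct.

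The one concrete slip is the Hoeffding constant. You cite Lemma~\ref{lma:hoeffding-bound} and state $\prob{\abs{\tfrac{1}{m}\sum(\bar Z_i-\E\bar Z)}\geq s}\leq 2\exp(-m s^2/(2R^2))$, but Lemma~\ref{lma:hoeffding-bound} as written gives $\exp\!\bigl(-t'^2 / (2\sum(b_i-a_i)^2)\bigr)$, which with $\bar Z_i\in[-R,R]$ (so $b_i-a_i=2R$) and $t'=ms$ yields $2\exp(-ms^2/(8R^2))$ — a factor of $4$ weaker in the exponent. (Your stated form corresponds to the \emph{standard} Hoeffding $\exp(-2t'^2/\sum(b_i-a_i)^2)$, which is true but is not what Lemma~\ref{lma:hoeffding-bound} says.) With the paper's constant, your choice $R^2=t\sqrt{m}$ gives a Hoeffding piece of $2e^{-t\sqrt{m}/32}$, which already exceeds the target $e^{-t\sqrt{m}/16}$ and cannot be combined with the tail term to produce the claimed $2e^{-t\sqrt{m}/16}$. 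To repair while staying within the paper's stated lemma, shrink the truncation radius, e.g.\ take $R^2=t\sqrt{m}/4$ (the paper takes $B^2=t\sqrt{m}/8+2\ln 2m$), and re-verify the tail and bias conditions; they still hold under the hypotheses on $m$. Alternatively, note explicitly that you are invoking the sharp form of Hoeffding rather than Lemma~\ref{lma:hoeffding-bound}.
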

        \begin{proof}
            We prove the concentration bound by showing that $\func{g}<\bvar{w}>[\bvar{x}, y]$ is bound almost surely, then apply lemma \ref{lma:hoeffding-bound} to get the desired result.

            Define $\bvar{x}<1> = \innerprod{\bvar{x}<\bvar{w}[\bot]>}{\bvar{u}}$ as well as $\bvar{x}<2> = \bvar{w}$ so that $\bvar{x}<1>,\bvar{x}<2>$ are independent standard normal random variables. Let $\distr<\abs{\bvar{x}<1>}\geq B>$ (resp. $\distr<\abs{\bvar{x}<1>}\leq B>$) denote the conditional distribution of $\distr<z>$ conditioned on $\abs{\bvar{x}<1>}\geq B$ (resp. $\abs{\bvar{x}<1>}\leq B$) for $B \geq 0$. 
            
            We first bound the difference between $\innerprod{\E<\distr>{\func{g}<\bvar{w}>[\bvar{x}, y]}}{\bvar{u}}$ and $\innerprod*{\E<\distr<\abs{\bvar{x}<1>}\leq B>>{\func{g}<\bvar{w}>[\bvar{x}, y]}}{\bvar{u}}$ using conditional probability tools and Gaussian tail bound as follow: 
            \begin{align}
                &\abs{\E<\distr>{y\cdot\bvar{x}<1>\cdot\indicator[\bvar{x}<2>\geq 0]} - \E<\distr<\abs{\bvar{x}<1>}\leq B>>{y\cdot\bvar{x}<1>\cdot\indicator[\bvar{x}<2>\geq 0]}}\notag\\
                =&\abs{-\E<\distr<\abs{\bvar{x}<1>}\leq B>>{y\cdot\bvar{x}<1>\cdot\indicator[\bvar{x}<2>\geq 0]}\prob<\distr>{\abs{\bvar{x}<1>}> B} + \E<\distr>{y\cdot\bvar{x}<1>\cdot\indicator[\bvar{x}<2>\geq 0, \abs{\bvar{x}<1>} > B]}}\notag\\
                \leq&\E<\distr<\abs{\bvar{x}<1>}\leq B>>{\abs{\bvar{x}<1>}\cdot\indicator[\bvar{x}<2>\geq 0]}\prob<\distr>{\abs{\bvar{x}<1>}> B} + \E<\distr>{\abs{\bvar{x}<1>}\cdot\indicator[\bvar{x}<2>\geq 0, \abs{\bvar{x}<1>} > B]}\notag\\
                =&\E<\distr>{\abs{\bvar{x}<1>}\cdot\indicator[\bvar{x}<2>\geq 0, \abs{\bvar{x}<1>}\leq B]}\cdot \frac{\prob<\distr>{\abs{\bvar{x}<1>}> B}}{\prob<\distr>{\abs{\bvar{x}<1>}\leq B}} + \E<\distr>{\abs{\bvar{x}<1>}\cdot\indicator[\bvar{x}<2>\geq 0, \abs{\bvar{x}<1>} > B]}\notag\\
                \ceq[(i)]& \frac{\prob<\distr>{\abs{\bvar{x}<1>}> B}}{2\sqrt{2\pi}\prob<\distr>{\abs{\bvar{x}<1>}\leq B}}\int_{-B}^B\abs{\bvar{x}<1>}e^{-\bvar{x}<1>[2]/2}d\bvar{x}<1> + \frac{1}{2\sqrt{2\pi}}\sbr{\int_{B}^{+\infty}\bvar{x}<1>e^{-\bvar{x}<1>[2]/2}d\bvar{x}<1> - \int_{-\infty}^{-B}\bvar{x}<1>e^{-\bvar{x}<1>[2]/2}d\bvar{x}<1>}\notag\\
                \ceq[(ii)]& \frac{\prob<\distr>{\abs{\bvar{x}<1>}> B}}{\sqrt{2\pi}\sbr{1 - \prob<\distr>{\abs{\bvar{x}<1>}> B}}}\sbr{1 - e^{-B^2/2}} + \frac{1}{\sqrt{2\pi}}e^{-B^2/2}\notag\\
                \cleq[(iii)]&\frac{2e^{-B^2/2}}{\sqrt{2\pi}\sbr{1 - 2e^{-B^2/2}}}\sbr{1 - e^{-B^2/2}} + \frac{1}{\sqrt{2\pi}}e^{-B^2/2}\notag\\
                \leq& 3e^{-B^2/2}\label{eq:upper-bound-on-expectation-difference-between-conditional-distr}
            \end{align}
            where inequality (i) holds due to the independence between $\bvar{x}<1>,\bvar{x}<2>$ as well as that $\prob<z\sim\gaussian>{z\geq 0} = 1/2$, inequality (ii) holds because of the symmetric property of standard normal distribution, (iii) is obtained by apply lemma \ref{lma:gaussian-tail-lower-bound} to the first term, and the last inequality holds via using the fact that $\frac{a}{b}\leq \frac{a+c}{b+c}$ for $0<a\leq b, 0\leq c$ to $2e^{-B^2/2}/\sbr*{1 - 2e^{-B^2/2}}$ as well as $4/\sqrt{2\pi} < 3$. Then, with a similar analysis, we have, for any $t\in(0, 1]$, that
            \begin{align*}
                &\prob<\distr>{\abs{\innerprod{\E<\distr*>{\func{g}<\bvar{w}>[\bvar{x}, y]} -\E<\distr>{\func{g}<\bvar{w}>[\bvar{x}, y]}}{\bvar{u}}}\geq t}
                \\
                =&\prob<\distr>{\abs{\frac{1}{m}\sum_{i=1}^m\lvar{y}(i)\bvar{x}<1>(i)\indicator[\bvar{x}<2>(i)\geq 0] - \E<\distr>{y\cdot\bvar{x}<1>\cdot\indicator[\bvar{x}<2>\geq 0]}}\geq t}
                \\
                =&\prob<(\bvar{x}(i),\lvar{y}(i))\sample\distr<\abs{\bvar{x}<1>}\leq B>>{\abs{\frac{1}{m}\sum_{i=1}^m\lvar{y}(i)\bvar{x}<1>(i)\indicator[\bvar{x}<2>(i)\geq 0] - \E<\distr>{y\cdot\bvar{x}<1>\cdot\indicator[\bvar{x}<2>\geq 0]}}\geq t}\sbr{1-\prob<(\bvar{x}(i),\lvar{y}(i))\sample\distr>{\bunion_{i=1}^m\abs{\bvar{x}<1>(i)} >B}}
                \\
                &+ \prob<(\bvar{x}(i),\lvar{y}(i))\sample\distr>{\abs{\frac{1}{m}\sum_{i=1}^m\lvar{y}(i)\bvar{x}<1>(i)\indicator[\bvar{x}<2>(i)\geq 0] - \E<\distr>{y\cdot\bvar{x}<1>\cdot\indicator[\bvar{x}<2>\geq 0]}}\geq t\bigisect\sbr{\bunion_{i=1}^m\abs{\bvar{x}<1>(i)} >B}}
                \\
                \leq& \prob<(\bvar{x}(i),\lvar{y}(i))\sample\distr<\abs{\bvar{x}<1>}\leq B>>{\abs{\frac{1}{m}\sum_{i=1}^m\lvar{y}(i)\bvar{x}<1>(i)\indicator[\bvar{x}<2>(i)\geq 0] - \E<\distr>{y\cdot\bvar{x}<1>\cdot\indicator[\bvar{x}<2>\geq 0]}}\geq t}+ \prob<(\bvar{x}(i),\lvar{y}(i))\sample\distr>{\bunion_{i=1}^m\abs{\bvar{x}<1>(i)} >B}
                \\
                \cleq[(i)]& \prob<(\bvar{x}(i),\lvar{y}(i))\sample\distr<\abs{\bvar{x}<1>}\leq B>>{\abs{\frac{1}{m}\sum_{i=1}^m\lvar{y}(i)\bvar{x}<1>(i)\indicator[\bvar{x}<2>(i)\geq 0] - \E<\distr<\abs{\bvar{x}<1>}\leq B>>{y\cdot\bvar{x}<1>\cdot\indicator[\bvar{x}<2>\geq 0]}}\geq t - 3e^{-B^2/2}}+ 2me^{-B^2/2}
                \\
                \cleq[(ii)]& \prob<(\bvar{x}(i),\lvar{y}(i))\sample\distr<\abs{\bvar{x}<1>}\leq B>>{\abs{\frac{1}{m}\sum_{i=1}^m\lvar{y}(i)\bvar{x}<1>(i)\indicator[\bvar{x}<2>(i)\geq 0] - \E<\distr<\abs{\bvar{x}<1>}\leq B>>{y\cdot\bvar{x}<1>\cdot\indicator[\bvar{x}<2>\geq 0]}}\geq\frac{1}{2}t}+ e^{-t\sqrt{m}/16}
            \end{align*}
            \begin{flalign*}
                \cleq[(iii)]& 2\exp\sbr{-\frac{mt^2}{32\sbr{t\sqrt{m}/8 + 2\ln 2m}}} + e^{-t\sqrt{m}/16}&&
                \\
                \cleq& 2e^{-t\sqrt{m}/8} + e^{-t\sqrt{m}/16}&&
                \\
                \leq&2e^{-t\sqrt{m}/16}&&
            \end{flalign*}
            where inequality (i) holds due to inequality \eqref{eq:upper-bound-on-expectation-difference-between-conditional-distr}, inequality (ii) holds because taking $B^2 = t\sqrt{m}/8 + 2\ln 2m$ for any $m \geq 1$ that satisfies $m\geq 256t^{-2}\ln^2(6/t)$ gives $3e^{-B^2/2}\leq t/2$ as well as $2me^{-B^2/2} = e^{-t\sqrt{m}/16}$, inequality (iii) results from applying lemma \ref{lma:hoeffding-bound} to the first term with $\bvar{x}<1>\in[-B,B]$, and the last two inequalities can be obtained by noticing that $t\sqrt{m}/8\geq 2\ln 2m$ as well as $2e^{-t\sqrt{m}/8}\leq e^{-t\sqrt{m}/16}$ whenever $m\geq 256t^{-2}\ln^2(2m)$. At last, taking $m$ large enough gives the claimed result.
        \end{proof}
        
        \begin{lemma}\label{lma:gaussian-tail-lower-bound}
            Let $\lvar{x}\sim\gaussian(0,1)$, then $\prob{\lvar{x} \geq t}\geq\frac{1}{\sqrt{2\pi}(t + 1)}e^{-t^2/2}$ for every $t\geq0$.
        \end{lemma}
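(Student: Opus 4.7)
The plan is to prove the equivalent bound $\int_t^\infty e^{-x^2/2}\,dx \geq \frac{1}{t+1}e^{-t^2/2}$ for all $t \geq 0$, from which dividing by $\sqrt{2\pi}$ gives the claim. I would do this by the standard ``monotone auxiliary function'' trick: set
\[
h(t) = \int_t^\infty e^{-x^2/2}\,dx - \frac{1}{t+1}e^{-t^2/2}
\]
and show that $h(t) \geq 0$ on $[0,\infty)$ by establishing that $h$ is monotone decreasing with limit $0$ at infinity.

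First I would differentiate, using the fundamental theorem of calculus on the integral term and the product rule on the second term. A direct computation gives
\[
h'(t) = e^{-t^2/2}\left(-1 + \frac{1}{(t+1)^2} + \frac{t}{t+1}\right) = e^{-t^2/2}\cdot\frac{-(t+1)^2 + 1 + t(t+1)}{(t+1)^2} = -\frac{t\,e^{-t^2/2}}{(t+1)^2},
\]
which is nonpositive for all $t \geq 0$. So $h$ is nonincreasing on $[0,\infty)$.

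Next I would verify $\lim_{t\to\infty} h(t) = 0$. Both terms individually tend to $0$: the integral because the Gaussian is integrable, and the second because $\frac{1}{t+1}e^{-t^2/2}$ clearly vanishes. Combining with the monotonicity, $h(t) \geq \lim_{s\to\infty} h(s) = 0$ for every $t \geq 0$, which is exactly the desired inequality. The argument is essentially a one-line calculus check, so I do not expect any real obstacle; the only care needed is to ensure the derivative computation is clean and that the numerator $-(t+1)^2 + 1 + t(t+1) = -t$ simplifies as claimed so the sign of $h'$ is unambiguous on $[0,\infty)$.
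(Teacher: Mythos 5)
Your proposal is essentially identical to the paper's proof: both define the same auxiliary function, compute the same derivative $-\frac{t\,e^{-t^2/2}}{(t+1)^2} \leq 0$, and conclude nonnegativity from monotonicity together with the limit $0$ at infinity. The only cosmetic difference is that the paper additionally notes $f(0) = \sqrt{\pi/2}-1 > 0$, which is redundant once monotonicity and the vanishing limit are established, so your streamlined version is fine.
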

        \begin{proof}
            Define $f:\reals\rightarrow\reals$ as
            \begin{align*}
                f(t) =& \sqrt{2\pi}\prob{\lvar{x}\geq t} - \frac{1}{t + 1}e^{-t^2/2}\\
                =& \int_t^{+\infty} e^{-\lvar{x}[2]/2}d\lvar{x} - \frac{1}{t + 1}e^{-t^2/2}.
            \end{align*}
            Observe that $f(0) = \sqrt{\pi/2} - 1 > 0$ and
            \begin{align}
                \nabla_t f(t) =& -e^{-t^2/2}-\sbr{-\frac{1}{(t+1)^2}e^{-t^2/2} -\frac{t}{t+1}e^{-t^2/2}}\\
                =& -\frac{t}{(t+1)^2}e^{-t^2/2}\\
                \leq& 0
            \end{align}
            for $t\geq 0$. Furthermore, we have $\lim_{t\rightarrow +\infty} f(t) = 0$, which implies $f(t)$ is always positive on $t\in[0, +\infty)$ and, hence, the claimed result.
        \end{proof}
\end{document}